\documentclass[lettersize,journal]{IEEEtran}

\usepackage{amsmath}
\usepackage{color}
\usepackage{bm}
\usepackage{amsfonts}
\usepackage{amsthm}
\usepackage{algorithm}
\usepackage{algorithmic}
\usepackage{amssymb}
\usepackage{tikz}
\usepackage{bbm}
\usepackage[
  locale = DE 
]{siunitx}
\usepackage[version = 3]{mhchem} 
\usepackage{lipsum}
\usepackage{booktabs}
\usepackage{array}
\usepackage{textcomp}
\usepackage{multirow} 
\usepackage{multicol}

\usepackage{subcaption}
\usepackage[normalem]{ulem}

\usepackage{url}
\usepackage{graphicx}
\usepackage{microtype}

\newtheorem{lemma}{Lemma}
\newtheorem{theorem}{Theorem}
\newtheorem{example}{Example}

\newtheorem{proposition}{Proposition}

\newtheorem{remark}{Remark}

\DeclareMathOperator*{\tr}{tr}

\title{Optimized Weight Initialization on the Stiefel Manifold\\ for Deep ReLU Neural Networks}
\author{Hyungu Lee, Taehyeong Kim, Hayoung Choi
\thanks{All authors contribute equally to this paper. This work of H. Lee, T. Kim, and H. Choi was supported by the National Research Foundation of Korea(NRF) grant funded by the Korea government(MSIT) (No. 2022R1A5A1033624 and RS-2024-00342939). (Corresponding author: Hayoung Choi.)
Hyungu Lee is with the Department of Mathematics \& Nonlinear Dynamics and Mathematical Application Center, Kyungpook National University, Daegu 41566, Republic of Korea (lewis9910@knu.ac.kr).
Taehyeong Kim is with the Nonlinear Dynamics and Mathematical Application Center, Kyungpook National University, Daegu 41566, Republic of Korea (e-mail: thkim0519@knu.ac.kr).
Hayoung Choi is with the Department of Mathematics \& Nonlinear Dynamics and Mathematical Application Center, Kyungpook National University, Daegu 41566, Republic of Korea (e-mail: hayoung.choi@knu.ac.kr).
}}
\date{August 2025}

\begin{document}

\maketitle

\begin{abstract}
Stable and efficient training of ReLU networks with large depth is highly sensitive to weight initialization. Improper initialization can cause permanent neuron inactivation (``dying ReLU'') and exacerbate gradient instability as network depth increases. Methods such as He, Xavier, and orthogonal initialization preserve variance or promote approximate isometry. However, they do not necessarily regulate the pre-activation mean or control activation sparsity, and their effectiveness often diminishes in very deep architectures. 
This work introduces an orthogonal initialization specifically optimized for ReLU by solving an optimization problem on the Stiefel manifold, thereby preserving scale and calibrating the pre-activation statistics from the outset. A family of closed-form solutions and an efficient sampling scheme are derived. 
Theoretical analysis at initialization shows that prevention of the dying ReLU problem, slower decay of activation variance, and mitigation of gradient vanishing, which together stabilize signal and gradient flow in deep architectures. 
Empirically, across MNIST, Fashion-MNIST, multiple tabular datasets, few-shot settings, and ReLU-family activations, our method outperforms previous initializations and enables stable training in deep networks.
\end{abstract}

\begin{IEEEkeywords}
Weight initialization, semi-orthogonal matrix, deep learning, FFNN, ReLU activation function
\end{IEEEkeywords}

\section{Introduction}

\IEEEPARstart{D}{eep} learning has achieved remarkable success across various domains, driven by its ability to learn effective representations from data through deep neural networks~\cite{hanin2021principles}.
These architectures come in various forms, each designed to address specific data processing challenges. 
A representative example is the feed-forward neural network (FFNN), which serves as a fundamental building block in deep learning, where information propagates unidirectionally from the input layer, through multiple hidden layers, to the output layer~\cite{goodfellow2016deep}.
The depth of the network determines representational capacity, since additional layers enable the progressive extraction of more abstract and complex features~\cite{bengio2009learning,lecun2015deep}. On the other hand, increasing depth may lead to the vanishing and exploding gradient phenomenon that hinders practical training and delays convergence~\cite{glorot2010understanding}.

The Rectified Linear Unit (ReLU) activation function is widely adopted in modern deep learning models due to its simple form, computational efficiency, and sparsity-inducing properties in neuron activity~\cite{goodfellow2016deep,glorot2010understanding,nair2010rectified}.
However, the ReLU activation suffers from the dying ReLU problem, 
when a pre-activation remains negative, its gradient vanishes and the corresponding parameters stop updating~\cite{maas2013rectifier,he2015delving}. 
This problem is one of the vanishing gradient~\cite{glorot2010understanding}, and it becomes more pronounced in deeper networks.

To address the dying ReLU problem, various studies have been proposed.
The main approaches can be broadly classified into three categories.
First, some studies employ activation functions that permit small gradients for negative inputs, such as Leaky ReLU~\cite{maas2013rectifier} or the exponential linear unit (ELU)~\cite{clevert2015fast}, or modify the network structure itself by introducing residual connections~\cite{he2016deep}.
Second, there are widely used methods that apply normalization techniques, such as batch normalization~\cite{ioffe2015batch}, to each layer to stabilize the distribution of activation values.
Lastly, there are studies on initialization techniques that carefully set weights and biases to prevent neuron deactivation early in training~\cite{glorot2010understanding,he2015delving}.
Among these three approaches, this paper focuses on weight initialization, which is involved in the most fundamental stage of training. It proposes a new method to enhance the stability and performance of ReLU networks.

Early methods, such as Xavier initialization~\cite{glorot2010understanding}, were designed under the assumption of symmetric, zero-centered activations (e.g., tanh, sigmoid), and often underperform with ReLU-based models due to the asymmetry of the function.  
In response, He initialization~\cite{he2015delving} scales weight variances by $2/m$, where $m$ is the number of input units (fan-in) of the layer to compensate for ReLU’s half-activation. 
More recent approaches exploit geometric constraints and dynamical-isometry conditions to better tailor initialization to ReLU’s properties~\cite{pennington2017resurrecting, saxe2013exact}. 
In a recent study, Lee et al.~\cite{lee2024improved} proposed a specific constructive method for orthogonal weight matrices by applying QR decomposition to an all-ones matrix perturbed by a small term $\epsilon>0$. 
While it is effective, this constructive method treats a key property—approximate angle preservation with the all-ones vector—as a byproduct of its procedure rather than a direct optimization objective. Consequently, these desirable characteristics are only approximately guaranteed.

To address these limitations, this paper introduces a novel weight initialization method derived from a principled optimization framework. 
We formulate the design of the weight matrix as an optimization problem on the Stiefel manifold, which seeks a semi-orthogonal matrix that is maximally aligned with the all-ones vector. 
This approach is motivated by our empirical and theoretical findings that such alignment is critical for preventing neuron inactivation in ReLU networks. 
The resulting initialization scheme, obtained as the exact solution to this optimization problem, is designed to inherently preserve signal propagation and stabilize training dynamics from the very first epoch, thereby mitigating the dying ReLU and vanishing gradient problems in deep and narrow architectures.

To validate the practical benefits of our approach, the proposed initialization is evaluated on several benchmark tasks. 
Deep ReLU neural networks trained on MNIST and Fashion-MNIST exhibit faster convergence and improved generalization compared to standard initializations~\cite{lecun1998mnist}. 
Robustness to activation choice is assessed through experiments with various ReLU variants, confirming consistent performance gains. 
In few-shot learning scenarios—where only a handful of labeled examples are available—the initialization maintains stability and accuracy. 
Finally, the application to diverse tabular datasets demonstrates its versatility across different data modalities. 
Across these diverse scenarios, our proposed scheme consistently demonstrates superior performance compared to existing initialization methods, underscoring its effectiveness in addressing the specific challenges of ReLU activation.

The main contributions of this paper are summarized as follows.

\begin{enumerate}
    \item We solve an optimization problem on the Stiefel manifold to derive novel weight initialization and analyze its key mathematical properties.
    \item We design an efficient algorithm that enables practical weight initialization.
    \item We theoretically and empirically demonstrate that the proposed initialization alleviates early-stage issues in ReLU networks such as dying neurons and unstable gradients.
    \item We show superior performance over previous initializations and stability of training in deep ReLU networks across MNIST, Fashion-MNIST, and various tabular benchmarks, including few-shot regimes and variants of ReLU activations.
\end{enumerate}

\subsection*{Notations}

Throughout this paper, we adopt the following notation conventions. 
Let \(\mathbb{R}\) (resp. \(\mathbb{R}_{+}\)) be the set of real numbers (resp. nonnegative real numbers).
The standard inner product of two vectors \(\bm{u}\) and \(\bm{v}\) is denoted by \(\langle \bm{u}, \bm{v} \rangle\), and \(\|\bm{v}\|_2\) denotes the Euclidean norm. The vector max norm is denoted by \(\|\bm{x}\|_\infty\).
The superscript $T$ denotes the transpose operator.
Denote the all-ones vector of size \(m\) by \(\bm{1}_m = [1,1,\dots,1]^T\in\mathbb{R}^m\) and its normalized version by \(\bm{\xi}_m = \tfrac{1}{\sqrt m}\bm{1}_m\). The all-ones matrix of size \(m\times n\) is denoted by \(\bm{J}_{m\times n}\) where the subscript is shortened to \(\bm{J}_m\) for a square matrix of size \(m\). And the \(m\times m\) identity matrix is denoted by \(\bm{I}_m\). 
Denote the zero matrix of size \(m\times n\) by \(\bm{0}_{m \times n}\). When the dimension is clear from context, we omit subscripts on \(\bm{1}\), \(\bm{\xi}\), \(\bm{J}\), \(\bm{I}\), and \(\bm{0}\). 

We use the notation \(x \sim \mathcal{N}(\mu, \sigma^2)\) to indicate that the scalar random variable \( x \) follows a univariate normal distribution with mean \( \mu \in \mathbb{R} \) and variance \( \sigma^2 > 0 \).
For the multivariate case, $\bm{x} \sim \mathcal{N}(\boldsymbol{\mu}, \Sigma)$
denotes that the random vector \( \bm{x} \in \mathbb{R}^d \) follows a multivariate normal distribution with mean vector \( \bm{\mu} \in \mathbb{R}^d \) and covariance matrix \( \bm{\Sigma} \in \mathbb{R}^{d \times d} \), which is symmetric and positive semidefinite.
We write \(x \sim \mathcal{U}(a, b)\) to denote that \( x \) is distributed uniformly over the interval \( [a, b] \subset \mathbb{R} \), where \( a < b \).

\section{Preliminaries}\label{sec:prior_work}

Before introducing our proposed weight initialization method, we provide a brief overview of the basic concepts and review prior works.

\subsection{Basic Concepts}
FFNN is a composition of affine maps and pointwise nonlinearities. 
Given an input \(\bm{x}^0\in\mathbb{R}^{N_0}\), the forward recursion for layers \(\ell=1,\ldots,L\) is given by
\begin{equation}\label{eq:ffnn}
    \bm{y}^\ell = \bm{W}^\ell \bm{x}^{\ell-1} + \bm{b}^\ell, \qquad
    \bm{x}^\ell = \phi(\bm{y}^\ell),
\end{equation}
where \(N_{\ell}\) denotes the number of units in the $\ell$-th layer. Here, \(\bm{W}^\ell \in \mathbb{R}^{N_\ell \times N_{\ell-1}}\) is the weight matrix, \(\bm{b}^\ell \in \mathbb{R}^{N_\ell}\) is its bias vector, \(\phi: \mathbb{R} \to \mathbb{R}\) is an activation function.

For FFNN architectures based on fully connected layers, non-expanding configurations are widely employed, where the number of neurons either remains constant or decreases across layers. Such architectures, commonly found in standard designs~\cite{goodfellow2016deep,bengio2006greedy}, support hierarchical compression of features and often lead to improved generalization and interpretability. Accordingly, in this work, a non-expanding architecture is considered, specified by the condition \(N_0 \geq N_1 \geq \cdots \geq N_L\).

An activation function $\phi$ is a key component in neural networks, introducing non-linearity that allows the model to learn complex, non-linear mappings. Common choices include sigmoid~\cite{rumelhart1986learning}, hyperbolic tangent (tanh)~\cite{lecun1989backpropagation}, and rectified linear unit (ReLU) functions~\cite{nair2010rectified}, each with distinct characteristics in terms of saturation, gradient flow, and computational cost. 
In this paper, the activation function $\phi$ is chosen to be the ReLU, defined element-wise as
\[
\operatorname{ReLU}(\bm{z}) = \max(0, \bm{z}).
\]
The ReLU activation is widely used due to its computational simplicity and beneficial properties for optimization. It introduces non-linearity while preserving gradient flow for positive inputs, thereby mitigating the vanishing gradient problem. Furthermore, ReLU induces sparsity in the activations, as it outputs zero for all non-positive values.
While ReLU activations offer several optimization advantages, their effectiveness is closely tied to the choice of weight initialization~\cite{lu2019dying, woorobust}. Inappropriate initialization may lead to unbalanced activation statistics across layers, undermining training dynamics~\cite{skorski2021revisiting}. Consequently, considerable research has been devoted to developing initialization schemes, as reviewed below.

\subsection{Prior Works}
Before the widespread adoption of ReLU, activation functions such as sigmoid and tanh were more commonly used. One of the most influential methods in this category is \emph{Xavier initialization}~\cite{glorot2010understanding}, which draws a weight matrix $\bm{W}=[W_{ij}] \in \mathbb{R}^{m \times n}$ from a uniform distribution, i.e., for all $i,j$
\[
W_{ij} \sim \mathcal{U}\Bigl(-\sqrt{\tfrac{6}{m+n}}, \,\sqrt{\tfrac{6}{m+n}}\Bigr).
\]
Xavier initialization works well for activation functions with symmetry around zero, effectively preserving variance in shallow networks. However, because ReLU activations are asymmetric, this method often struggles to avert the vanishing gradient problem in deeper ReLU-based networks.

To address the reduction in activation caused by ReLU zeroing out approximately half of its inputs, He et al.~\cite{he2015delving} proposed to increase the variance more aggressively:
\[
W_{ij} \sim \mathcal{N}\Bigl(0, \tfrac{2}{m}\Bigr),
\]
where weights are sampled from a zero-mean normal distribution with variance \( \frac{2}{m} \). This strategy compensates for the sparsity introduced by ReLU, helping to preserve the variance of activations in deeper networks. It has become standard in many ReLU-based architectures, including various residual network designs, although it does not entirely prevent gradient degradation in extremely deep architectures.

Other researchers investigated matrix structures and their impact on signal propagation. For instance, Saxe et al.~\cite{saxe2013exact} advocated \emph{orthogonal initialization}, leveraging the property $\bm{W} \bm{W}^{T} = \bm{I}$ to preserve signal norms through forward and backward passes in linear regimes. Orthogonal initialization has been shown to maintain dynamical isometry, thereby stabilizing gradient flow in deep networks. Building upon this intuition, Hu et al.~\cite{hu2020provable} provided a theoretical justification for orthogonal initialization in deep linear networks. Their analysis demonstrates that orthogonal weights enable depth-independent convergence, in contrast to Gaussian initialization, which requires network width to increase linearly with depth for efficient training. This result offers a provable advantage for orthogonal initialization in specific settings.

Despite these theoretical benefits, orthogonal initialization remains challenging to apply in practice. Generating exact orthogonal matrices is computationally expensive, particularly for high-dimensional or convolutional layers. Moreover, its effectiveness diminishes in highly nonlinear regimes, where the orthogonality of weight matrices does not guarantee stable signal propagation due to the nonlinear distortions introduced by activation functions.

While these methods have advanced the field, \emph{deep and narrow} ReLU networks pose additional challenges.
In such architecture, the vanishing gradient and the dying ReLU become especially pronounced. Recently, Lee et al.~\cite{lee2024improved} addressed these difficulties by proposing a deterministic initialization method that constructs an orthogonal weight matrix via a perturbed all-one matrix $\bm{1}\bm{1}^T + \epsilon \bm{I}$ for sufficiently small $\epsilon > 0$. This method preserves orthogonality in a manner particularly advantageous for narrow networks. Empirically, this weight initialization maintains more positive entries than either He initialization or standard orthogonal matrices, thereby reducing the risk of the dying ReLU. Furthermore, Lee et al. establish that their construction approximately preserves the angle between any input vector $\bm{x}$ and the all-ones vector $\bm{1}$. Specifically, the proposed initialization $\bm{W}$ satisfies
\begin{equation}\label{eq:prior}
\frac{\langle \bm{x},\,\bm{1}\rangle}{\|\bm{x}\|\|\bm{1}\|}
\approx
\frac{\langle \bm{W} \bm{x},\,\bm{1}\rangle}{\|\bm{W} \bm{x}\|\|\bm{1}\|},
\end{equation}
so the angle between any positive input $\bm{x}$ and the all-ones vector $\bm{1}$ is preserved by $\bm{W}$.  Since
\begin{equation}
\langle \bm{W} \bm{x},\,\bm{1}\rangle
= \langle \bm{x},\,\bm{W}^T\bm{1}\rangle,
\end{equation}
a strong positive alignment of $\bm{W}^T\bm{1}$ with $\bm{x}$ immediately implies \(\langle \bm{W} \bm{x},\bm{1}\rangle>0\).  In other words, all pre-activations remain strictly positive, thus preventing the dying ReLU phenomenon in deep ReLU networks.

\section{Methodology}\label{sec:methodology}
Motivated by the angle-preserving behavior observed in \eqref{eq:prior}, we hypothesize that the alignment between the weight matrix and the all-ones vector $\bm{1}$ serves as a key factor in determining ReLU activation. 
Denote the set of all semi-orthogonal matrices as
\begin{equation}\label{eq:Omn}
\mathcal{O}_{m,n} := \left\{ \bm{W} \in \mathbb{R}^{m \times n} : \bm{W} \bm{W}^{T} = \bm{I}_m \text{ or } \bm{W}^{T} \bm{W} = \bm{I}_n\right\}.
\end{equation}
If \(m \le n\), the rows of \(\bm{W}\) are orthonormal, so that
\[
\bm{W}\bm{W}^{T} = \bm{I}_{m},
\]
whereas if \(m \ge n\), the columns of \(\bm{W}\) are orthonormal, so that
\[
\bm{W}^{T}\bm{W} = \bm{I}_{n}.
\]

\subsection{Empirical Motivation via Alignment Optimization}

\begin{figure*}[!ht]
  \centering
  \begin{subfigure}[t]{0.48\textwidth}
    \centering
    \includegraphics[width=\linewidth]{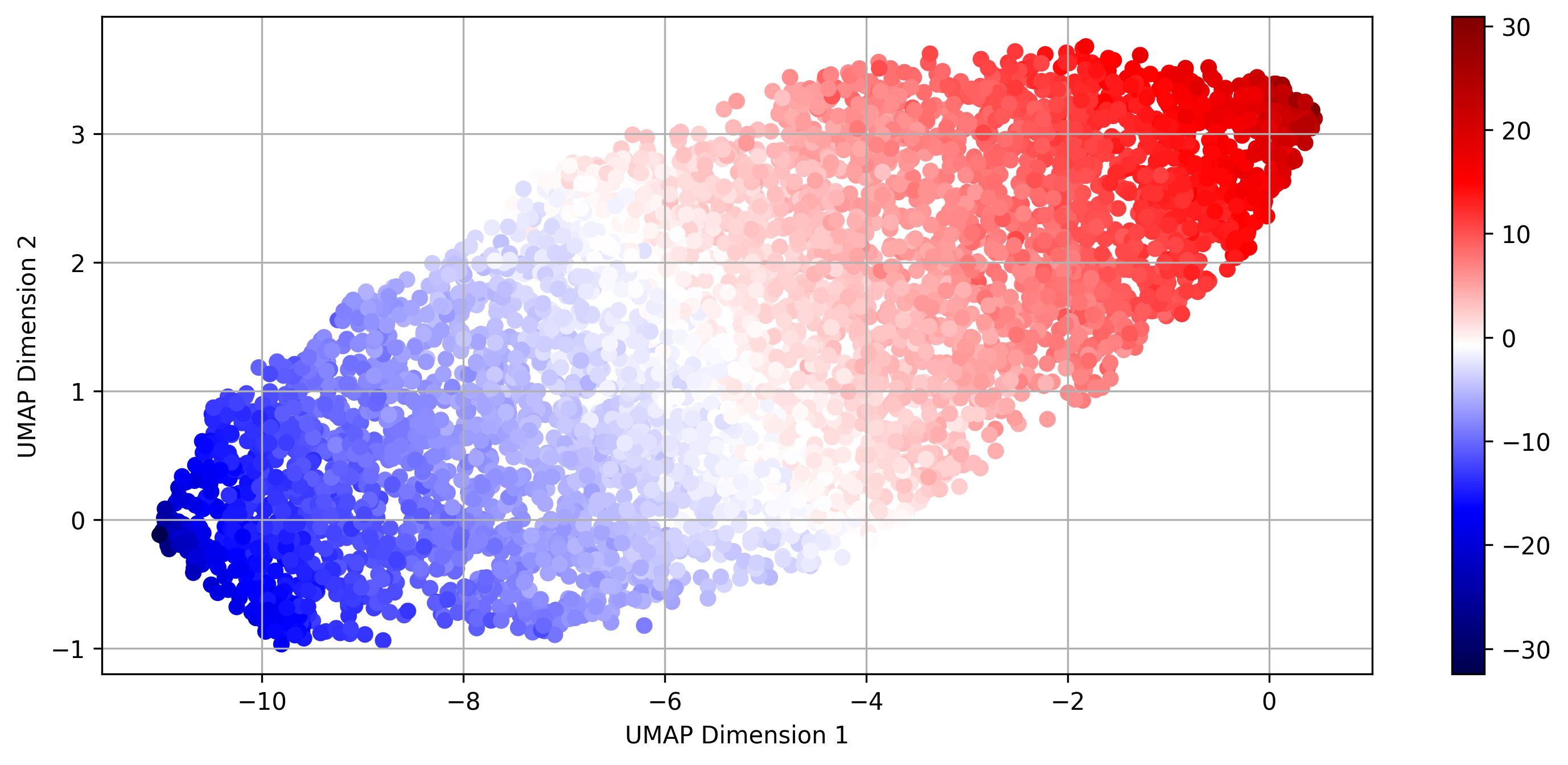}
    \caption{UMAP projection colored by mean inner product value $s_k$}
    \label{fig:image1}
  \end{subfigure}
  \hfill
  \begin{subfigure}[t]{0.48\textwidth}
    \centering
    \includegraphics[width=\linewidth]{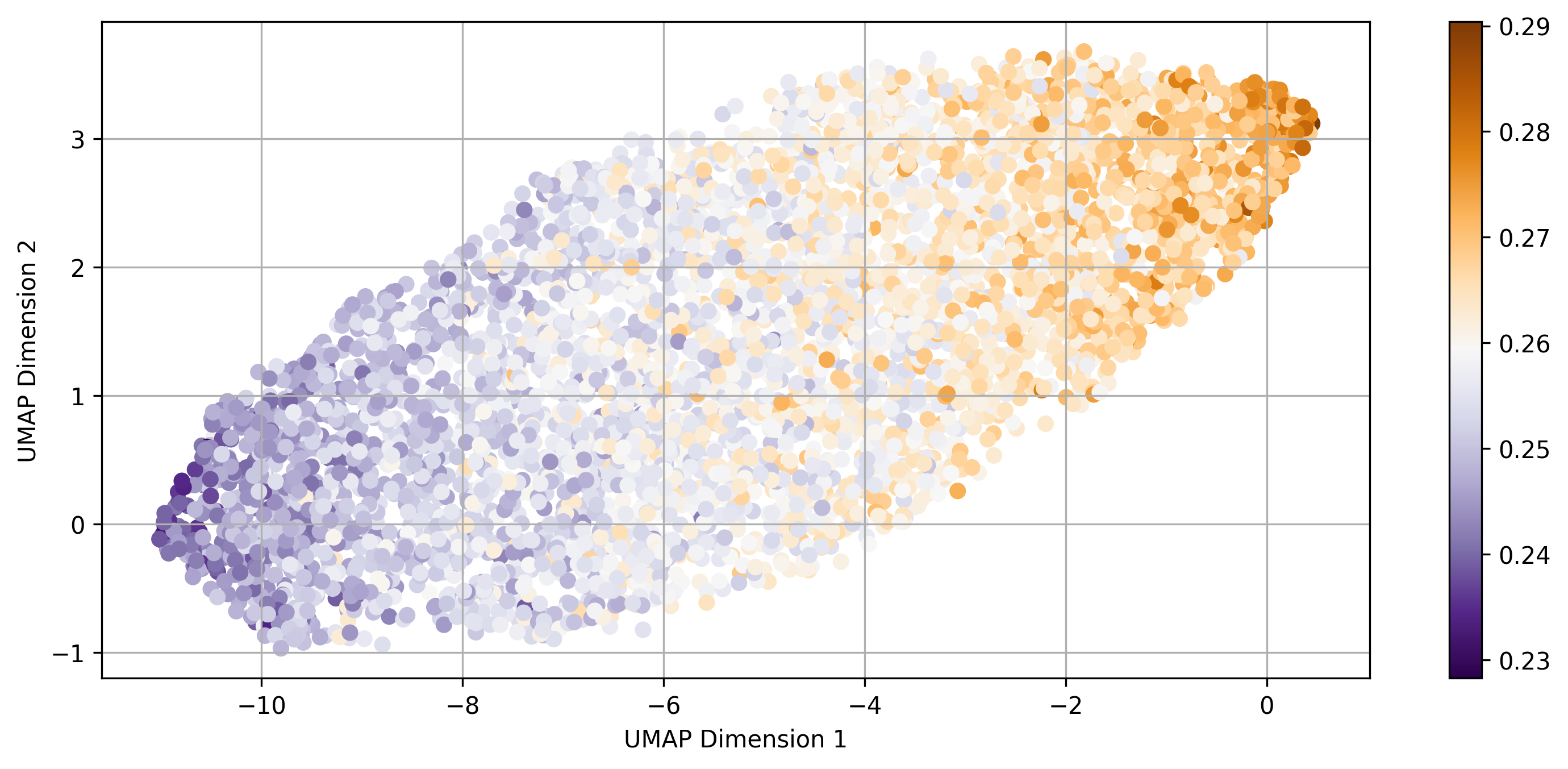} 
    \caption{UMAP projection colored by Classification accuracy $\alpha_k$}
    \label{fig:image2}
  \end{subfigure}
  \caption{A two-dimensional UMAP projection of vectors $\bm v_k = \bm W_{(k)}^{T} \bm{1}$.
  }
  \label{fig:fig1}
\end{figure*}

We introduce an optimization-based framework for constructing semi-orthogonal weight matrices that are specifically aligned with directions favorable to ReLU activation.  The guiding principle stems from empirical observations indicating that the alignment between the semi-orthogonal weight matrix and the all-ones vector, representing uniform positive inputs, correlates positively with network performance.

To understand how semi-orthogonal weight matrices align with the all-ones vector, we first observe how randomly sampled semi-orthogonal matrices align with the all-ones vector to understand their effect on ReLU activation behavior. 
Consider a collection of random semi-orthogonal matrices,
\[
  \left\{\bm{W}_{(k)}\right\}_{k=1}^{5000} \subset \mathcal{O}_{64,64},
\]
where each $\bm{W}_{(k)}$ is randomly generated using the orthogonal initialization scheme of Saxe et al.~\cite{saxe2013exact}, involving QR decomposition of a Gaussian matrix. For each $k$, define the 64-dimensional vector $\bm{v}_k = \bm{W}_{(k)}^T \bm{1}$. Since $\{\bm{v}_k\}_{k=1}^{5000}$ lie in a 64-dimensional space, it is challenging to visualize clustering or distribution patterns directly. There are several dimension reduction methods to visualize them in a low-dimensional space.
Among them, we adopt Uniform Manifold Approximation and Projection (UMAP)~\cite{mcinnes2018umap} 
because it supports user-defined distance metrics, which allows us to employ the 1-Wasserstein distance. 
This choice provides invariance to coordinate permutations and enables a meaningful comparison of the alignment vectors~\cite{peyre2019computational}.

In Fig.~\ref{fig:fig1} (a) and (b), an identical 2D UMAP embedding is used for both plots, i.e., each point occupies the same location in both subfigures and corresponds to the same $\bm{W}_{(k)}$.
In Fig.~\ref{fig:fig1} (a), points are colored by their alignment score
\[
s_k = \langle \bm{v}_k, \bm{1} \rangle=  \bm{1}^T \bm{W}_{(k)} \bm{1} = \sum_{i=1}^{64} \sum_{j=1}^{64} (\bm{W}_{(k)})_{ij},
\]
which quantifies how strongly \( \bm{v}_k \) aligns with the uniform direction \( \bm{1} = (1,1,\dots,1) \); that is,
\(s_k  =\langle \bm{v}_k, \bm{1} \rangle \)
measures the extent to which \( \bm{v}_k \) projects onto the subspace spanned by \( \bm{1} \), or equivalently, the sum of all entries in \( \bm{W}_{(k)} \). This reflects how uniformly the components of \( \bm{v}_k \) are distributed across all coordinates, indicating a globally consistent activation pattern.
In Fig.~\ref{fig:fig1} (b), the points are colored instead by the average one-shot classification accuracy $\alpha_k$, obtained when $\bm{W}_{(k)}$ is used to initialize the middle layer of a simple fully connected ReLU network trained on MNIST~\cite{lecun1998mnist}. 
Specifically, the network architecture is $784 \to 64 \to 64 \to 10$. 
The input-to-hidden ($64\times 784$) and hidden-to-output ($10\times 64$) weight matrices are initialized using orthogonal initialization. The middle $64 \times 64$ weight matrix is set to the specific sample $\bm{W}_{(k)}$ under evaluation.
For each $\bm{W}_{(k)}$, we run 100 independent one-shot trials: in each trial, one random image per digit class is sampled from the MNIST training set, training proceeds for 100 epochs using vanilla SGD (learning rate 0.01, batch size 64, no weight decay) under cross-entropy loss, and we measure test accuracy on the full MNIST test set. 
The color in Fig.~\ref{fig:fig1}(b) represents the average test accuracy $\alpha_k$ across these 100 trials.

Although Fig.~\ref{fig:fig1} (a) and (b) are colored by two different scalar metrics—alignment score $s_k$ and average classification accuracy $\alpha_k$, respectively—the resulting spatial patterns are remarkably similar. This visual correspondence suggests a positive relationship between the alignment of $\bm{W}_{(k)}$ with the direction of the all-ones vector and the effectiveness of the corresponding initialization.
A Pearson correlation analysis on the dataset $\left\{(s_k, \alpha_k)\right\}_{k=1}^{5000}$ revealed a strong, statistically significant linear correlation between $s_k$ and $\alpha_k$ ($r = 0.8178$). 
This finding supports the hypothesis that alignment with the all-one direction positively contributes to performance in ReLU networks. 
Motivated by this result, we construct a new class of weight matrices \( \bm{W} \in \mathbb{R}^{m \times n}\) that satisfy the following two criteria:
\begin{itemize}
    \item[(i)] $\bm{W}$ is semi-orthogonal, 
    \item[(ii)] The value of \( \bm{1}_m^T \bm{W} \bm{1}_n \) is maximized.
\end{itemize} 
Since no more than \( n \) orthonormal vectors can exist in \( \mathbb{R}^n \), the requirement of row orthonormality implies \( m \leq n \). Throughout the paper, we adopt the assumption \( m \leq n \), i.e., $\bm{W}\bm{W}^T=\bm{I}$.

\subsection{Optimization on the Stiefel Manifold}
Note that $\mathcal{O}_{m,n}$ in \eqref{eq:Omn} is a real Stiefel manifold, which mathematically represents the collection of all ordered orthonormal \( m \)-frames in \( \mathbb{R}^n \). Each element in \( \mathcal{O}_{m,n} \) can be viewed as a point on a nonlinear manifold embedded in Euclidean space, constrained by \( \frac{1}{2}m(m+1) \) nonlinear equations due to the orthonormality condition. Notably, \(  \mathcal{O}_{m,n} \) is a smooth Riemannian manifold, and it generalizes both the unit sphere \( \mathbb{S}^{n-1} = \mathcal{O}_{1,n} \) and the orthogonal group \( \mathcal{O}_{n,n}\).

An optimization problem is now formulated over the Stiefel manifold \( \mathcal{O}_{m,n} \) to satisfy the above criteria. The scalar \( \bm{1}_m^T \bm{W} \bm{1}_n \), representing the sum of all entries of \( \bm{W} \), is equivalent to \( \tr(\bm{J}_{m \times n}^T \bm{W}) \), where \( \bm{J}_{m \times n} \) is the all-ones matrix. Thus, the optimization can be posed as
\begin{equation}\label{eq:main_optimiz1}
    \max_{\bm{W}\in \mathcal{O}_{m,n}} \tr{(\bm{J}_{m\times n}^T \bm{W})}.
\end{equation}
Equivalently, it can be expressed as
\begin{equation}\label{eq:main_optimiz2}
    \min_{\bm{W}\in \mathcal{O}_{m,n}}  \|\bm{J}_{m\times n} -\bm{W}\|_F,
\end{equation}
which interprets the problem as finding the closest point on the Stiefel manifold to \(\bm{J}_{m\times n}\) in terms of the Frobenius norm.

\begin{theorem}\label{thm:equivalence}
    Every optimal solution of \eqref{eq:main_optimiz2} can be explicitly expressed as follows:
    \begin{equation}
        \widetilde{\bm{W}} = \bm{U}\bm{V}^T,
    \end{equation}
    where $\bm{U} \in \mathcal{O}_{m,m}$ has its first column equal to $\bm{\xi}_{m}$, and $\bm{V} \in \mathcal{O}_{n,m}$ has its first column equal to $\bm{\xi}_{n}$.
\end{theorem}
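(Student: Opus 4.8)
The plan is to treat \eqref{eq:main_optimiz2} as an orthogonal Procrustes problem and to exploit the fact that $\bm{J}_{m\times n}$ has rank one. First I would rewrite the objective: since $\bm{W}\bm{W}^{T}=\bm{I}_{m}$ forces $\|\bm{W}\|_{F}^{2}=\tr(\bm{I}_{m})=m$, expanding the square gives $\|\bm{J}_{m\times n}-\bm{W}\|_{F}^{2}=mn+m-2\tr(\bm{J}_{m\times n}^{T}\bm{W})$, so \eqref{eq:main_optimiz2} is equivalent to \eqref{eq:main_optimiz1}. Next I would record the thin singular value decomposition $\bm{J}_{m\times n}=\bm{1}_{m}\bm{1}_{n}^{T}=\sqrt{mn}\,\bm{\xi}_{m}\bm{\xi}_{n}^{T}$: a rank-one matrix whose only singular value is $\sqrt{mn}$, with left and right singular vectors $\bm{\xi}_{m}$ and $\bm{\xi}_{n}$. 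Hence $\tr(\bm{J}_{m\times n}^{T}\bm{W})=\sqrt{mn}\,\bm{\xi}_{m}^{T}\bm{W}\bm{\xi}_{n}$, and the problem reduces to maximizing the scalar $\bm{\xi}_{m}^{T}\bm{W}\bm{\xi}_{n}$ over $\mathcal{O}_{m,n}$.

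The second step is the equality analysis that pins down the maximizers. Because $m\le n$, the matrix $\bm{W}^{T}\bm{W}$ is the orthogonal projector onto $\mathrm{range}(\bm{W}^{T})$, so $\|\bm{W}\bm{x}\|_{2}\le\|\bm{x}\|_{2}$ for every $\bm{x}$, with equality precisely when $\bm{x}\in\mathrm{range}(\bm{W}^{T})$. Chaining this with Cauchy--Schwarz, $\bm{\xi}_{m}^{T}\bm{W}\bm{\xi}_{n}\le\|\bm{\xi}_{m}\|_{2}\,\|\bm{W}\bm{\xi}_{n}\|_{2}\le\|\bm{\xi}_{n}\|_{2}=1$, so the optimal value of \eqref{eq:main_optimiz1} is $\sqrt{mn}$. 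A matrix $\bm{W}\in\mathcal{O}_{m,n}$ attains it iff both inequalities are tight, i.e. $\|\bm{W}\bm{\xi}_{n}\|_{2}=1$ and $\bm{W}\bm{\xi}_{n}$, $\bm{\xi}_{m}$ are positively proportional; since both are then unit vectors this is equivalent to the single clean condition $\bm{W}\bm{\xi}_{n}=\bm{\xi}_{m}$. Applying $\bm{W}^{T}$ and using $\bm{W}^{T}\bm{W}\bm{\xi}_{n}=\bm{\xi}_{n}$ (valid because $\bm{\xi}_{n}\in\mathrm{range}(\bm{W}^{T})$) also yields $\bm{\xi}_{n}=\bm{W}^{T}\bm{\xi}_{m}$; in particular $\bm{\xi}_{n}$ lies in the row space of every optimal $\bm{W}$.

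The final step extracts the factorization. Given an optimal $\bm{W}$, set $R:=\mathrm{range}(\bm{W}^{T})\subseteq\mathbb{R}^{n}$, an $m$-dimensional subspace containing $\bm{\xi}_{n}$. Choose $\bm{V}\in\mathbb{R}^{n\times m}$ whose columns form an orthonormal basis of $R$ with first column $\bm{\xi}_{n}$ (possible exactly because $\bm{\xi}_{n}\in R$); then $\bm{V}^{T}\bm{V}=\bm{I}_{m}$ and $\bm{V}\bm{V}^{T}$ is the orthogonal projector onto $R$. Since the rows of $\bm{W}$ lie in $R$, we get $\bm{W}=\bm{W}\bm{V}\bm{V}^{T}$; put $\bm{U}:=\bm{W}\bm{V}\in\mathbb{R}^{m\times m}$, so $\bm{W}=\bm{U}\bm{V}^{T}$. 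Then $\bm{U}\bm{U}^{T}=\bm{W}\bm{V}\bm{V}^{T}\bm{W}^{T}=\bm{W}\bm{W}^{T}=\bm{I}_{m}$ (again using that the columns of $\bm{W}^{T}$ lie in $R$), so $\bm{U}\in\mathcal{O}_{m,m}$, and its first column is $\bm{W}\bm{\xi}_{n}=\bm{\xi}_{m}$. This is exactly the asserted form. For completeness I would also check the converse—that any $\bm{U}\bm{V}^{T}$ with these constrained first columns is a maximizer—via the one-line computation $\tr(\bm{J}_{m\times n}^{T}\bm{U}\bm{V}^{T})=\sqrt{mn}\,(\bm{\xi}_{m}^{T}\bm{U})(\bm{V}^{T}\bm{\xi}_{n})=\sqrt{mn}$, since orthogonality forces all but the first coordinate of $\bm{U}^{T}\bm{\xi}_{m}$ and of $\bm{V}^{T}\bm{\xi}_{n}$ to vanish.

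The main obstacle I anticipate is the equality analysis in the second step: $\bm{W}$ is a rectangular contraction rather than an isometry, so one must argue carefully that attaining the Cauchy--Schwarz/contraction bound forces the identity $\bm{W}\bm{\xi}_{n}=\bm{\xi}_{m}$ and, crucially, that $\bm{\xi}_{n}$ then lies in $\mathrm{range}(\bm{W}^{T})$. This last point is what legitimizes the basis completion in the third step, and it is also the reason the solution set is a genuine family (the remaining columns of $\bm{U}$ and $\bm{V}$ being unconstrained) rather than a single matrix. The remaining steps are routine linear-algebra bookkeeping.
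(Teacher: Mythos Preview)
Your proposal is correct. It shares the opening reduction with the paper (expand the Frobenius square, use the rank-one structure $\bm{J}_{m\times n}=\sqrt{mn}\,\bm{\xi}_m\bm{\xi}_n^{T}$ to isolate the scalar $\bm{\xi}_m^{T}\bm{W}\bm{\xi}_n$), but diverges from there. The paper follows the standard Procrustes route: it performs a change of variables $\bm{Q}=\widehat{\bm{U}}^{T}\bm{W}\widehat{\bm{V}}$ through the SVD factors of $\bm{J}_{m\times n}$, reduces to maximizing the $(1,1)$ entry of $\bm{Q}$, and reads off the block structure $\widetilde{\bm{Q}}=\operatorname{diag}(1,\bm{Q}')$ before undoing the substitution. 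You instead bound $\bm{\xi}_m^{T}\bm{W}\bm{\xi}_n$ directly by chaining Cauchy--Schwarz with the contraction inequality $\|\bm{W}\bm{x}\|_2\le\|\bm{x}\|_2$, extract the equality conditions $\bm{W}\bm{\xi}_n=\bm{\xi}_m$ and $\bm{\xi}_n\in\mathrm{range}(\bm{W}^{T})$, and then build the factorization by completing $\bm{\xi}_n$ to an orthonormal basis of the row space. Your route is a bit more elementary and sidesteps the delicate point that right-multiplication by the non-square $\widehat{\bm{V}}$ is not norm-preserving; as a bonus it produces the characterization $\bm{W}\bm{\xi}_n=\bm{\xi}_m$ en route, which the paper only records later as a separate lemma. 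The paper's approach, on the other hand, makes the free parameter $\bm{Q}'\in\mathcal{O}_{m-1,m-1}$ visible at a glance, whereas in your version that freedom is implicit in the unconstrained trailing columns of $\bm{U}$ and $\bm{V}$.
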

\begin{proof}
Note that the thin SVD of $\bm{J}_{m\times n}$ is
\[
\bm{J}_{m\times n} = \widehat{\bm{U}}\,\bm{\Sigma}\,\widehat{\bm{V}}^T,
\]
where $\bm{\Sigma}$ is the $m\times m$ diagonal matrix with $\sqrt{mn}$ in the first diagonal element and $0$ elsewhere, $\widehat{\bm{U}} \in \mathcal{O}_{m,m}$ and $\widehat{\bm{V}} \in \mathcal{O}_{n,m}$ have their first columns equal to $\bm{\xi}_{m}$ and $\bm{\xi}_{n}$, respectively.  By the orthogonal invariance of the Frobenius norm, it holds
\[
\|\bm{J}_{m\times n}-\bm{W}\|_F
=\bigl\|\widehat{\bm{U}}^T(\bm{J}_{m\times n}-\bm{W})\,\widehat{\bm{V}}\bigr\|_F
=\bigl\|\bm{\Sigma}-\widehat{\bm{U}}^T\bm{W}\,\widehat{\bm{V}}\bigr\|_F.
\]
By setting $\bm{Q}=\widehat{\bm{U}}^T\bm{W}\,\widehat{\bm{V}}\in\mathcal{O}_{m,m}$, the problem \eqref{eq:main_optimiz2} can be simply rewritten as 
\begin{equation}\label{eq:sigma}
\min_{\bm{Q}\in\mathcal{O}_{m,m}}\|\bm{\Sigma}-\bm{Q}\|_F^2.
\end{equation}
Note that 
\[
\|\bm{\Sigma}-\bm{Q}\|_F^2 = mn + m - 2\sqrt{mn}\,Q_{11},
\]
where $Q_{11}$ is $(1,1)$ entry of $\bm{Q}$.
So, solving \eqref{eq:sigma} is equivalent to maximizing $Q_{11}$. 
Since the first column vector of $\bm{Q}$ has norm $1$, it follows that the maximum possible value of $Q_{11}$ is $1$. 
Thus, the optimal solution of \eqref{eq:sigma} is
\[
\widetilde{\bm{Q}} = \begin{bmatrix}1&\bm{0}\\\bm{0}&\bm{Q}'\end{bmatrix},
\quad \text{where} \; \bm{Q}'\in\mathcal{O}_{m-1,m-1}.
\]
Since $\widehat{\bm{U}}\,\widetilde{\bm{Q}}$ is a semi-orthogonal matrix and $\widehat{\bm{U}}\,\widetilde{\bm{Q}}$ has the first column $\bm{\xi}_m$,  $\widehat{\bm{U}}\,\widetilde{\bm{Q}}$ can be considered as an element in 
$\mathcal{O}_{m,m}$ whose the first column equals $\bm{\xi}_{m}$.
Finally, set
\[
\bm{U} = \widehat{\bm{U}}\,\widetilde{\bm{Q}}\in\mathcal{O}_{m,m},
\qquad
\bm{V} = \widehat{\bm{V}}\in\mathcal{O}_{n,m}.
\]
By construction, the first columns of \(\bm{U}\) and \(\bm{V}\) are
\(\bm{\xi}_{m}\) and \(\bm{\xi}_{n}\), respectively, and
\(\widetilde{\bm{W}} = \bm{U}\,\bm{V}^T\).
\end{proof}
The optimal value of the objective function in~\eqref{eq:main_optimiz2} can be explicitly computed by evaluating the solution characterized in Theorem~\ref{thm:equivalence}. Substituting the optimal solution \(  \widetilde{\bm{W}} = \bm{U} \bm{V}^T \) yields
\[
\bm{1}_m^T \bm{W} \bm{1}_n 
= \left(\bm{U}^T \bm{1}_m\right)^T \left(\bm{V}^T \bm{1}_n\right) = \sqrt{mn}.
\]
Denote the set of all optimal solutions to the maximization problem in \eqref{eq:main_optimiz2} as \( \widetilde{\mathcal{O}}_{m,n} \). 
Equivalently, 
\begin{equation}
    \widetilde{\mathcal{O}}_{m,n}:= \left\{ \bm{W}\in \mathcal{O}_{m,n} \Bigl|  \frac{1}{\sqrt{mn}}\bm{1}_m^T \bm{W} \bm{1}_n=\bm{\xi}_m^T \bm{W} \bm{\xi}_n = 1 \right\}.
\end{equation}
The following is one example from the set $\mathcal{O}_{m,n}$for small values $m,n$. 
\begin{example}\label{example:2by3}
Let $m = 2$, $n = 3$, and consider the matrix
\begin{equation}
    \bm{W} =\bm{U}\bm{V}^T= 
\begin{bmatrix}
\frac{\sqrt{6} - \sqrt{3}}{6} & \frac{\sqrt{6} - \sqrt{3}}{6} &  \frac{\sqrt{6} + 2\sqrt{3}}{6} \\
 \frac{\sqrt{6} + \sqrt{3}}{6} &  \frac{\sqrt{6} + \sqrt{3}}{6} &\frac{\sqrt{6} - 2\sqrt{3}}{3},
\end{bmatrix},
\end{equation}
where $\bm{U}$ and $\bm{U}$ are semi-orthogonal matrices given by
\[
\bm{U} =
\begin{bmatrix}
\frac{1}{\sqrt{2}} & \frac{1}{\sqrt{2}} \\
\frac{1}{\sqrt{2}} & -\frac{1}{\sqrt{2}}
\end{bmatrix}
\in \mathcal{O}_{2,2}, \quad
\bm{V} =
\begin{bmatrix}
\frac{1}{\sqrt{3}} & \frac{1}{\sqrt{6}} \\
\frac{1}{\sqrt{3}} & \frac{1}{\sqrt{6}} \\
\frac{1}{\sqrt{3}} & -\frac{2}{\sqrt{6}}
\end{bmatrix}
\in \mathcal{O}_{3,2}.
\]
Then $\bm{W} \in \mathcal{O}_{2,3}$ and $\bm{\xi}_m^T \bm{W} \bm{\xi}_n = 1$. Thus, $\bm{W} \in \widetilde{\mathcal{O}}_{2,3}$.
\end{example}

This article proposes to initialize the weight matrix of each layer by independently sampling from $\widetilde{\mathcal{O}}_{m,n}$.

\section{Theoretical properties}

This section investigates several properties of matrices in \( \widetilde{\mathcal{O}}_{m,n} \) whose elements are the optimal solutions to the maximization problem in \eqref{eq:main_optimiz2}. We first analyze its theoretical characteristics, highlighting key structural and alignment features that are preserved under the orthogonality constraint. Next, we present an efficient construction method for weight matrices in the solution set $\widetilde{\mathcal{O}}_{m,n}$ that exactly satisfy the prescribed optimization constraints. 

\subsection{Matrix Structure and Characterization}
 Each matrix \( \bm{W} \in \widetilde{\mathcal{O}}_{m,n} \) admits the decomposition
\begin{equation}\label{eq:UVT}
\bm{W} = \bm{U}\bm{V}^T = 
\begin{bmatrix}
| & | &   & | \\
\bm{u}_1 & \bm{u}_2 &  \cdots & \bm{u}_m \\
| & | &   & | \\
\end{bmatrix} 
\begin{bmatrix} 
\text{\textemdash}\;\bm{v}_1^T\;\text{\textemdash}\\ 
\text{\textemdash}\;\bm{v}_2^T\;\text{\textemdash}\\ 
\vdots\\ 
\text{\textemdash}\;\bm{v}_m^T\;\text{\textemdash} 
\end{bmatrix},
\end{equation}
where \( \bm{U} \in \mathcal{O}_{m,m} \), \( \bm{V} \in \mathcal{O}_{n,m} \), with the leading singular vectors aligned to the normalized all-ones vectors: \( \bm{u}_1 = \xi_{m} \), \( \bm{v}_1 = \xi_{n} \).
Since every \( \bm{W} \in \widetilde{\mathcal{O}}_{m,n} \) is semi-orthogonal, it satisfies \( \bm{W} \bm{W}^{T} = \bm{I}_m \), ensuring that the transformation preserves input norms:
\[
\| \bm{W}^T \bm{x} \| = \| \bm{x} \| \quad \text{for all }\bm{x} \in \mathbb{R}^m.
\]
On the other hand, the right-side composition \( \bm{W}^T \bm{W} = \sum_{i=1}^m \bm{v}_i \bm{v}_i^{T} \) does not yield the identity. Instead, it defines an orthogonal projection onto the span of \( \{\bm{v}_1, \dots, \bm{v}_m\} \subset \mathbb{R}^n \).

In addition to orthogonality and projection properties, the following lemma captures a key structural characterization of the matrices in \( \widetilde{\mathcal{O}}_{m,n} \).

\begin{lemma}\label{lem:meanshift} 
Suppose that  \( \bm{W}\) is a semi-orthogonal matrix. Then the following are equivalent:
\begin{enumerate}
    \item[$($i$)$]  \( \bm{W}\in \widetilde{\mathcal{O}}_{m,n} \),
    \item[$($ii$)$]  \( \bm{W}\bm{\xi}_n = \bm{\xi}_m \),
    \item[$($iii$)$] \( \bm{W}^T\bm{\xi}_m = \bm{\xi}_n \).
\end{enumerate}
\end{lemma}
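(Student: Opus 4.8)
The plan is to prove the three statements equivalent via the cycle $(\mathrm{i})\Rightarrow(\mathrm{iii})\Rightarrow(\mathrm{ii})\Rightarrow(\mathrm{i})$, using the semi-orthogonality hypothesis $\bm{W}\bm{W}^T=\bm{I}_m$ at two points and a single equality-case argument for the Cauchy--Schwarz inequality for the one step with real content.

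For $(\mathrm{i})\Rightarrow(\mathrm{iii})$ I would set $\bm{z}=\bm{W}^T\bm{\xi}_m$. Semi-orthogonality gives $\|\bm{z}\|_2^2=\bm{\xi}_m^T\bm{W}\bm{W}^T\bm{\xi}_m=\bm{\xi}_m^T\bm{\xi}_m=1$, so $\bm{z}$ is a unit vector, and of course $\|\bm{\xi}_n\|_2=1$. Membership in $\widetilde{\mathcal{O}}_{m,n}$ says $1=\bm{\xi}_m^T\bm{W}\bm{\xi}_n=\langle\bm{z},\bm{\xi}_n\rangle$, so the inequality $\langle\bm{z},\bm{\xi}_n\rangle\le\|\bm{z}\|_2\|\bm{\xi}_n\|_2=1$ is met with equality. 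The equality case of Cauchy--Schwarz forces $\bm{z}=\lambda\bm{\xi}_n$ with $\lambda>0$, and $\|\bm{z}\|_2=1$ pins down $\lambda=1$; hence $\bm{W}^T\bm{\xi}_m=\bm{\xi}_n$.

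The other two implications are immediate. For $(\mathrm{iii})\Rightarrow(\mathrm{ii})$ I would left-multiply $\bm{W}^T\bm{\xi}_m=\bm{\xi}_n$ by $\bm{W}$ and use $\bm{W}\bm{W}^T=\bm{I}_m$ to get $\bm{\xi}_m=\bm{W}\bm{W}^T\bm{\xi}_m=\bm{W}\bm{\xi}_n$. For $(\mathrm{ii})\Rightarrow(\mathrm{i})$ I would left-multiply $\bm{W}\bm{\xi}_n=\bm{\xi}_m$ by $\bm{\xi}_m^T$, obtaining $\bm{\xi}_m^T\bm{W}\bm{\xi}_n=\bm{\xi}_m^T\bm{\xi}_m=1$, which is precisely the defining condition of $\widetilde{\mathcal{O}}_{m,n}$ recorded after Theorem~\ref{thm:equivalence}.

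I do not anticipate a genuine obstacle; the one place requiring attention is verifying in $(\mathrm{i})\Rightarrow(\mathrm{iii})$ that $\bm{W}^T\bm{\xi}_m$ genuinely has unit norm -- this is exactly where the hypothesis $\bm{W}\bm{W}^T=\bm{I}_m$ (rather than merely that $\bm{W}^T\bm{W}$ is a projection) enters -- for otherwise the Cauchy--Schwarz bound need not be tight and the proportionality constant could not be identified. An equally short alternative is to prove $(\mathrm{i})\Leftrightarrow(\mathrm{ii})$ and $(\mathrm{i})\Leftrightarrow(\mathrm{iii})$ separately, each forward direction again via an equality case of Cauchy--Schwarz, using $\|\bm{W}\bm{\xi}_n\|_2\le\|\bm{\xi}_n\|_2=1$ for the first; the cyclic version above is merely the most economical.
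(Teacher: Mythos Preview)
Your proof is correct. The paper runs the cycle in the opposite direction, $(\mathrm{i})\Rightarrow(\mathrm{ii})\Rightarrow(\mathrm{iii})\Rightarrow(\mathrm{i})$, and for the step with content, $(\mathrm{i})\Rightarrow(\mathrm{ii})$, it invokes the structural decomposition $\bm{W}=\bm{U}\bm{V}^T$ from Theorem~\ref{thm:equivalence} (first columns $\bm{\xi}_m$, $\bm{\xi}_n$) to compute $\bm{W}\bm{\xi}_n=\bm{U}(\bm{V}^T\bm{\xi}_n)=\bm{u}_1=\bm{\xi}_m$; its $(\mathrm{ii})\Rightarrow(\mathrm{iii})$ is then exactly your Cauchy--Schwarz equality-case argument. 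In effect you have promoted that Cauchy--Schwarz step to the front, applying it directly to $(\mathrm{i})$ and thereby bypassing Theorem~\ref{thm:equivalence} altogether. What your route buys is self-containment: the lemma stands on its own without appeal to the SVD characterization of $\widetilde{\mathcal{O}}_{m,n}$. What the paper's route buys is an explicit link back to the structure theorem, so the reader sees $(\mathrm{ii})$ emerge straight from the $\bm{U}\bm{V}^T$ form. Both are equally short.
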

\begin{proof}
$[(i)\Rightarrow(ii)]$
If $ \bm{W}\in \widetilde{\mathcal{O}}_{m,n} $, by Theorem~\ref{thm:equivalence}, 
$\bm{W}$ has the decomposition $\bm{W} = \bm{U}\,\bm{V}^T$ for $ \bm{U}$, $\bm{V}$ as in \eqref{eq:UVT}.
Since  $ \bm{v}_1^T\,\bm{\xi}_n = 1$ and $ \bm{v}_i^T\,\bm{\xi}_n = 0$ for all $i\ge 2$, it follows that 
$\bm{W} \bm{\xi}_n = \bm{U}\,\left(\bm{V}^T\bm{\xi}_n\right) = \bm{U} [ 1 \, 0 \, \cdots \, 0 ]^T = \,\bm{u}_1 = \bm{\xi}_m.$

\smallskip

$[(ii)\Rightarrow(iii)]$ If $\bm{W}\bm{\xi}_n = \bm{\xi}_m$, then
$1 = \bm{\xi}_m^T \bm{\xi}_m = (\bm{W}\bm{\xi}_n)^T \bm{\xi}_m = \langle\bm{\xi}_n,~ \bm{W}^T \bm{\xi}_m\rangle$, implying that $\bm{W}^T \bm{\xi}_m = \bm{\xi}_n$, since $\|\bm{W}^T \bm{\xi}_m \| = 1$.

\smallskip

$[(iii)\Rightarrow(i)]$ $\bm{1}_n^T \bm{W}^T \bm{1}_m=\sqrt{mn} (\bm{\xi}_n^T \bm{W}^T \bm{\xi}_m)=\sqrt{mn} (\bm{\xi}_n^T \bm{\xi}_n) = \sqrt{mn}$, so $\bm{W}$ is an optimal solution of \eqref{eq:main_optimiz1}, i.e., $\bm{W}\in\widetilde{\mathcal{O}}_{m,n}$.
\end{proof}

Lemma~\ref{lem:meanshift} characterizes \( \widetilde{\mathcal{O}}_{m,n} \) as the set of all semi-orthogonal matrices that preserve the normalized all-ones vector.
Since the three conditions are equivalent, either  (ii) or (iii) in Lemma~\ref{lem:meanshift} can fully characterize \( \widetilde{\mathcal{O}}_{m,n} \). This lemma plays a central role in the derivation of subsequent properties.

It is well known that any semi-orthonormal matrix can be extended to an orthonormal matrix when the number of rows is strictly less than the number of columns~\cite{horn2012matrix}. 
Specifically, 
for a semi-orthonormal matrix \( \bm{W} \in \widetilde{\mathcal{O}}_{m,n} \), where \( m < n \), there exists a matrix \( \bm{W}^\perp \in \mathbb{R}^{(n-m) \times n} \) whose rows form an orthonormal basis for the orthogonal complement of the row space of \( \bm{W} \) such that 
$\begin{bmatrix}
    \bm{W} \\
    \bm{W}^\perp
\end{bmatrix} \in \mathcal{O}_{n,n}.$
Write
\begin{equation}\label{eq:orthogonal_1}
\bm{W} = 
\begin{bmatrix}
    \bm{w}_1^T \\
    \vdots \\
    \bm{w}_m^T
\end{bmatrix} \in \widetilde{\mathcal{O}}_{m,n}, \quad
\bm{W}^\perp = 
\begin{bmatrix}
    \bm{w}_{m+1}^T \\
    \vdots \\
    \bm{w}_n^T
\end{bmatrix} \in \mathcal{O}_{n-m,n},
\end{equation}
where \( \{ \bm{w}_i \}_{i=1}^n \subset \mathbb{R}^n \) forms orthonormal basis for \( \mathbb{R}^n \).

\begin{algorithm}[t]
\caption{Generate $\bm{W} \in \widetilde{\mathcal{O}}_{m,n}$ via QR factorization}\label{algorithm}
\begin{algorithmic}[1]
  \STATE \textbf{Input:} Positive integers \( n \geq m \geq 2\)
  \STATE \textbf{Output:} \(\bm{W}\)
  \STATE Draw \(\bm{A}\in\mathbb{R}^{m\times(m-1)}\) and \(\bm{B}\in\mathbb{R}^{n\times(m-1)}\) with i.i.d.\ \(\mathcal{N}(0,1)\) entries
  \STATE \((\bm{U},\bm{R})\gets \mathrm{qr}\bigl([\bm{\xi}_m\;\;\bm{A}]\bigr)\) 
    \hfill\(\,\bm{U}\in\mathbb{R}^{m\times m},\;\bm{R}\in\mathbb{R}^{m\times m}\)
  \STATE \((\bm{V},\bm{S})\gets \mathrm{qr}\bigl([\bm{\xi}_n\;\;\bm{B}]\bigr)\) 
    \hfill\(\,\bm{V}\in\mathbb{R}^{n\times m},\;\bm{S}\in\mathbb{R}^{m\times m}\)
  \STATE \(\bm{\Lambda}\gets \bm{0}_{m\times m}\)
  \STATE \(\bm{\Gamma}\gets \bm{0}_{m\times m}\)
  \FOR{\(i = 1,\dots,m\)}
    \STATE \(\bm{\Lambda}_{i,i} \gets \bm{R}_{i,i} / \lvert \bm{R}_{i,i}\rvert\)
    \STATE \(\bm{\Gamma}_{i,i} \gets \bm{S}_{i,i} / \lvert \bm{S}_{i,i}\rvert\)
  \ENDFOR
  \STATE \(\bm{U} \gets \bm{U}\,\bm{\Lambda}\)
  \STATE \(\bm{V} \gets \bm{V}\,\bm{\Gamma}\)  
  \STATE \(\bm{W} \gets \bm{U}\,\bm{V}^T\)  
\end{algorithmic}
\end{algorithm}

\begin{proposition}\label{prop:1}
The rows of $\bm{W}^\perp$ in \eqref{eq:orthogonal_1} are in the subspace orthogonal to $\bm{\xi}_n$, i.e.,
\[
{\bm{W}^\perp} \bm{\xi}_n \;=\; \bm{0}.
\]
\end{proposition}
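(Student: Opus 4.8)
The plan is to combine Lemma~\ref{lem:meanshift}(iii) with the defining property of the orthogonal completion in \eqref{eq:orthogonal_1}. Since $\bm{W}\in\widetilde{\mathcal{O}}_{m,n}$, Lemma~\ref{lem:meanshift} gives the key identity $\bm{W}^T\bm{\xi}_m = \bm{\xi}_n$. Reading this row-wise, it says that $\bm{\xi}_n = \sum_{i=1}^m (\bm{\xi}_m)_i\,\bm{w}_i = \tfrac{1}{\sqrt m}\sum_{i=1}^m \bm{w}_i$, so $\bm{\xi}_n$ lies in $\operatorname{span}\{\bm{w}_1,\dots,\bm{w}_m\}$, the row space of $\bm{W}$. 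By construction in \eqref{eq:orthogonal_1}, the rows $\bm{w}_{m+1},\dots,\bm{w}_n$ of $\bm{W}^\perp$ span the orthogonal complement of that row space, hence each is orthogonal to $\bm{\xi}_n$; stacking these inner products yields $\bm{W}^\perp\bm{\xi}_n = \bm{0}$.

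Alternatively—and this is the version I would actually write out, since it avoids appealing to spans—I would argue entrywise. Because $\bigl[\begin{smallmatrix}\bm{W}\\\bm{W}^\perp\end{smallmatrix}\bigr]\in\mathcal{O}_{n,n}$, the full collection $\{\bm{w}_i\}_{i=1}^n$ is an orthonormal basis of $\mathbb{R}^n$. Then for each $j\in\{m+1,\dots,n\}$,
\[
\langle \bm{w}_j,\,\bm{\xi}_n\rangle
= \langle \bm{w}_j,\,\bm{W}^T\bm{\xi}_m\rangle
= \langle \bm{W}\bm{w}_j,\,\bm{\xi}_m\rangle,
\]
and the $i$-th entry of $\bm{W}\bm{w}_j$ is $\langle\bm{w}_i,\bm{w}_j\rangle = 0$ for $i=1,\dots,m$ by orthonormality of the basis, so $\bm{W}\bm{w}_j=\bm{0}$ and therefore $\langle\bm{w}_j,\bm{\xi}_n\rangle = 0$. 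Since this holds for every row of $\bm{W}^\perp$, we conclude $\bm{W}^\perp\bm{\xi}_n=\bm{0}$.

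I do not anticipate a genuine obstacle here: the statement is essentially immediate once the correct equivalent characterization of $\widetilde{\mathcal{O}}_{m,n}$ is invoked. The only point requiring a little care is to use the orthonormality of the \emph{complete} basis $\{\bm{w}_i\}_{i=1}^n$ (equivalently, that the stacked matrix is genuinely in $\mathcal{O}_{n,n}$), rather than merely the row-orthonormality of $\bm{W}$ itself; this is what forces $\bm{W}\bm{w}_j=\bm{0}$ for $j>m$. A one-line remark could also note that this proposition is exactly what guarantees Algorithm~\ref{algorithm}'s output interacts correctly with the $\bm{\xi}_n$ direction, but that is not needed for the proof.
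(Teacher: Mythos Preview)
Your argument is correct, but it proceeds along a different line than the paper. You invoke Lemma~\ref{lem:meanshift}(iii), $\bm{W}^T\bm{\xi}_m=\bm{\xi}_n$, to place $\bm{\xi}_n$ in the row space of $\bm{W}$ and then use the orthogonal-complement property of $\bm{W}^\perp$ (equivalently, $\bm{W}\bm{w}_j=\bm{0}$ for $j>m$). The paper instead uses Lemma~\ref{lem:meanshift}(ii), $\bm{W}\bm{\xi}_n=\bm{\xi}_m$, together with the fact that the stacked matrix is orthogonal and hence norm-preserving:
\[
1=\|\bm{\xi}_n\|^2=\Bigl\|\begin{bmatrix}\bm{W}\\\bm{W}^\perp\end{bmatrix}\bm{\xi}_n\Bigr\|^2=\|\bm{\xi}_m\|^2+\|\bm{W}^\perp\bm{\xi}_n\|^2=1+\|\bm{W}^\perp\bm{\xi}_n\|^2,
\]
forcing $\bm{W}^\perp\bm{\xi}_n=\bm{0}$. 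The paper's norm trick is a one-liner and avoids reasoning about spans or individual rows; your approach is slightly longer but more explicit about \emph{why} the orthogonality holds (namely, $\bm{\xi}_n$ literally sits in the row space of $\bm{W}$), which some readers may find more illuminating. Either version is fine; your care about using the full orthonormal basis (not just row-orthonormality of $\bm{W}$) is exactly the right point to flag.
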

\begin{proof}
By Lemma~\ref{lem:meanshift} (ii), it follows that 
\begin{align*}
    1= \| \bm{\xi}_n\|^2
    = \left\|  \begin{bmatrix}
        \bm{W} \\ \bm{W}^\perp
    \end{bmatrix} \bm{\xi}_n \right\|^2
    = 1 + \| {\bm{W}^\perp} \bm{\xi}_n \|^2,
\end{align*}
implying $ {\bm{W}^\perp} \bm{\xi}_n  = \bm{0}$.
\end{proof}
This proposition reveals a fundamental structural property of any full orthonormal basis extended from a matrix $\bm{W} \in \widetilde{O}_{m,n}$. 
It demonstrates that the row space of the complete $n \times n$ orthogonal matrix is partitioned with respect to the all-ones vector $\bm{\xi}_n$.
The first \(m\) basis vectors (the rows of $\bm{W}$) collectively map $\bm{\xi}_n$ to $\bm{\xi}_m$, while the remaining $n-m$ basis vectors (the rows of $\bm{W}^{\perp}$) lie entirely in the subspace orthogonal to $\bm{\xi}_n$. 
This complete structural understanding forms the theoretical basis for designing and verifying the construction methods that follow.

\subsection{Construction of Proposed Weight Matrices}
As stated in Theorem~\ref{thm:equivalence}, when \( m = 1 \), the set \( \widetilde{\mathcal{O}}_{1,n} \) contains only a single vector \( \bm{\xi}_n \), which leads to a fully deterministic construction. 
So we focus on the more general and nontrivial cases where \( m \geq 2 \), for which the proposed algorithm is applicable.
Algorithm~\ref{algorithm} presents a construction method to generate random weight matrices from \( \widetilde{\mathcal{O}}_{m,n} \) using \eqref{eq:UVT}.
Algorithm~\ref{algorithm} first constructs two tall matrices $[\bm{\xi}_m\, \bm{A}]\in\mathbb{R}^{m\times m}$ and $[\bm{\xi}_n\, \bm{B}]\in\mathbb{R}^{n\times m}$, where $\bm{A}$ and $\bm{B}$ have i.i.d. $\mathcal{N}(0,1)$ entries.
Then it performs thin QR decompositions on these blocks to obtain orthogonal factors $\bm{U}\in\mathbb{R}^{m\times m}$ and $\bm{V}\in\mathbb{R}^{n\times m}$, respectively.
Next, diagonal sign-correction matrices $\bm{\Lambda}$ and $\bm{\Gamma}$ are applied so that the first columns of $\bm{U}$ and $\bm{V}$ match $\bm{\xi}_m$ and $\bm{\xi}_n$, respectively.
Finally, $\bm{W} = \bm{U}\bm{V}^{T}$ yields a semi-orthogonal matrix in $\widetilde{\mathcal{O}}_{m,n}$.
In particular, $\bm{U}$ and $\bm{V}$ retain their fixed first column. In contrast, their remaining columns are semi-orthogonal and uniformly sampled from the corresponding subspace according to the Haar distribution~\cite{mezzadri2007}.

Note that Algorithm~\ref{algorithm} requires QR decomposition for both $n\times n$ matrices and $m\times m$ matrices, respectively.
The QR decomposition of an $m\times m$ matrix has a computational complexity of $O(m^3)$.
To reduce high complexity, we propose an improved algorithm to construct $\bm{W}\in\widetilde{\mathcal{O}}_{m,n}$ with just one QR decomposition through the following process.

For \( \bm{W} \in \widetilde{\mathcal{O}}_{m,n} \), consider the set of orthonormal vectors \( \{ \bm{w}_i \}_{i=1}^m \subset \mathbb{R}^n \) such that $\bm{w}_i^T$ is the $i$-th row vector of $\bm{W}$ for all $i=1,\ldots, m$.
Define 
\begin{equation}\label{eq:u_i}
    \bm{h}_i := \bm{w}_i - \operatorname{proj}_{\bm{\xi}_n}(\bm{w}_i) \quad \text{for all }i=1,\ldots,m.
\end{equation}
Geometrically, each \(\bm{h}_i\) is the orthogonal projection
of \(\bm w_i\) onto the \((m-1)\)-dimensional subspace
$\{\bm x\in\mathbb R^n : \bm{\xi}_n^T\bm{x} = 0\}$.

\begin{algorithm}[t]
\caption{Generate $\bm{W} \in \widetilde{\mathcal{O}}_{m,n}$ via Theorem~\ref{thm:generate W2}}\label{alg:algorithm2}
\begin{algorithmic}[1]
  \STATE \textbf{Input:} Positive integers \( n \geq m \geq 2\)
  \STATE \textbf{Output:} \(\bm{W}\)
  \STATE \(\bm{L}_{ij} = \begin{cases} \sqrt{(m-i)/(m-i+1)} & \text{if } i=j<m \\ -1/\sqrt{(m-j+1)(m-j)} & \text{if } j<i \\ 0 & \text{otherwise for } j \le i \end{cases}\)
  \STATE Draw \(\bm{A} \in \mathbb{R}^{n\times(m-1)}\) with i.i.d.\ \(\mathcal{N}(0,1)\) entries
  \STATE \(({\bm Q} , \bm{R}) \gets \operatorname{qr}([\bm{\xi}_n \;\; \bm{A}])\) 
  \STATE \(\bm{\Lambda}\gets \bm{0}_{m\times m}\)
  \FOR{\(i = 1,\dots,m\)}
    \STATE \(\bm{\Lambda}_{i,i} \gets \bm{R}_{i,i} / \lvert \bm{R}_{i,i}\rvert\)
  \ENDFOR
  \STATE \({\bm Q}  \gets {\bm Q} \,\bm{\Lambda}\)
  \STATE \(\bm{Q} \gets  [\bm{q}_2 ~ \cdots ~ \bm{q}_m ~ \bm{q}_1]^T\), where \(\bm{q}_i\) : $i$-th columns of \({\bm Q} \).
  \STATE \(\bm{W} \gets \bm{L}\bm{Q} + \bm{\xi}_m\bm{\xi}_n^T\) 
\end{algorithmic}
\end{algorithm}

\begin{proposition}\label{prop:2}
Let $n\geq m\geq 2$. Consider $\bm{H} = [\bm{h}_1 \, \bm{h}_2 \, \cdots \, \bm{h}_m]$ where $\bm{h}_i$ is defined in \eqref{eq:u_i}, then Gram matrix $\bm{H}^T \bm{H}$ has following property:
\begin{align*}
    (\bm{H}^T\bm{H})_{ij}= \bm{h}^T_i\bm{h}_j = \begin{cases} \frac{m-1}{m} & \text{if } i = j, \\[2mm]
 - \frac{1}{m} & \text{if } i \neq j
\end{cases}
\end{align*}
for all $i,j=1,\ldots, m$.
\end{proposition}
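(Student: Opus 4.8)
The plan is to reduce the claim to the orthonormality of the rows of $\bm{W}$ together with the single structural fact that every row has the \emph{same} component along $\bm{\xi}_n$. First I would invoke Lemma~\ref{lem:meanshift}: since $\bm{W}\in\widetilde{\mathcal{O}}_{m,n}$, condition (ii) gives $\bm{W}\bm{\xi}_n=\bm{\xi}_m$, which read row-by-row says $\bm{w}_i^T\bm{\xi}_n=(\bm{\xi}_m)_i=\tfrac{1}{\sqrt m}$ for every $i=1,\dots,m$. Hence $\operatorname{proj}_{\bm{\xi}_n}(\bm{w}_i)=(\bm{\xi}_n^T\bm{w}_i)\,\bm{\xi}_n=\tfrac{1}{\sqrt m}\bm{\xi}_n$ is independent of $i$, and therefore
\[
\bm{h}_i=\bm{w}_i-\tfrac{1}{\sqrt m}\,\bm{\xi}_n,\qquad i=1,\dots,m.
\]

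Next I would simply expand the inner product. Using $\bm{w}_i^T\bm{w}_j=\delta_{ij}$ (rows of $\bm{W}$ are orthonormal because $\bm{W}\bm{W}^T=\bm{I}_m$), $\bm{w}_i^T\bm{\xi}_n=\tfrac{1}{\sqrt m}$ from the previous step, and $\|\bm{\xi}_n\|^2=1$, one gets
\[
\bm{h}_i^T\bm{h}_j
=\bm{w}_i^T\bm{w}_j-\tfrac{1}{\sqrt m}\bm{w}_i^T\bm{\xi}_n-\tfrac{1}{\sqrt m}\bm{\xi}_n^T\bm{w}_j+\tfrac1m\bm{\xi}_n^T\bm{\xi}_n
=\delta_{ij}-\tfrac1m-\tfrac1m+\tfrac1m=\delta_{ij}-\tfrac1m,
\]
which is exactly $\tfrac{m-1}{m}$ on the diagonal and $-\tfrac1m$ off the diagonal. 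Equivalently, in matrix form $\bm{H}^T\bm{H}=\bm{I}_m-\tfrac1m\bm{J}_m$, and one could phrase the whole computation compactly by writing $\bm{H}=\bm{W}^T-\tfrac1m\bm{\xi}_n\bm{1}_m^T$ (wait—better to keep the columnwise bookkeeping as above to avoid confusion with which factor is which) and expanding $\bm{H}^T\bm{H}$ directly.

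I do not anticipate a genuine obstacle here: the statement is a one-line consequence of Lemma~\ref{lem:meanshift} plus a routine bilinear expansion. The only point requiring a little care is the very first step—correctly identifying, via Lemma~\ref{lem:meanshift}(ii), that the projection coefficient $\bm{\xi}_n^T\bm{w}_i$ is the \emph{same} scalar $\tfrac{1}{\sqrt m}$ for all $i$; once that uniformity is in hand, the cross terms collapse and the result follows immediately.
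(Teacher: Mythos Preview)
Your proposal is correct and follows essentially the same approach as the paper: invoke Lemma~\ref{lem:meanshift}(ii) to obtain the uniform projection coefficient $\bm{w}_i^T\bm{\xi}_n=\tfrac{1}{\sqrt m}$, rewrite $\bm{h}_i=\bm{w}_i-\tfrac{1}{\sqrt m}\bm{\xi}_n$, and expand $\bm{h}_i^T\bm{h}_j$ using orthonormality of the rows. The only difference is cosmetic---you spell out all four terms in the bilinear expansion, whereas the paper collapses the cross terms in one step.
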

\begin{proof}
By Lemma~\ref{lem:meanshift} it holds that for $i = 1, \ldots,m$,
\begin{equation}\label{eq:h_w}
    \bm{h}_i
    = \bm{w}_i - (\bm{w}_i^T\bm{\xi}_n)\bm{\xi}_n
    = \bm{w}_i - \sqrt{\frac{1}{m}}\bm{\xi}_n.
\end{equation}
Then it follows that
    \begin{align*}
        \bm{h}_i^T\bm{h}_j 
        &= \left(\bm{w}_i - \sqrt{\frac{1}{m}}\bm{\xi}_n \right)^T \left(\bm{w}_j - \sqrt{\frac{1}{m}}\bm{\xi}_n \right)\\
        &= \bm{w}_i^T\bm{w}_j - {\frac{1}{m}}\bm{\xi}_n^T\bm{\xi}_n\\\
        &= \begin{cases}
    1 - \frac{1}{m} 
      = \frac{m-1}{m}, & \text{if } i = j, \\[2mm]
     - \frac{1}{m}, & \text{if } i \neq j
    \end{cases}
    \end{align*}
    for all $i,j=1,\ldots, m$.
\end{proof}

For $m\geq 2$, let
\begin{equation}
    \bm{P}_{m}:= \bm{H}^T\bm{H} =  \bm{I}_{m}-\frac{1}{m}\bm{J}_{m}.
\end{equation}
Consider multiplying $\bm{P}_{m}$ by an \(m\)-dimensional vector $\bm{x}\in\mathbb{R}^{m}$,
\[
\bm{P}_{m}\bm{x} = (\bm{I}_{m}- \bm{\xi}_{m}\bm{\xi}_{m}^{T})\bm{x} = \bm{x} - \bm{\xi}_{m}(\bm{\xi}_{m}^{T} \bm{x}).
\]
This subtracts the component of the original vector $\bm{x}$ in the direction of $\bm{\xi}_{m}$, which geometrically corresponds exactly to projecting $\bm{x}$ onto $\bm{\xi}_{m}^{\perp}$.
Moreover, one can see that $\bm{P}_m$ is an $m\times m$ symmetric positive semidefinite matrix.
So, $\bm{P}_{m}$ admits a Cholesky decomposition $\bm{P}_{m} = \bm{L}\bm{L}^{T}$ where $\bm{L}$ is a lower triangular matrix with nonnegative diagonal entries.
\begin{lemma}\label{lemma:cholBm}
Let \( m \geq 2 \). Then \( \bm{P}_{m} = \bm{I}_{m} - \frac{1}{m} \bm{J}_{m} \in \mathbb{R}^{m \times m} \) admits a unique Cholesky factorization 
\[
\bm{P}_{m} = \bm{L} \bm{L}^T,
\]
where \( \bm{L} \in \mathbb{R}^{m \times m} \) is a lower triangular matrix with nonnegative diagonal entries, explicitly given by
\begin{equation}\label{eq:L_struct}
    L_{ij} =
    \begin{cases}
    \sqrt{\dfrac{m - i}{m - i + 1}}, 
    & \text{if } i = j < m,\\[1ex]
    -\dfrac{1}{\sqrt{(m - j + 1)(m - j)}}, 
    & \text{if } 1 \leq j < i \leq m,\\[1ex]
    0, 
    & \text{otherwise}.
    \end{cases}
\end{equation}
\end{lemma}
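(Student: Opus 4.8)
The plan is to handle the two assertions---the explicit formula for $\bm{L}$ and its uniqueness---separately, and to establish the formula by a direct multiplication that collapses to a telescoping sum. The structural part is immediate from \eqref{eq:L_struct}: the prescription makes $\bm{L}$ lower triangular, with diagonal entries $L_{ii} = \sqrt{(m-i)/(m-i+1)} > 0$ for $i < m$ and $L_{mm} = 0$, so $\bm{L}$ is lower triangular with nonnegative diagonal. For the identity $\bm{L}\bm{L}^T = \bm{P}_m = \bm{I}_m - \tfrac1m\bm{J}_m$, it suffices by symmetry to compute $(\bm{L}\bm{L}^T)_{ij} = \sum_{k=1}^{i} L_{ik}L_{jk}$ for $i \le j$ (the sum stops at $k = i$ since $\bm{L}$ is lower triangular). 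For each $k < i \le j$, both $L_{ik}$ and $L_{jk}$ are strictly-below-diagonal entries of column $k$, hence equal to $-1/\sqrt{(m-k+1)(m-k)}$, so $L_{ik}L_{jk} = \frac{1}{(m-k+1)(m-k)}$; the key point is that $\frac{1}{(m-k+1)(m-k)} = \frac{1}{m-k} - \frac{1}{m-k+1}$, so $\sum_{k=1}^{i-1} L_{ik}L_{jk}$ telescopes to $\frac{1}{m-i+1} - \frac1m$. The remaining $k = i$ term is $L_{ii}^2 = \frac{m-i}{m-i+1}$ when $i = j$ and $L_{ii}L_{ji} = -\frac{1}{m-i+1}$ when $i < j$ (using $L_{ji} = -1/\sqrt{(m-i+1)(m-i)}$ for $j > i$); the case $i = m$ needs no separate treatment, since then $L_{mm} = 0$ and $\frac{1}{m-i+1} = 1$. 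Summing the telescoped part and the boundary term, $(\bm{L}\bm{L}^T)_{ij}$ equals $1 - \frac1m$ if $i = j$ and $-\frac1m$ if $i \ne j$, which is exactly $(\bm{P}_m)_{ij}$.

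A more conceptual route avoids the casework entirely: for $1 \le j \le m-1$, column $j$ of $\bm{L}$ is $\frac{1}{\sqrt{(m-j+1)(m-j)}}\big((m-j)\bm{e}_j - \bm{e}_{j+1} - \cdots - \bm{e}_m\big)$, with $\bm{e}_1,\dots,\bm{e}_m$ the standard basis, and a short check shows these columns are orthonormal, each has entries summing to zero---hence is orthogonal to $\bm{\xi}_m$---and column $m$ vanishes; thus the columns of $\bm{L}$ form an orthonormal basis of $\bm{\xi}_m^{\perp}$, whence $\bm{L}\bm{L}^T = \bm{I}_m - \bm{\xi}_m\bm{\xi}_m^T = \bm{P}_m$. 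For uniqueness I would argue that the Cholesky recursion for $\bm{P}_m$ is forced at every step: the leading $k\times k$ principal submatrix is $\bm{I}_k - \tfrac1m\bm{J}_k$, with eigenvalues $1$ (multiplicity $k-1$) and $1 - k/m$, hence positive definite for $k \le m-1$, so the first $m-1$ pivots produced by the recursion are strictly positive, and the last pivot $(\bm{P}_m)_{mm} - \sum_{k<m} L_{mk}^2$ is nonnegative (in fact $0$, since $\bm{P}_m$ is positive semidefinite of rank $m-1$); consequently, solving the entrywise equations of $\bm{M}\bm{M}^T = \bm{P}_m$ in the natural order, every entry of a lower-triangular factor $\bm{M}$ with nonnegative diagonal is uniquely determined, so $\bm{M} = \bm{L}$.

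I expect the only real obstacle to be the index bookkeeping in the direct-multiplication approach---keeping the regimes $i = j$ versus $i < j$, and $i < m$ versus $i = m$, matched with the correct boundary term and telescoped sum; the telescoping identity itself is elementary and does all the work, and the orthonormal-columns argument sidesteps the bookkeeping altogether, so neither route is genuinely hard.
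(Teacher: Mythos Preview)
Your proposal is correct and considerably more thorough than the paper's own proof. The paper dispatches the lemma in a single sentence: it asserts uniqueness by noting that ``all leading principal minors of $\bm{P}_m$ are positive'' and citing a standard result, without ever verifying that the displayed $\bm{L}$ actually satisfies $\bm{L}\bm{L}^T = \bm{P}_m$. (Strictly speaking, the paper's claim is slightly off---the $m$-th leading principal minor vanishes since $\bm{P}_m$ is singular---but the first $m-1$ are positive, which is what the cited uniqueness theorem for positive semidefinite matrices actually requires, and which you state correctly.) Your direct telescoping computation and your orthonormal-columns argument both fill this gap cleanly; the latter is particularly neat, since recognizing the nonzero columns of $\bm{L}$ as an orthonormal basis of $\bm{\xi}_m^\perp$ immediately yields $\bm{L}\bm{L}^T = \bm{I}_m - \bm{\xi}_m\bm{\xi}_m^T$ without any index bookkeeping. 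Your uniqueness argument via the forced Cholesky recursion is essentially the content the paper invokes by citation, just spelled out explicitly.
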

\begin{proof}
    The uniqueness follows from the fact that all leading principal minors of \( \bm{P}_m \) are positive, ensuring the existence of a unique Cholesky factor with nonnegative diagonal entries~\cite{higham2002accuracy}.
\end{proof}

Since the Cholesky factor \( \bm{L} \) of \( \bm{P}_m \) is explicitly available, we can construct the matrix \( \bm{H}^T = \bm{L}\bm{Q} \) for a certain semi-orthogonal matrix \( \bm{Q} \in \mathcal{O}_{m,n} \). 
Using the relation in~\eqref{eq:h_w},
\[
\bm{w}_i = \bm{h}_i + \sqrt{\frac{1}{m}}\bm{\xi}_n
\qquad \text{for} \;i=1,\dots,m.
\]
In matrix form, it can be expressed as 
\begin{align*}
\bm{W}
= \bm{H}^T + \frac{1}{\sqrt{mn}}\bm{J}_{m\times n}.
\end{align*}
The following theorem provides the mathematical foundation for this decomposition and guarantees that all such matrices satisfy the defining conditions of \( \widetilde{\mathcal{O}}_{m,n} \).
\begin{theorem}\label{thm:generate W2}
Let \( m \geq 2 \). 
\begin{equation}\label{eq:LQ}
   \widetilde{\mathcal O}_{m,n}=\left\{\! \bm{L}\,\bm{Q}+\frac{1}{\sqrt{mn}}\bm{J}_{m\times n} \bigl| \bm{Q}\in \mathcal{O}_{m,n}, \bm{Q}^T\bm{e}_1=\bm{\xi}_n \! \right\},
\end{equation}
where \( \bm{L} \) is the $m\times m$ lower triangular matrix defined in Lemma~\ref{lemma:cholBm}.
\end{theorem}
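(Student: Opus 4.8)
The plan is to prove the set equality by two inclusions, using throughout the identities $\tfrac{1}{\sqrt{mn}}\bm{J}_{m\times n} = \bm{\xi}_m\bm{\xi}_n^T$ and $\bm{L}\bm{L}^T = \bm{P}_m = \bm{I}_m - \bm{\xi}_m\bm{\xi}_m^T$ (Lemma~\ref{lemma:cholBm}), together with three structural facts read off from the closed form \eqref{eq:L_struct}: the last column of $\bm{L}$ is zero, so $\ker\bm{L} = \operatorname{span}(\bm{e}_m)$; the leading $(m-1)\times(m-1)$ block $\bm{L}_{\mathrm{top}}$ is lower triangular with strictly positive diagonal, hence invertible; and the first $m-1$ column sums of $\bm{L}$ vanish — the last because $\bm{P}_m\bm{1}_m = \bm{0}$ forces $\bm{L}^T\bm{1}_m \in \ker\bm{L} = \operatorname{span}(\bm{e}_m)$.

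\textbf{The easy inclusion ($\supseteq$).} Given $\bm{W} = \bm{L}\bm{Q} + \bm{\xi}_m\bm{\xi}_n^T$ with $\bm{Q}\bm{Q}^T = \bm{I}_m$ and the prescribed all-ones row of $\bm{Q}$, the orthonormality of the rows of $\bm{Q}$ together with that constraint gives $\bm{Q}\bm{\xi}_n = \bm{e}_m \in \ker\bm{L}$, so $\bm{L}(\bm{Q}\bm{\xi}_n) = \bm{0}$. Expanding
\[
\bm{W}\bm{W}^T = \bm{L}\bm{Q}\bm{Q}^T\bm{L}^T + \bm{L}(\bm{Q}\bm{\xi}_n)\bm{\xi}_m^T + \bm{\xi}_m(\bm{Q}\bm{\xi}_n)^T\bm{L}^T + \bm{\xi}_m(\bm{\xi}_n^T\bm{\xi}_n)\bm{\xi}_m^T,
\]
the two cross terms vanish and the expression collapses to $\bm{L}\bm{L}^T + \bm{\xi}_m\bm{\xi}_m^T = \bm{I}_m$; hence $\bm{W}$ is semi-orthogonal. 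Likewise $\bm{W}\bm{\xi}_n = \bm{L}(\bm{Q}\bm{\xi}_n) + \bm{\xi}_m = \bm{\xi}_m$, so $\bm{W} \in \widetilde{\mathcal{O}}_{m,n}$ by Lemma~\ref{lem:meanshift}(ii).

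\textbf{The hard inclusion ($\subseteq$).} For $\bm{W} \in \widetilde{\mathcal{O}}_{m,n}$ with rows $\bm{w}_i^T$, set $\bm{h}_i = \bm{w}_i - \tfrac{1}{\sqrt m}\bm{\xi}_n$ as in \eqref{eq:h_w}, so that $\bm{W} = \bm{H}^T + \bm{\xi}_m\bm{\xi}_n^T$ with $\bm{H} = [\bm{h}_1\ \cdots\ \bm{h}_m]$ and $\bm{H}^T\bm{H} = \bm{P}_m = \bm{L}\bm{L}^T$ by Proposition~\ref{prop:2}. Since the last column of $\bm{L}$ vanishes, writing $\bm{H}^T = \bm{L}\bm{Q}$ amounts to solving $\bm{h}_i = \sum_{k=1}^{m-1} L_{ik}\bm{q}_k$ for $i = 1,\dots,m$. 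I would use the first $m-1$ of these, which involve the invertible block $\bm{L}_{\mathrm{top}}$, to \emph{define} $[\bm{q}_1\ \cdots\ \bm{q}_{m-1}] := [\bm{h}_1\ \cdots\ \bm{h}_{m-1}]\,\bm{L}_{\mathrm{top}}^{-T}$, set $\bm{q}_m := \bm{\xi}_n$, and then verify: (a) $\bm{q}_1,\dots,\bm{q}_{m-1}$ are orthonormal — the leading $(m-1)\times(m-1)$ block of $\bm{H}^T\bm{H} = \bm{L}\bm{L}^T$ equals $\bm{L}_{\mathrm{top}}\bm{L}_{\mathrm{top}}^T$, so their Gram matrix is $\bm{L}_{\mathrm{top}}^{-1}\bm{L}_{\mathrm{top}}\bm{L}_{\mathrm{top}}^T\bm{L}_{\mathrm{top}}^{-T} = \bm{I}_{m-1}$; and each $\bm{q}_k \in \operatorname{span}\{\bm{h}_1,\dots,\bm{h}_{m-1}\} \subset \bm{\xi}_n^{\perp}$, so $\{\bm{q}_1,\dots,\bm{q}_m\}$ is orthonormal and $\bm{Q} := [\bm{q}_1\ \cdots\ \bm{q}_m]^T \in \mathcal{O}_{m,n}$ carries the required all-ones row; (b) the last equation $\bm{h}_m = \sum_{k=1}^{m-1} L_{mk}\bm{q}_k$ comes for free, since $\sum_{i=1}^m\bm{h}_i = \bm{0}$ (from $\bm{W}^T\bm{\xi}_m = \bm{\xi}_n$, Lemma~\ref{lem:meanshift}(iii)) and the vanishing of the first $m-1$ column sums of $\bm{L}$ give $\bm{h}_m = -\sum_{i=1}^{m-1}\bm{h}_i = \sum_{k=1}^{m-1} L_{mk}\bm{q}_k$. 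Together, (a) and (b) yield $\bm{L}\bm{Q} = \bm{H}^T$, hence $\bm{W} = \bm{L}\bm{Q} + \tfrac{1}{\sqrt{mn}}\bm{J}_{m\times n}$ with $\bm{Q}$ admissible.

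\textbf{Main obstacle.} The delicate direction is $\subseteq$: because $\bm{L}$ is rank-deficient, $\bm{H}^T = \bm{L}\bm{Q}$ cannot be solved by simply inverting $\bm{L}$, and one must show that the $\bm{Q}$ recovered from only its first $m-1$ rows is globally consistent — genuinely semi-orthogonal, with $\bm{\xi}_n$ landing in the kernel-generating row, and reproducing the $m$-th equation $\bm{h}_m = \sum_k L_{mk}\bm{q}_k$ automatically. All of this rests on the fine structure of $\bm{L}$ in \eqref{eq:L_struct} (zero last column, invertible leading block, vanishing first $m-1$ column sums), so the remaining work is to extract these facts carefully and carry the bookkeeping of steps (a)–(b).
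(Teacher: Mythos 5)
Your proposal is correct, and its skeleton coincides with the paper's: prove $\supseteq$ by direct expansion of $\bm{W}\bm{W}^T$ using $\bm{L}\bm{L}^T=\bm{P}_m$ and $\bm{L}(\bm{Q}\bm{\xi}_n)=\bm{0}$, and prove $\subseteq$ by showing $\bm{M}=\bm{W}-\tfrac{1}{\sqrt{mn}}\bm{J}$ satisfies $\bm{M}\bm{M}^T=\bm{P}_m=\bm{L}\bm{L}^T$ and then extracting an admissible $\bm{Q}$. The one substantive difference is in how $\bm{Q}$ is produced in the forward direction: the paper invokes an external factorization theorem (Horn--Johnson, Thm.~7.3.11) to get some $\bm{Q}$ with $\bm{M}=\bm{L}\bm{Q}$ and then argues, somewhat loosely, that one may ``constrain the choice'' so that the last row becomes $\bm{\xi}_n$; you instead build $\bm{Q}$ explicitly via $[\bm{q}_1\cdots\bm{q}_{m-1}]=[\bm{h}_1\cdots\bm{h}_{m-1}]\bm{L}_{\mathrm{top}}^{-T}$, verify orthonormality from the Gram identity, and recover the $m$-th row equation from the vanishing column sums of $\bm{L}$. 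This is self-contained, makes the ``free'' replacement of the last row rigorous rather than conventional, and is arguably a cleaner argument than the paper's. One caveat you share with the paper: the theorem as stated pins the constraint to the \emph{first} row ($\bm{Q}^T\bm{e}_1=\bm{\xi}_n$), under which $\bm{Q}\bm{\xi}_n=\bm{e}_1$ and $\bm{L}\bm{e}_1\neq\bm{0}$, so both your argument and the paper's silently use the $m$-th-row convention $\bm{Q}^T\bm{e}_m=\bm{\xi}_n$ (consistent with Algorithm~2 and with the zero \emph{last} column of $\bm{L}$); this appears to be a typo in the statement rather than a flaw in your proof, but it should be stated explicitly.
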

\begin{proof}
    First, we prove the reverse inclusion ($\supseteq$). 
    Let $\bm{W}$ be an arbitrary matrix from the set on the right-hand side, such that $\bm{W} = \bm{L}\bm{Q} + \frac{1}{\sqrt{mn}}\bm{J}_{m\times n}$, where $\bm{Q} \in \mathcal{O}_{m,n}$ is a semi-orthogonal matrix whose \(m\)-th row is $\bm{\xi}_n^T$.
    By Lemma~\ref{lem:meanshift} it is enough to show that $\bm{W}\bm{\xi}_n = \bm{\xi}_m$ and $\bm{W} \bm{W}^T = \bm{I}_m$.
    Since the rows of $\bm{Q}$ are orthonormal and its \(m\)-th row is $\bm{\xi}_n^T$, the vector $\bm{Q}\bm{\xi}_n$ is equal to $\bm{e}_m = [0, \dots, 0, 1]^T$. Then it follows that 
    \begin{align*}
        \bm{W}\bm{\xi}_n 
        &=\bm{L}(\bm{Q}\bm{\xi}_n) + \frac{1}{\sqrt{mn}}\bm{J}_{m\times n}\, \bm{\xi}_n\\
        &= \bm{L}\bm{e}_m + \bm{\xi}_m = \bm{0} + \bm{\xi}_m = \bm{\xi}_m.
    \end{align*}
    Next, to show that $\bm{W}$ is semi-orthogonal, we compute the product $\bm{W}\bm{W}^T$. 
    Since $\bm{L}\bm{Q}\bm{J}^T = \sqrt{n}\bm{L}(\bm{Q}\bm{\xi}_n)\bm{1}_m^T = \sqrt{n}\bm{L}\bm{e}_m\bm{1}_m^T = \bm{0}$,
    \begin{align*}
        \bm{W}\bm{W}^T
        & = \left(\bm{L}\bm{Q} + \frac{1}{\sqrt{mn}}\bm{J}_{m\times n}\right) \left(\bm{Q}^T\bm{L}^T + \frac{1}{\sqrt{mn}}\bm{J}_{m\times n}^T\right)\\
        & = \bm{L}\bm{Q}\bm{Q}^T\bm{L}^T + \frac{1}{mn}\bm{J}_{m\times n}\bm{J}_{m\times n}^T\\
        & = \left(\bm{I}_m - \frac{1}{m}\bm{J}_m \right) + \frac{1}{mn}(n\bm{J}_m) = \bm{I}_m.
    \end{align*}
    Having shown that $\bm{W}$ is a semi-orthogonal matrix satisfying the conditions of Lemma~\ref{lem:meanshift}, it implies that $\bm{W}\in \widetilde{\mathcal{O}}_{m,n}$.

    Next, we prove the forward inclusion ($\subseteq$).
    Let $\bm{W}$ be an arbitrary matrix in $\widetilde{\mathcal{O}}_{m,n}$.
    Define the matrix $\bm{M} = \bm{W} - \frac{1}{\sqrt{mn}}\bm{J}_{m\times n}$.
    By using the properties of $\bm{W}$ from Lemma~\ref{lem:meanshift}, we compute the product $\bm{M}\bm{M}^T$ as following:
    \begin{align*}
        \bm{M}\bm{M}^T 
        & = \left(\bm{W} - \frac{1}{\sqrt{mn}}\bm{J}_{m\times n}\right)\left(\bm{W}^T - \frac{1}{\sqrt{mn}}\bm{J}_{m\times n}^T\right)\\
        & = \bm{I}_m - \frac{1}{m}\bm{J}_m = \bm{P}_m.
    \end{align*}
    From Lemma~\ref{lemma:cholBm}, we have $\bm{M}\bm{M}^T = \bm{P}_m = \bm{L}\bm{L}^T$.
    It follows from \cite[Theorem~7.3.11]{horn2012matrix} that there exists a semi-orthogonal matrix $\bm{Q} \in \mathcal{O}_{m,n}$ such that $\bm{M} = \bm{L}\bm{Q}$.
    To determine the properties of $\bm{Q}$, we use the property from Lemma~\ref{lem:meanshift} that $\bm{W}\bm{\xi}_n = \bm{\xi}_m$, which implies $\bm{M}\bm{\xi}_n = \bm{W}\bm{\xi}_n - \frac{1}{\sqrt{mn}}\bm{J}\bm{\xi}_n = \bm{\xi}_m - \bm{\xi}_m = \bm{0}$.
    Substituting the factorization gives $\bm{L}(\bm{Q}\bm{\xi}_n) = \bm{0}$.
    From Lemma~\ref{lemma:cholBm}, the last column of $\bm{L}$ is zero. Thus, the product $\bm{M} = \bm{L}\bm{Q}$ implies that $\bm{M}$ is determined solely by the first $m-1$ rows of $\bm{Q}$.
    The property $\bm{W} \in \widetilde{\mathcal{O}}_{m,n}$ requires $\bm{M}\bm{\xi}_{n}=\bm{0}$.
    This yields a system of linear equations for the terms $x_j = \bm{q}_j^{T}\bm{\xi}_n$ for $j=1,2,\ldots,m-1$.
    Since the first $m-1$ columns of $\bm{L}$ are linearly independent, $\bm{q}_j^{T}\bm{\xi}_n = 0$ for all $j=1,2,\ldots,m-1$.
    For $\bm{q}_{m}^{T}$, the last row of $\bm{Q}$, it must be a unit vector orthogonal to $\{\bm{q}_{1}^{T} ,\ldots,\bm{q}_{m-1}^{T} \}$.
    For the decomposition in \eqref{eq:LQ} to be a unique characterization, we constrain the choice of the factorization such that $\bm{q}_{m}^{T}$ is selected from the remaining 1-dimensional space spanned by $\bm{\xi}_n$.
    This leads to $\bm{q}_{m}^{T} = \pm \bm{\xi}_n$.
    By convention, we choose the positive sign for the set.
    Thus, any $\bm{W} \in \widetilde{\mathcal{O}}_{m,n}$ can be decomposed into the desired form.
\end{proof}

The following example with \( m = 2 \), \( n = 3 \) demonstrates the construction of a matrix in \( \widetilde{\mathcal{O}}_{m,n} \) as defined in~\eqref{eq:LQ}.
\begin{example}
Let $m = 2$, $n = 3$. Then, by Lemma~\ref{lemma:cholBm}, the Cholesky factor is given as
\begin{equation*}
    \bm{L} = 
\begin{bmatrix}
\frac{1}{\sqrt{2}} & 0 \\
 -\frac{1}{\sqrt{2}} & 0
\end{bmatrix}.
\end{equation*}
Consider the semi-orthogonal matrix  
\begin{equation*}
    \bm{Q} = 
\begin{bmatrix}
\frac{\sqrt{3}}{6}  & \frac{\sqrt{3}}{6} &  -\frac{2\sqrt{3}}{6} \\
 \frac{1}{\sqrt{3}}&  \frac{1}{\sqrt{3}} & \frac{1}{\sqrt{3}}
\end{bmatrix} \in \mathcal{O}_{m,n}.
\end{equation*}
Note that the last row of  $\bm{Q}$ is $\bm{\xi}^{T}_{n}$.
Then, $\bm{L}\,\bm{Q}+\frac{1}{\sqrt{mn}}\bm{J}_{m\times n}$ is exactly the same as the matrix in Example~\ref{example:2by3}, which belongs to $ \widetilde{\mathcal O}_{m,n}$.
\end{example}

\subsubsection*{Computational Complexity}
While both Algorithm~\ref{algorithm} and Algorithm~\ref{alg:algorithm2} provide methods for generating a matrix $\bm{W} \in \widetilde{\mathcal{O}}_{m,n}$, the approach in Algorithm~\ref{alg:algorithm2}, derived from Theorem~\ref{thm:generate W2}, is significantly more efficient.
Algorithm~\ref{algorithm} requires two separate QR decompositions: one for an $m \times m$ matrix and another for an $n \times m$ matrix, resulting in a computational complexity of about $O(m^3 + nm^2)$. 
In contrast, Algorithm~\ref{alg:algorithm2} first computes the fixed lower-triangular matrix $\bm{L}$ using a direct closed-form expression from Lemma~\ref{lemma:cholBm}, which costs only $O(m^2)$. It then performs a single QR decomposition on an $n \times m$ matrix to build the semi-orthogonal matrix $\bm{Q}$. 
This approach reduces the total complexity to $O(m^2 + nm^2)$, removing the $O(m^3)$ term entirely. 
When the ratio $m/n$ is close to $1$, this eliminated cost becomes a substantial portion of the total computation, potentially cutting the initialization time in half.

\section{Statistical properties}

In this section, the statistical characteristics of matrices in \( \widetilde{\mathcal{O}}_{m,n} \) are examined, together with their implications for signal propagation in ReLU‑based neural networks.  
The discussion begins with the linear transformation \( f(\bm{x}) = \bm{W}\bm{x} \) for a weight matrix \( \bm{W} \in \widetilde{\mathcal{O}}_{m,n} \), with emphasis on its influence over key input-output statistics-most notably covariance structure and distributional behavior in high‑dimensional regimes.
The scope is then extended to multi‑layer ReLU networks. A heuristic mean-field framework~\cite{pennington2017resurrecting,yang2017mean} is adopted to trace the evolution of activation statistics across layers.

\subsection{Linear Transform Behavior}
Before delving into the implications of the proposed initialization scheme for ReLU-based neural networks, we first examine a fundamental aspect of forward propagation: the statistical behavior of the linear transformation \( f(\bm{x})= \bm{W} \bm{x} \) for the proposed weight matrix $\bm{W} \in \widetilde{\mathcal{O}}_{m,n}$. We aim to characterize the statistical properties of \( \bm{W} \) and its effect on the image of $\bm{W}$, as it plays a central role in shaping the dynamics of signal propagation through fully connected layers. 

We denote by \( \mathbb{E}[\bm{x}] \) the expectation of the random vector \( \bm{x} \), and by \( \mathrm{Cov}[\bm{x}] \) its covariance matrix (for a scalar random variable \(X\), \(\mathrm{Cov}[X]\) simply becomes \(\mathrm{Var}[X]\)). We write $X \stackrel{d}{=} Y$ to denote that $X$ and $Y$ are equal in distribution.

\begin{proposition}\label{pro:clt}
Let $\bm{W} \in \widetilde{\mathcal{O}}_{m,n}$ be given.
For a random vector $\bm{x} \in \mathbb{R}^n$ with $\mathbb{E}[\bm{x}]=\mu \bm{1}_n$, $\operatorname{Cov}[\bm{x}]=\sigma^2\bm{I}_n,$ ($\mu\in \mathbb{R}$, $\sigma>0$),
it holds that
\[
\mathbb{E}[\bm{W}\bm{x}] = \mu \sqrt{\frac{n}{m}}\bm{1}_m ,\quad \operatorname{Cov}[\bm{W}\bm{x}] = \sigma^2\bm{I}_m.
\]
\end{proposition}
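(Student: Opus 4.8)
The plan is to compute the two quantities directly, using linearity of expectation, the affine transformation rule for covariance, and the two structural facts about $\widetilde{\mathcal{O}}_{m,n}$ already available: the identity $\bm{W}\bm{\xi}_n = \bm{\xi}_m$ from Lemma~\ref{lem:meanshift}(ii), and the semi-orthogonality relation $\bm{W}\bm{W}^T = \bm{I}_m$, valid under the standing convention $m \le n$.

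First I would handle the mean. Since expectation is linear and $\bm{W}$ is a fixed (non-random) matrix, $\mathbb{E}[\bm{W}\bm{x}] = \bm{W}\,\mathbb{E}[\bm{x}] = \mu\,\bm{W}\bm{1}_n$. Writing $\bm{1}_n = \sqrt{n}\,\bm{\xi}_n$ and invoking Lemma~\ref{lem:meanshift}(ii) gives $\bm{W}\bm{1}_n = \sqrt{n}\,\bm{W}\bm{\xi}_n = \sqrt{n}\,\bm{\xi}_m = \sqrt{n/m}\,\bm{1}_m$, so that $\mathbb{E}[\bm{W}\bm{x}] = \mu\sqrt{n/m}\,\bm{1}_m$, as claimed.

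Next I would handle the covariance. Applying the standard rule $\mathrm{Cov}[\bm{A}\bm{x}] = \bm{A}\,\mathrm{Cov}[\bm{x}]\,\bm{A}^T$ for a deterministic matrix $\bm{A}$, we obtain $\mathrm{Cov}[\bm{W}\bm{x}] = \bm{W}\,(\sigma^2\bm{I}_n)\,\bm{W}^T = \sigma^2\,\bm{W}\bm{W}^T$. Since $m \le n$, the matrix $\bm{W} \in \widetilde{\mathcal{O}}_{m,n} \subset \mathcal{O}_{m,n}$ has orthonormal rows, hence $\bm{W}\bm{W}^T = \bm{I}_m$ and therefore $\mathrm{Cov}[\bm{W}\bm{x}] = \sigma^2\bm{I}_m$.

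There is no genuine obstacle here: the statement follows immediately once the right facts are assembled. The only points worth stating carefully are that the identity $\bm{W}\bm{W}^T = \bm{I}_m$ relies on the convention $m \le n$ adopted throughout, and that Lemma~\ref{lem:meanshift}(ii) applies precisely because $\bm{W} \in \widetilde{\mathcal{O}}_{m,n}$ rather than merely being semi-orthogonal. I would close by remarking that the result shows the linear map rescales the common mean by the factor $\sqrt{n/m} \ge 1$ while leaving the isotropic covariance unchanged, a structural feature the subsequent mean-field analysis will exploit.
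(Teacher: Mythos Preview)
Your proposal is correct and follows essentially the same approach as the paper: linearity of expectation together with Lemma~\ref{lem:meanshift}(ii) for the mean, and the affine covariance rule together with $\bm{W}\bm{W}^T=\bm{I}_m$ for the covariance. The only cosmetic difference is that the paper decomposes $\bm{x}=(\bm{x}-\mu\bm{1}_n)+\mu\bm{1}_n$ before taking expectations, whereas you apply $\mathbb{E}[\bm{W}\bm{x}]=\bm{W}\mathbb{E}[\bm{x}]$ directly; both arrive at $\mu\bm{W}\bm{1}_n$ and finish the same way.
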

\begin{proof}
By the linearity of expectation and the decomposition \( \bm{x} = (\bm{x} - \mu \bm{1}_n) + \mu \bm{1}_n \), one can obtain that
\begin{align*}
  \mathbb{E}\left[\bm{W} \bm{x}\right]
&=\mathbb{E}\left[\bm{W} \left(\bm{x}-\mu\bm{1}_n+\mu\bm{1}_n \right) \right]\\
&=\mathbb{E}\left[\bm{W} \left(\bm{x}-\mu\bm{1}_n\right)+ \mu\bm{W}\bm{1}_n  \right]\\
&=\bm{W}\mathbb{E}\left[ \bm{x}-\mu\bm{1}_n \right] +\mu \mathbb{E}\left[\bm{W}\bm{1}_n  \right]\\
&=\mu \bm{W}\bm{1}_n  \\
&=\mu \sqrt{\frac{n}{m}}\bm{1}_m.
\end{align*}
The last equality holds from Lemma~\ref{lem:meanshift}. 
And by the linearity of expectation, it holds that 
\begin{align*}
\mathrm{Cov}[\bm{W}\bm{x}] 
&= \mathbb{E}\left[ (\bm{W} \bm{x} - \bm{W} \mathbb{E}[\bm{x}]) (\bm{W} \bm{x} - \bm{W} \mathbb{E}[\bm{x}])^T \right] \\
&= \mathbb{E}\left[ \bm{W} (\bm{x} - \mathbb{E}[\bm{x}]) (\bm{x} - \mathbb{E}[\bm{x}])^T \bm{W}^T \right] \\
&= \bm{W} \, \mathbb{E} \left[ (\bm{x} - \mathbb{E}[\bm{x}]) (\bm{x} - \mathbb{E}[\bm{x}])^T \right] \bm{W}^T \\
&= \bm{W} \, \mathrm{Cov}[\bm{x}] \, \bm{W}^T\\
&= \sigma^2\bm{I}_m.
\end{align*}
Since $\bm{W}$ is a semi-orthogonal matrix, the last equality holds.
\end{proof}

Proposition~\ref{pro:clt} shows that when the input vector \( \bm{x} \in \mathbb{R}^n \) has a constant mean \( \mathbb{E}[\bm{x}] = \mu \bm{1}_n \) and isotropic covariance \( \mathrm{Cov}[\bm{x}] = \sigma^2 \bm{I}_n \), the output \(\bm{W} \bm{x} \) under a semi-orthogonal transformation \( \bm{W} \in \widetilde{\mathcal{O}}_{m,n} \) also has an explicit mean and covariance:
\[
\mathbb{E}[\bm{W} \bm{x}] = \mu \sqrt{\frac{n}{m}} \bm{1}_m, \quad \mathrm{Cov}[\bm{W} \bm{x}] = \sigma^2 \bm{I}_m.
\]
This result indicates that the transformation preserves isotropy and uniformly rescales the mean, which is beneficial for stabilizing signal propagation in deep networks.

If we further assume that \( \bm{x} \) follows a multivariate normal distribution \( \mathcal{N}(\mu \bm{1}_n, \sigma^2 \bm{I}_n) \), then the transformed vector \(\bm{W} \bm{x} \) also follows a multivariate normal distribution with the same mean and covariance as derived above, due to the affine invariance of Gaussian distributions. However, this Gaussian assumption may not hold in real-world data. Empirical evidence shows that many practical datasets exhibit significant departures from normality, such as skewness, heavy tails, or outliers~\cite{blanca2013skewness, raymaekers2024transforming}. Consequently, the exact Gaussian form of the output \( \bm{y} \) is not guaranteed in general.

Let \( \mathbb{P}(\mathcal{E}) \) denote the probability of an event \( \mathcal{E} \) under a given probability space. 
Recall that the cumulative distribution function (CDF) of the standard normal distribution is defined by
\[
\Phi(z) := \int_{-\infty}^{z} \frac{1}{\sqrt{2\pi}} \exp\left(-\frac{t^2}{2}\right) \, dt, \quad z \in \mathbb{R}.
\]
We write \( X_n \overset{d}{\longrightarrow} X \) to denote convergence in distribution of a sequence of random variables \( \{X_n\} \) to a random variable \( X \), that is,
\[
X_n \overset{d}{\longrightarrow} X \quad \text{if and only if} \quad \lim_{n \to \infty} \mathbb{P}(X_n \le z) = \mathbb{P}(X \le z)
\]
for all \( z \in \mathbb{R} \) at which the cumulative distribution function of \( X \) is continuous.

And we write \( X_n \overset{p}{\longrightarrow} X \) to denote convergence in probability, for $\epsilon > 0$,
\[
X_n \overset{p}{\longrightarrow} X \quad \text{if and only if} \quad \lim_{n \to \infty} \mathbb{P}(|X_n - X| > \epsilon) = 0.
\]

Recall that the Stiefel manifold is defined as $\mathcal{O}_{n,m}=\{ \bm{Q} \in \mathbb{R}^{n\times m} | \; \bm{Q}^{T}\bm{Q} =\bm{I}_m\}$.
The elements of the Stiefel manifold are sometimes called \(m\)--frames in $\mathbb{R}^{n}$.
Let $\bm{A} \in \mathbb{R}^{n \times m}$ be a standard Gaussian random matrix. 
We perform the \emph{QR decomposition} of $\bm{A}$, that is, $\bm{A} = \bm{Q}\bm{R}$, where 
$\bm{Q} \in O_{n,m}$ lies on the Stiefel manifold and 
$\bm{R} \in \mathbb{R}^{m \times m}$ is an upper triangular matrix with nonnegative diagonal entries. 
It is well known that, under this construction, $\bm{Q}$ is uniformly distributed on the Stiefel manifold $\mathcal{O}_{n,m}$~\cite{tropp2012comparison}.

Using the result of~\cite{jiang2005maxima}, the following lemma can be obtained.

\begin{lemma}\label{lem:constant}
    Let $\widehat {\bm Q}=[\widehat{Q}_{ij}] \in \mathcal{O}_{n,m-1}$ be a uniformly distributed random $(m-1)$--frame in the subspace $\bm{\xi}_n^{\perp}$. 
    Then for any \( C > 2 \),
    \begin{equation}\label{eq:Q_prob}
        \mathbb{P} \left( \max_{\substack{1\le i \le n \\ 1 \le j \le m-1}} \left|\widehat Q_{ij}\right|\geq C\sqrt{\frac{\log n}{n}} \right)\rightarrow 0
    \end{equation}
    as \(n\rightarrow \infty\).
\end{lemma}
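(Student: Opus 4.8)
The plan is to reduce the claim about a uniform $(m-1)$-frame in the hyperplane $\bm{\xi}_n^\perp$ to the analogous statement about a uniform $(m-1)$-frame in the full space $\mathbb{R}^n$, and then to invoke the maximal-entry estimate of Jiang~\cite{jiang2005maxima}. First I would set up a coordinate change: fix any orthogonal matrix $\bm{O}\in\mathcal{O}_{n,n}$ whose first column is $\bm{\xi}_n$ (for instance the $\bm{U}$ produced by Algorithm~\ref{algorithm}), so that $\bm{O}$ maps $\mathrm{span}(\bm{e}_1)^\perp=\bm{e}_1^\perp$ isometrically onto $\bm{\xi}_n^\perp$. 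If $\widetilde{\bm{Q}}\in\mathcal{O}_{n-1,m-1}$ is Haar-uniform on the Stiefel manifold of $(m-1)$-frames in $\mathbb{R}^{n-1}$, then embedding it as $\begin{bmatrix}\bm{0}\\ \widetilde{\bm{Q}}\end{bmatrix}$ and applying $\bm{O}$ yields a uniform $(m-1)$-frame in $\bm{\xi}_n^\perp$; by rotational invariance of the Haar measure this has the same law as $\widehat{\bm{Q}}$. Thus it suffices to bound $\max_{ij}|(\bm{O}\,[\bm{0};\widetilde{\bm{Q}}])_{ij}|$.

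Next I would control the effect of multiplying by $\bm{O}$. Each entry $(\bm{O}\,[\bm{0};\widetilde{\bm{Q}}])_{ij}$ is an inner product of a (fixed, unit-norm) row of $\bm{O}$ with a column of $[\bm{0};\widetilde{\bm{Q}}]$, i.e. with $(n-1)$ coordinates of a uniform point on $\mathbb{S}^{m-1}$-worth of structure; but the cleanest route is to observe that $[\bm{0};\widetilde{\bm{Q}}]$ itself has columns that are uniform unit vectors in an $(n-1)$-dimensional subspace, and $\bm{O}$ is a fixed isometry, so $\bm{O}\,[\bm{0};\widetilde{\bm{Q}}]$ is again a uniform $(m-1)$-frame in the subspace $\bm{\xi}_n^\perp$ — which is just the defining property of $\widehat{\bm{Q}}$, closing the reduction without any entrywise estimate. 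Alternatively, and this is the version I would actually write, I would note that a uniform $(m-1)$-frame in $\bm{\xi}_n^\perp\subset\mathbb{R}^n$ has entries stochastically dominated (up to the fixed rotation, which only mixes coordinates of comparable size) by those of a uniform $(m-1)$-frame in $\mathbb{R}^n$, and then quote the result of Jiang~\cite{jiang2005maxima}: for a Haar-uniform $\bm{Q}\in\mathcal{O}_{n,p_n}$ with $p_n\to\infty$ slowly (here $p_n=m-1$ is fixed, which is the easy regime), $\max_{i,j}|Q_{ij}|=O_p\!\bigl(\sqrt{\log n/n}\bigr)$; more precisely $\sqrt{n/\log n}\,\max_{i,j}|Q_{ij}|\to 2$ in probability when $m-1$ is fixed, so for any $C>2$ the probability in~\eqref{eq:Q_prob} tends to $0$.

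The main obstacle is making the reduction from the hyperplane $\bm{\xi}_n^\perp$ to $\mathbb{R}^n$ genuinely rigorous at the level of the maximum norm: the isometry $\bm{O}$ carrying $\bm{e}_1^\perp$ onto $\bm{\xi}_n^\perp$ is not coordinate-sparse, so an individual entry of $\widehat{\bm{Q}}$ is a linear combination of $n-1$ entries of an $(m-1)$-frame in $\mathbb{R}^{n-1}$, and one must argue that this does not inflate the maximum beyond the $\sqrt{\log n/n}$ scale. The way I would handle this is to apply Jiang's theorem directly in the $(n-1)$-dimensional space to get $\max_{i,j}|\widetilde{Q}_{ij}|\le C'\sqrt{\log(n-1)/(n-1)}$ with probability $\to 1$ for any $C'>2$, and then use that a unit row of $\bm{O}$ dotted with a vector of $\ell_2$-norm $1$ whose entries are all $\le C'\sqrt{\log n/n}$ still need not be that small — so instead I would invoke the rotational invariance argument of the preceding paragraph, under which $\bm{O}\,[\bm{0};\widetilde{\bm{Q}}]$ \emph{is} (in distribution) a uniform frame in the $(n-1)$-dimensional subspace $\bm{\xi}_n^\perp$, and then apply Jiang's result as stated for frames in an $(n-1)$-dimensional ambient space, noting $\log(n-1)/(n-1)\sim\log n/n$ absorbs into the constant $C>2$. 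This keeps everything at the level of "quote~\cite{jiang2005maxima} in the right dimension" and avoids any delicate entrywise manipulation.
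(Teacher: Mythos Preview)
Your proposal has a genuine gap at precisely the point you yourself flag as the ``main obstacle''. You correctly set up $\widehat{\bm Q}\stackrel{d}{=}\bm O\,[\bm 0;\widetilde{\bm Q}]$ with $\widetilde{\bm Q}$ Haar-uniform on $\mathcal O_{n-1,m-1}$, and you correctly note that $\bm O$ is not coordinate-sparse, so a bound on $\max_{i,j}|\widetilde Q_{ij}|$ does not transfer to $\max_{i,j}|\widehat Q_{ij}|$. But your resolution --- ``apply Jiang's result as stated for frames in an $(n-1)$-dimensional ambient space'' --- does not close the gap: applying Jiang in $\bm\xi_n^\perp$ means choosing an orthonormal basis there and bounding the coordinates in that basis, which returns exactly the entries of $\widetilde{\bm Q}$, not those of $\widehat{\bm Q}$ in the standard basis of $\mathbb R^n$. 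You have restated the distributional identity $\widehat{\bm Q}\stackrel{d}{=}\bm O[\bm 0;\widetilde{\bm Q}]$ without addressing the passage through $\bm O$; the earlier ``stochastic domination (up to the fixed rotation)'' remark is likewise an assertion, not an argument.

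The clean fix avoids the rotation altogether and works entrywise. Since $\widehat{\bm q}_j\in\bm\xi_n^\perp$, one has $\widehat Q_{ij}=\langle\bm e_i,\widehat{\bm q}_j\rangle=\langle\bm P\bm e_i,\widehat{\bm q}_j\rangle$ with $\bm P=\bm I_n-\bm\xi_n\bm\xi_n^{T}$ and $\|\bm P\bm e_i\|=\sqrt{1-1/n}$. By rotational invariance of the uniform measure on the unit sphere of $\bm\xi_n^\perp\cong\mathbb R^{n-1}$, the marginal law of $\widehat Q_{ij}/\sqrt{1-1/n}$ is that of a single coordinate of a uniform point on $\mathbb S^{n-2}$, for which the standard concentration bound gives $\mathbb P(|u_1|>t)\le 2\exp\bigl(-(n-2)t^2/2\bigr)$. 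A union bound over the $n(m-1)$ entries then yields
\[
\mathbb P\Bigl(\max_{i,j}|\widehat Q_{ij}|\ge C\sqrt{\tfrac{\log n}{n}}\Bigr)\;\le\;2(m-1)\,n^{1-C^2/2+o(1)}\;\longrightarrow\;0
\]
for any $C>\sqrt 2$, hence a fortiori for $C>2$. For comparison, the paper itself does not spell out this reduction either; it records only the one-line attribution ``using the result of~\cite{jiang2005maxima}, the following lemma can be obtained'', so the step you worried about is left implicit there as well.
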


Note that $\bm Q= [\widehat{\bm Q}\; \xi_n]^{T} \in \mathcal{O}_{m,n}$, where $\widehat{\bm Q}$ in Lemma~\ref{lem:constant} coincides with the matrix $\bm Q$ appearing in line~11 of Algorithm~\ref{alg:algorithm2}.
Using Lemma~\ref{lem:constant}, the following theorem holds.

\begin{theorem}\label{lem: big_O}
    Let \( \bm{W} = [W_{ij}]\) be a random matrix as \( \bm{W} = \bm{L}\bm{Q} +  \frac{1}{\sqrt{mn}}\bm{J}_{m\times n} \), where \( \bm{L} \) is the fixed matrix from Lemma~\ref{lemma:cholBm} and $\bm Q=[\widehat{\bm Q}\; \xi_n]^{T} \in \mathcal{O}_{m,n}$ where $\widehat{\bm Q} $ is a uniformly distributed random $(m-1)$--frame in the subspace $\bm{\xi}_n^{\perp}$.
    Then for any \( C  > 2 \) and fixed \(m\),
    \begin{equation}\label{eq:W_prob}
        \mathbb{P} \left( \max_{\substack{1\le i \le m \\ 1 \le j \le n}} \left|W_{ij}\right|\geq C\sqrt{\frac{\log n}{n}} + \frac{1}{\sqrt{mn}} \right)\rightarrow 0
    \end{equation}
    as \(n\rightarrow \infty\).
\end{theorem}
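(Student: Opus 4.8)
The plan is to peel off the deterministic rank-one term $\frac{1}{\sqrt{mn}}\bm{J}_{m\times n}$ and reduce the random part $\bm{L}\bm{Q}$ to the entrywise maximum of a single Haar-distributed frame, so that Lemma~\ref{lem:constant} applies directly.

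First I would write $W_{ij} = (\bm{L}\bm{Q})_{ij} + \frac{1}{\sqrt{mn}}$, giving the entrywise bound $|W_{ij}| \le |(\bm{L}\bm{Q})_{ij}| + \frac{1}{\sqrt{mn}}$. Hence the event in \eqref{eq:W_prob} is contained in $\{\max_{i,j}|(\bm{L}\bm{Q})_{ij}| \ge C\sqrt{\log n/n}\}$, and it suffices to bound the probability of the latter (note in passing that $\frac{1}{\sqrt{mn}}$ is of smaller order than $\sqrt{\log n/n}$, so it plays no essential role). Next, using Lemma~\ref{lemma:cholBm}, the last column of $\bm{L}$ is zero; combined with $\bm{Q}=[\widehat{\bm{Q}}\;\bm{\xi}_n]^T$ this annihilates the contribution of the last row $\bm{\xi}_n^T$ of $\bm{Q}$, so that $(\bm{L}\bm{Q})_{ij} = \sum_{k=1}^{m-1} L_{ik}\widehat{Q}_{jk} = (\widehat{\bm{Q}}\bm{\ell}_i)_j$ with $\bm{\ell}_i := (L_{i1},\dots,L_{i,m-1})^T \in \mathbb{R}^{m-1}$. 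In other words, the transpose of the $i$-th row of $\bm{L}\bm{Q}$ is exactly $\widehat{\bm{Q}}\bm{\ell}_i$.

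I would then record the identity $\|\bm{\ell}_i\|_2^2 = \sum_{k=1}^{m-1} L_{ik}^2 = (\bm{L}\bm{L}^T)_{ii} = (\bm{P}_m)_{ii} = \frac{m-1}{m}$ for every $i$ (the truncation to the first $m-1$ coordinates discards nothing, since row $i$ of $\bm{L}$ has no nonzero entry in column $m$, and $\bm{L}\bm{L}^T = \bm{P}_m = \bm{I}_m - \frac1m\bm{J}_m$ by Lemma~\ref{lemma:cholBm}). Writing $\bm{u}_i := \bm{\ell}_i/\|\bm{\ell}_i\|_2$ (well defined since $m\ge 2$), the $i$-th row of $\bm{L}\bm{Q}$ equals $\sqrt{(m-1)/m}\,\widehat{\bm{Q}}\bm{u}_i$, with $\sqrt{(m-1)/m}<1$. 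The crucial point is that, for each $i$, $\widehat{\bm{Q}}\bm{u}_i$ is the first column of a uniformly distributed $(m-1)$-frame in $\bm{\xi}_n^\perp$: fixing a deterministic $\bm{O}_i\in\mathcal{O}_{m-1,m-1}$ whose first column is $\bm{u}_i$, right-invariance of the Haar measure on $\mathcal{O}_{n,m-1}$ makes $\widehat{\bm{Q}}\bm{O}_i$ again a uniform $(m-1)$-frame, and $\widehat{\bm{Q}}\bm{u}_i$ is its first column. Therefore $\max_j|(\widehat{\bm{Q}}\bm{u}_i)_j| \le \max_{j,k}|(\widehat{\bm{Q}}\bm{O}_i)_{jk}|$, and Lemma~\ref{lem:constant} shows this is below $C\sqrt{\log n/n}$ with probability tending to $1$ for every $C>2$. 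A union bound over the $m$ (fixed) rows then gives $\mathbb{P}\big(\max_{i,j}|(\bm{L}\bm{Q})_{ij}| \ge \sqrt{(m-1)/m}\,C\sqrt{\log n/n}\big)\to 0$, and since $\sqrt{(m-1)/m}<1$ this yields $\mathbb{P}\big(\max_{i,j}|(\bm{L}\bm{Q})_{ij}| \ge C\sqrt{\log n/n}\big)\to 0$; combined with the first reduction, \eqref{eq:W_prob} follows.

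The main obstacle I anticipate is the reduction in the previous paragraph: making precise that each row of $\bm{L}\bm{Q}$ is a scalar multiple of a Haar-distributed unit vector of $\bm{\xi}_n^\perp$ and that this object is exactly the one controlled by Lemma~\ref{lem:constant} — this is where invariance of the Haar measure under right multiplication by $\mathcal{O}_{m-1,m-1}$ is needed. A cruder route, bounding $|(\bm{L}\bm{Q})_{ij}|\le\big(\sum_{k=1}^{m-1}|L_{ik}|\big)\max_{j,k}|\widehat{Q}_{jk}|$ and then invoking Lemma~\ref{lem:constant}, also works but introduces an $m$-dependent multiplicative constant $\sum_k|L_{ik}|$ (which exceeds $1$ once $m\ge 3$) and therefore does not reproduce the sharp threshold $C>2$ of the statement; the sphere-vector argument above is what keeps the constant at its correct value. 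Everything else — the norm identity $\|\bm{\ell}_i\|_2^2=(m-1)/m$, the union bound over a fixed number of rows, and the negligibility of the $\frac{1}{\sqrt{mn}}$ term — is routine.
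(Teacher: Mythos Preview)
Your proposal is correct and follows essentially the same route as the paper: peel off the constant $\frac{1}{\sqrt{mn}}$, use that the last column of $\bm{L}$ vanishes so that $(\bm{L}\bm{Q})_i=\widehat{\bm{Q}}\bm{\ell}_i$, compute $\|\bm{\ell}_i\|_2^2=(\bm{L}\bm{L}^T)_{ii}=1-\tfrac1m$, invoke rotation invariance of the Haar measure on the Stiefel manifold to identify each row in distribution with $\sqrt{1-\tfrac1m}\,\bm{q}_1^T$ (the paper phrases this as left multiplication of $\widehat{\bm{Q}}^T$ by $\bm{R}\in\mathcal{O}_{m-1}$, which is exactly your right multiplication of $\widehat{\bm{Q}}$ by $\bm{O}_i$), then apply Lemma~\ref{lem:constant} rowwise and take a union bound over the fixed $m$ rows. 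Your remark that the crude $\ell_1$--$\ell_\infty$ bound would spoil the constant $C>2$ is also well taken; the paper avoids this the same way you do.
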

\begin{proof}
Since $\widehat{\bm Q}$ is uniformly distributed as an $(m-1)$--frame in 
the subspace $\bm{\xi}_n^{\perp}$, it is invariant under left multiplication by any $\bm R \in \mathcal O_{m-1}$, i.e., $ \bm R \widehat{\bm Q}^T\stackrel d=  \widehat{\bm Q}^T$.
For any nonzero vector $\bm v\in \mathbb R^{m-1}$, let $\bm R \in \mathcal{O}_{m-1}$ be such that $\frac{\bm v}{\|\bm v\|}= \bm R^T \bm e_1 $ , where $\bm e_1=[1~0~\cdots~0]^T$ is the first standard basis vector.
Then
\begin{equation}\label{eq:vq}
 \bm v^T\widehat{ \bm Q}^T
=  \|\bm v\|\bm e_1^T \bm R\widehat{ \bm Q}^T
\stackrel{d}{=} \|\bm v\|\bm e_1^T \widehat{ \bm Q}^T
= \|\bm v\|\bm q_1^T,
\end{equation}
where $\bm q_1$ is the first column of $\widehat{\bm Q}$.
Now denote $\bm L = [\,\widehat{\bm L}\;\;\mathbf 0\,]$, and denote by $\bm l_i^{T}$ the $i$-th row of $\widehat{\bm L}$.
Using \eqref{eq:vq}, $(\bm L\bm Q)_{i} = \bm l_i^{T}\widehat{\bm Q}^T \stackrel{d}{=}\ \|\bm l_i \| \bm q_1^T,$ where $(\bm L\bm Q)_{i}$ is the $i$-th row of $\bm L \bm Q$.
Since $\|\bm l_i\|_2^2 = (\bm L\bm L^{T})_{ii} = 1-\tfrac{1}{m}$, it follows that
\[
(\bm L\bm Q)_{i}
\stackrel{d}{=}  \sqrt{1-\frac1m}  \bm q_1^T.
\]
For $i = 1, \dots, m$,
\begin{align*}
    & \;\mathbb{P} \left(  \max_{\substack{ 1 \le j \le n}} \left|(\bm L \bm Q)_{ij}\right| \geq C \sqrt{\frac{\log n}{n}} \right)\\
     \leq& \; \mathbb{P} \left(   \left\|\bm q_1 \right\|_{\infty} \geq C \sqrt{\frac{\log n}{n}} \right) \rightarrow 0 \quad \text{as } \; n\rightarrow \infty
\end{align*}
using Lemma~\ref{lem:constant}.
Then
\begin{align*}
& \;\mathbb{P} \left( \max_{\substack{1\le i \le m \\ 1 \le j \le n}} \left|W_{ij}\right|\geq C\sqrt{\frac{\log n}{n}} + \frac{1}{\sqrt{mn}} \right)\\
= & \; \mathbb{P} \left( \max_{\substack{1\le i \le m \\ 1 \le j \le n}} \left|(\bm L \bm Q)_{ij}\right|\geq C\sqrt{\frac{\log n}{n}}  \right)\\
\leq & \; \sum_{i=1}^m \mathbb{P} \left( \max_{\substack{ 1 \le j \le n}} \left|(\bm L \bm Q)_{ij}\right|\geq C\sqrt{\frac{\log n}{n}}  \right)\\
=& \; m \mathbb{P} \left( \max_{\substack{ 1 \le j \le n}} \left|(\bm L \bm Q)_{ij}\right|\geq C\sqrt{\frac{\log n}{n}}  \right)\rightarrow 0
\end{align*}
as \(n\rightarrow \infty\).
\end{proof}

Note that $\bm{W}$ in Theorem~\ref{lem: big_O} exactly matches $\bm{W}$ generated by Algorithm~\ref{alg:algorithm2}.

\begin{lemma}{\cite[Theorem 1.1]{raivc2019multivariate}}\label{lem:berry}
Let $\bm{y}_{1},\ldots,\bm{y}_{n}$ be independent mean $\bm{0}$ random vectors in $\mathbb{R}^{m}$ such that $\sum_{i=1}^{n}\bm{y}_{i}$ has the identity covariance matrix.
Let $\bm{g}$ be the standard Gaussian random vector in $\mathbb{R}^{m}$.
Then for all measurable convex sets $\mathcal{A} \in \mathbb{R}^{m}$,
\begin{equation}
    \left|\mathbb{P}\left( \sum_{i=1}^{n}\bm{y}_{i} \in \mathcal{A} \right) - \mathbb{P}(\bm{g} \in \mathcal{A}) \right| 
    \leq (42m^{1/4} + 16) \sum_{i=1}^{n}\mathbb{E}\| \bm{y}_{i} \|_{2}^{3}.
\end{equation}
\end{lemma}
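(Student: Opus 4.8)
This statement is a verbatim restatement of Raič's multivariate Berry--Esseen bound, so strictly speaking its ``proof'' is the citation \cite{raivc2019multivariate}; what I outline here is the route one would follow to establish such a bound independently, namely Stein's method for multivariate normal approximation.

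First I would fix the characterizing Stein operator of the standard Gaussian on $\mathbb{R}^{m}$, namely $(\mathcal{L}f)(\bm{w}) = \Delta f(\bm{w}) - \langle \bm{w}, \nabla f(\bm{w})\rangle$, which by Gaussian integration by parts satisfies $\mathbb{E}[(\mathcal{L}f)(\bm{g})] = 0$ for all sufficiently regular $f$. For a test function $h$ one solves the Stein PDE $(\mathcal{L}f_{h})(\bm{w}) = h(\bm{w}) - \mathbb{E}[h(\bm{g})]$, the canonical solution being the Ornstein--Uhlenbeck (Mehler) integral $f_{h}(\bm{w}) = -\int_{0}^{1}\frac{1}{2t}\big(\mathbb{E}[h(\sqrt{t}\,\bm{w} + \sqrt{1-t}\,\bm{g})] - \mathbb{E}[h(\bm{g})]\big)\,dt$; the first technical task is to bound the first, second, and third derivatives of $f_{h}$ in terms of the smoothness of $h$.

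Next, writing $\bm{S} = \sum_{i=1}^{n}\bm{y}_{i}$ and $\bm{S}^{(i)} = \bm{S} - \bm{y}_{i}$ for the leave-one-out sums, I would expand $\mathbb{E}[h(\bm{S})] - \mathbb{E}[h(\bm{g})] = \mathbb{E}[\Delta f_{h}(\bm{S})] - \sum_{i}\mathbb{E}[\langle \bm{y}_{i}, \nabla f_{h}(\bm{S})\rangle]$. Using independence and $\mathbb{E}[\bm{y}_{i}] = \bm{0}$, each summand becomes $\mathbb{E}[\langle \bm{y}_{i}, \nabla f_{h}(\bm{S}) - \nabla f_{h}(\bm{S}^{(i)})\rangle]$; a second-order Taylor expansion of $\nabla f_{h}$ around $\bm{S}^{(i)}$ contributes $\mathbb{E}[\tr(\nabla^{2}f_{h}(\bm{S}^{(i)})\,\mathrm{Cov}[\bm{y}_{i}])]$ plus a remainder involving $\nabla^{3}f_{h}$ and $\|\bm{y}_{i}\|_{2}^{3}$; summing over $i$, replacing $\bm{S}^{(i)}$ by $\bm{S}$ at the cost of another third-moment term, and using the hypothesis $\sum_{i}\mathrm{Cov}[\bm{y}_{i}] = \bm{I}_{m}$ cancels the Laplacian term, leaving a total error bounded by a dimensional constant times $\sup\|\nabla^{3}f_{h}\| \cdot \sum_{i}\mathbb{E}\|\bm{y}_{i}\|_{2}^{3}$. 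Since here $h = \mathbbm{1}_{\mathcal{A}}$ for a convex set $\mathcal{A}$ and is not smooth, the essential additional ingredient is a Gaussian-smoothing step: replace $\mathbbm{1}_{\mathcal{A}}$ by its convolution with $\mathcal{N}(\bm{0}, \varepsilon^{2}\bm{I}_{m})$, control the smoothing error by the classical fact that the $\varepsilon$-enlargement of a convex set carries only $O(\varepsilon)$ extra standard-Gaussian mass uniformly over convex sets, bound the derivatives of the smoothed Stein solution by the corresponding negative powers of $\varepsilon$, and optimize over $\varepsilon$.

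The main obstacle is precisely this last balancing act carried out with \emph{explicit} constants: obtaining the factor $42m^{1/4} + 16$ requires quantitatively sharp bounds on the Hessian and third derivative of the Stein solution for Gaussian-mollified convex-set indicators, together with a careful tracking of how the Gaussian perimeter bound interacts with the smoothing parameter — this optimization is what produces the $m^{1/4}$ dimensional scaling and is the technical core of \cite{raivc2019multivariate}. For the purposes of the present paper the bound may simply be invoked as cited, and used in the sequel to quantify the Gaussian approximation of $\bm{W}\bm{x}$ whose mean and covariance are pinned down in Proposition~\ref{pro:clt}.
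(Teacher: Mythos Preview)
Your proposal is correct and matches the paper's approach: the lemma is stated as a direct citation of \cite[Theorem~1.1]{raivc2019multivariate} with no independent proof given, and you rightly identify that the ``proof'' here is simply the citation. Your additional sketch of the Stein-method route is accurate and appropriate background, but goes beyond what the paper itself provides or requires.
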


\begin{theorem}\label{thm:clt}
Fix $m\in \mathbb{N}$.
Let $\bm{x} = (x_1, \dots, x_n) \in \mathbb{R}^n$ be a random vector with independent components satisfying
\[
\mathbb{E}[x_j] = \mu,\; \operatorname{Var}(x_j) = \sigma^2,\; \mathbb{E}[|x_j|^3]<\infty\; \text{ for all } j = 1, \dots, n.
\]
Let \( \bm{W}=[ W_{ij} ] \in \widetilde{\mathcal{O}}_{m,n}\) be a random matrix as defined in Theorem~\ref{lem: big_O}, independent of $\bm{x}$.
Then
\[
\frac{\bm W\,\bm x  - \mu\sqrt{\tfrac{n}{m}}\mathbf1_m}{\sigma}\;\xrightarrow{d}\;\mathcal N\left(\bm 0 ,\;\bm{I}_m\right)
\]
as \(n\to\infty\).
\end{theorem}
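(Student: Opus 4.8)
The plan is to apply the multivariate Berry--Esseen bound of Lemma~\ref{lem:berry} after rewriting $\bm W\bm x$ as a sum of independent mean-zero vectors, conditionally on the random matrix $\bm W$. Write $\bm W = [\bm c_1 \;\cdots\; \bm c_n]$ in terms of its columns $\bm c_j \in \mathbb{R}^m$, so that $\bm W \bm x = \sum_{j=1}^n x_j \bm c_j$. Conditioning on $\bm W$ and centering, set
\[
\bm{y}_j := \frac{(x_j - \mu)\,\bm c_j}{\sigma},
\]
which are independent (over $j$, given $\bm W$), have mean $\bm 0$, and satisfy $\sum_{j=1}^n \operatorname{Cov}[\bm y_j \mid \bm W] = \sum_j \bm c_j \bm c_j^T = \bm W \bm W^T = \bm I_m$ since $\bm W$ is semi-orthogonal. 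Moreover $\sum_{j=1}^n \bm y_j = (\bm W\bm x - \mu\sqrt{n/m}\,\bm 1_m)/\sigma$ by Lemma~\ref{lem:meanshift} (this is exactly the computation in Proposition~\ref{pro:clt}). Lemma~\ref{lem:berry} then gives, for every measurable convex $\mathcal{A} \subseteq \mathbb{R}^m$,
\[
\left|\mathbb{P}\!\left(\tfrac{\bm W\bm x - \mu\sqrt{n/m}\,\bm 1_m}{\sigma} \in \mathcal{A} \,\Big|\, \bm W\right) - \mathbb{P}(\bm g \in \mathcal{A})\right| \le (42 m^{1/4} + 16)\sum_{j=1}^n \mathbb{E}\big[\|\bm y_j\|_2^3 \,\big|\, \bm W\big].
\]

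The core estimate is to show the right-hand side tends to $0$ as $n\to\infty$ (with $m$ fixed). Using $\mathbb{E}[|x_j-\mu|^3] \le K < \infty$ uniformly in $j$ (finite third moments; I would bound $\mathbb{E}|x_j-\mu|^3$ by a constant via $\mathbb{E}|x_j|^3<\infty$ and Jensen/Minkowski), the conditional third-moment sum is
\[
\sum_{j=1}^n \mathbb{E}\big[\|\bm y_j\|_2^3 \,\big|\, \bm W\big] = \frac{1}{\sigma^3}\sum_{j=1}^n \mathbb{E}|x_j-\mu|^3\,\|\bm c_j\|_2^3 \le \frac{K}{\sigma^3}\sum_{j=1}^n \|\bm c_j\|_2^3 \le \frac{K}{\sigma^3}\Big(\max_{1\le j\le n}\|\bm c_j\|_2\Big)\sum_{j=1}^n \|\bm c_j\|_2^2.
\]
Now $\sum_{j=1}^n \|\bm c_j\|_2^2 = \|\bm W\|_F^2 = \tr(\bm W\bm W^T) = m$, a constant. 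For the column norms, each entry satisfies $|W_{ij}| \le \|\bm W\|_\infty := \max_{i,j}|W_{ij}|$, so $\|\bm c_j\|_2 \le \sqrt{m}\,\|\bm W\|_\infty$. Hence the bound is at most $\frac{K m^{3/2}}{\sigma^3}\,\|\bm W\|_\infty$, which is $O(\|\bm W\|_\infty)$ with $m$ fixed.

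The final step transfers this conditional bound to an unconditional statement about convergence in distribution. By Theorem~\ref{lem: big_O}, $\|\bm W\|_\infty \le C\sqrt{\log n/n} + 1/\sqrt{mn}$ with probability approaching $1$, i.e., $\|\bm W\|_\infty \xrightarrow{p} 0$. Fix $z \in \mathbb{R}^m$ and take $\mathcal{A}$ of the form $\{\bm u : \bm u \le z\}$ (a convex set); write $D_n := \big|\mathbb{P}(\,\cdot \in \mathcal{A}\mid \bm W) - \Phi_m(z)\big| \le (42m^{1/4}+16)\frac{K m^{3/2}}{\sigma^3}\|\bm W\|_\infty =: c_m \|\bm W\|_\infty$. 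Then $D_n \xrightarrow{p} 0$, and since $0 \le D_n \le c_m\|\bm W\|_\infty$ with the latter bounded in probability and in fact going to $0$, dominated convergence (or: $\mathbb{E}[D_n] \le c_m \mathbb{E}[\min\{\|\bm W\|_\infty, \text{const}\}]\to 0$, using that $\|\bm W\|_\infty$ is bounded by an absolute constant since each row has unit norm) gives $\mathbb{E}[D_n]\to 0$. By the tower property, $\big|\mathbb{P}\big(\tfrac{\bm W\bm x - \mu\sqrt{n/m}\bm 1_m}{\sigma}\le z\big) - \Phi_m(z)\big| = \big|\mathbb{E}[\mathbb{P}(\cdot\in\mathcal{A}\mid\bm W) - \Phi_m(z)]\big| \le \mathbb{E}[D_n] \to 0$, which is precisely convergence in distribution to $\mathcal{N}(\bm 0, \bm I_m)$.

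The main obstacle is the one genuinely nontrivial input: controlling $\|\bm W\|_\infty$, i.e., that the entries of the randomly sampled semi-orthogonal factor are uniformly small of order $\sqrt{\log n / n}$. That is exactly the content of Theorem~\ref{lem: big_O} (resting on the Jiang-type maxima result, Lemma~\ref{lem:constant}), so the work there is what makes this theorem go through; everything else is bookkeeping with the Berry--Esseen bound and the semi-orthogonality identities $\bm W\bm W^T = \bm I_m$, $\|\bm W\|_F^2 = m$, and $\bm W\bm 1_n = \sqrt{n/m}\,\bm 1_m$ from Lemma~\ref{lem:meanshift}. A minor technical point to handle carefully is the passage from the conditional CLT to the unconditional one; I would phrase it via expectations of the (bounded) discrepancy $D_n$ rather than invoking almost-sure statements, since Theorem~\ref{lem: big_O} only gives convergence in probability.
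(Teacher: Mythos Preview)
Your proof is correct and essentially matches the paper's: both apply Lemma~\ref{lem:berry} to the column decomposition $\bm{W}\bm{z} = \sum_j z_j\bm{w}_j$, bound $\sum_j\|\bm{w}_j\|_2^3 \le m^{3/2}\max_{i,j}|W_{ij}|$, and invoke Theorem~\ref{lem: big_O} together with the deterministic bound $\|\bm{W}\|_\infty\le 1$ (uniform integrability) to send the Berry--Esseen error to zero. The only minor differences are that you make the conditioning on $\bm{W}$ explicit (which is arguably more careful, since the $\bm{y}_j$ are only \emph{conditionally} independent given $\bm{W}$) and you conclude weak convergence directly from the multivariate CDF on orthants, whereas the paper passes through half-spaces and the Cram\'er--Wold device.
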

\begin{proof}
Let $\bm{z}=(z_1,\ldots,z_n)$ such that $z_j:=(x_j-\mu)/\sigma$ for all $j$.
Then $\mathbb{E}[z_{j}]=0$, $\operatorname{Var}(z_{j})=1$, and $\beta_3:=\mathbb{E}[|z_j|^3]<\infty$.
For each $j=1,\ldots,n$, let $\bm{w}_{j} \in \mathbb{R}^{m}$ be the $j$-th column vector of $\bm{W}$.
For $j=1,\ldots,n$, define $\bm{y}_{j}:=\bm{w}_{j}z_{j}$.
Since the $z_j$ are independent scalars, the vectors $\bm{y}_{1},\ldots,\bm{y}_{n}$ are independent and $\mathbb{E}[\bm{y}_{j}] = \bm{0}$ and
\[
\sum_{j=1}^{n} \operatorname{Var}(\bm{y}_j) 
= \sum_{j=1}^{n}\mathbb{E}[z_{j}^{2}]\bm{w}_{j}\bm{w}_{j}^{T} 
= \sum_{j=1}^{n}\bm{w}_{j}\bm{w}_{j}^{T} = \bm{W}\bm{W}^T =\bm{I}_{m}.
\]
Therefore, Lemma~\ref{lem:berry} applies to $\bm{y}_{j}$.
For all measurable convex sets $\mathcal{A} \subset \mathbb{R}^{m}$,
\[
\sup_{\mathcal{A}}\left|\mathbb{P}\left( \sum_{i=1}^{n}\bm{y}_{i} \in \mathcal{A} \right) - \mathbb{P}(\bm{g} \in \mathcal{A}) \right| 
\leq (42m^{1/4} + 16) \sum_{i=1}^{n} \mathbb{E}\| \bm{y}_{i} \|_{2}^{3},
\]
where $\bm{g}\sim\mathcal{N}(\bm{0},\bm{I}_m)$.

Since $\| \bm{y}_{j} \|_{2} = |z_{j}| \| \bm{w}_{j} \|_{2}$, $ \sum_{i=1}^{n}\mathbb{E}[\| \bm{y}_{i} \|_{2}^{3}] = \beta_{3} \sum_{i=1}^{n}  \mathbb{E}[\| \bm{w}_{j} \|_{2}^{3}]$.
And $\sum_{i=1}^{n}  \| \bm{w}_{j} \|_{2}^{3}$ can be bounded as follows:
\begin{align*}
    \sum_{i=1}^{n}  \| \bm{w}_{j} \|_{2}^{3}
    &\leq \left( \max_{1\leq j\leq n} \| \bm{w}_{j} \|_2 \right) \sum_{i=1}^{n}  \| \bm{w}_{j} \|_{2}^{2}\\
    &= \left( \max_{1\leq j\leq n} \| \bm{w}_{j} \|_2 \right)\operatorname{tr} \left( \sum_{i=1}^{n}  \bm{w}_{j} \bm{w}_{j}^{T} \right)\\
    &= m \cdot \max_{1\leq j\leq n} \| \bm{w}_{j} \|_2\\
    &\leq m^{3/2} \max_{i,j} |W_{ij}|.
\end{align*}
It implies that $\sum_{i=1}^{n}  \mathbb{E}[\| \bm{w}_{j} \|_{2}^{3}] \leq m^{3/2} \mathbb{E}[\max_{i,j} |W_{ij}|]$.
Since $\max_{i,j} |W_{ij}| < 1$ and $\max_{i,j} |W_{ij}|\overset{p}{\longrightarrow}0$, by uniform integrability and Theorem 2.20 in \cite{van2000asymptotic}, we have $\mathbb{E}[\max_{i,j} |W_{ij}|]\rightarrow 0$.
Let $K = (42m^{1/4} + 16) \beta_{3} m^{3/2}$, then
\[
\sup_{\mathcal{A}}\left|\mathbb{P}\left( \bm{W}\bm{z} \in \mathcal{A} \right) - \mathbb{P}(\bm{g} \in \mathcal{A}) \right| 
\leq K\; \mathbb{E}[\max_{i,j} |W_{ij}|]\rightarrow 0
\]
as $n\rightarrow \infty$.
For any $\bm{a}\in\mathbb{R}^m$ and $t\in\mathbb{R}$, consider the closed half-space $\mathcal{H}_{\bm{a},t}:=\{\bm{x}\in\mathbb{R}^m | \ \bm{a}^{T}\bm{x}\le t\}$.
Since $\mathcal{H}_{\bm{a},t}$ is convex,
\begin{align*}
    &\;\left|\mathbb{P}(\bm{a}^{T}\bm{W}\bm{z} \leq t)-\mathbb{P}(\bm{a}^{T}\bm{g}\leq t)\right| \\
    =& \; \left|\mathbb{P}(\bm{W}\bm{z}\in \mathcal{H}_{\bm{a},t})-\mathbb{P}(\bm{g}\in \mathcal{H}_{\bm{a},t})\right|\\
    \leq&\;\sup_{\mathcal{A}}\left|\mathbb{P}\left( \bm{W}\bm{z} \in \mathcal{A} \right) - \mathbb{P}(\bm{g} \in \mathcal{A}) \right| \rightarrow 0.
\end{align*}
By the Cram{\'e}r-Wold theorem~\cite[Proposition 2.17]{van2000asymptotic}, as $n\rightarrow \infty$,
\[
\frac{\bm W\,\bm x  - \mu\sqrt{\tfrac{n}{m}}\mathbf1_m}{\sigma}\;\xrightarrow{d}\;\mathcal N\left(\bm 0 ,\;\bm{I}_m\right)
\]
\end{proof}

\begin{figure}[t!]
\centering
\includegraphics[width=1.0\linewidth]{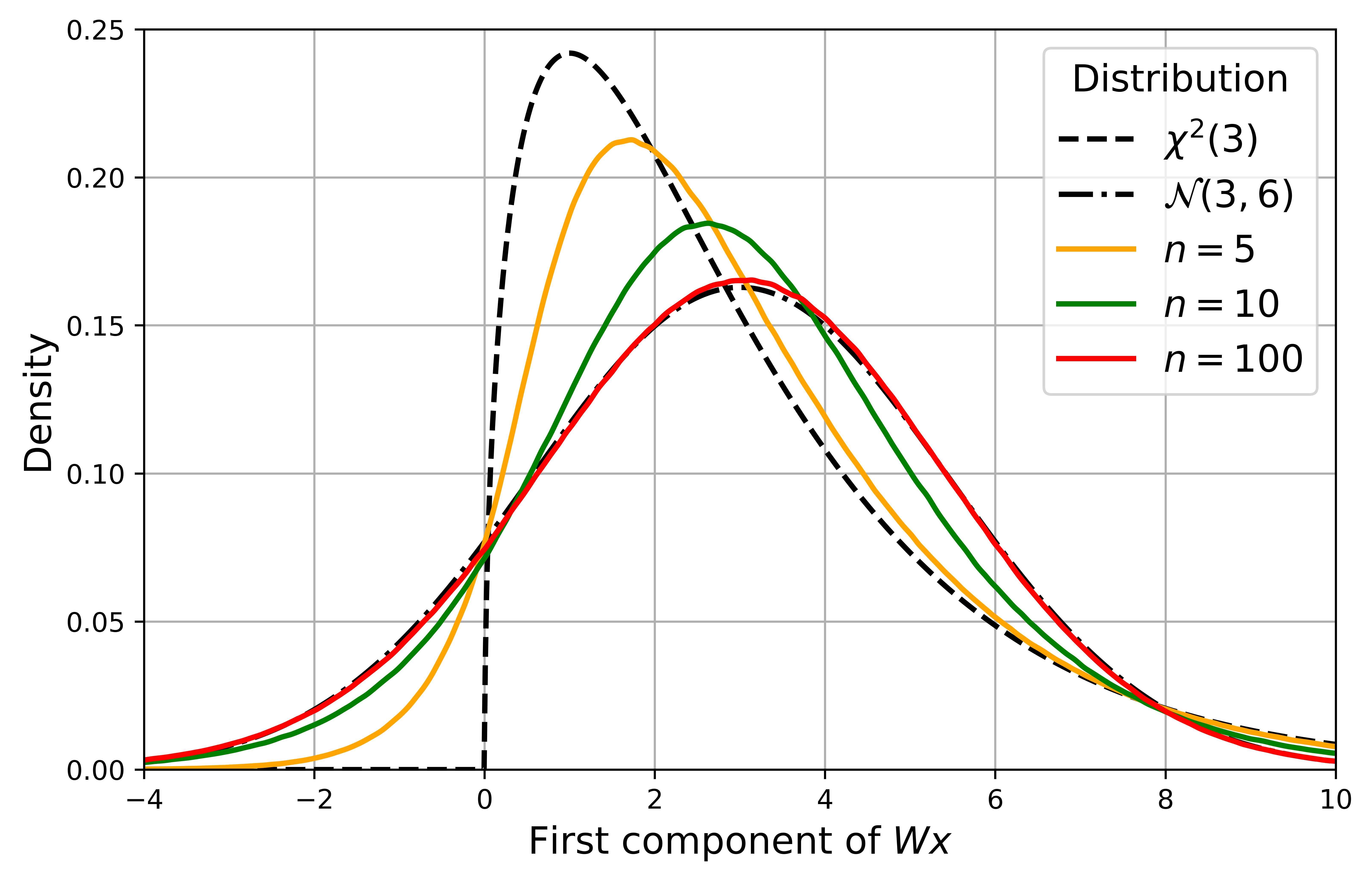}
\caption{Convergence of \( \bm{W} \bm{x} \) to normal distribution as dimension \( n \) increases.}
\label{fig:kde}
\end{figure}

To empirically validate this result, we consider a non-Gaussian input distribution. 
Fig.~\ref{fig:kde} shows the empirical distribution of the first component of \( \bm{W} \bm{x} \), where \( \bm{W} \in \widetilde{\mathcal{O}}_{m,n} \) and each entry of \( \bm{x} \in \mathbb{R}^n \) is independently drawn from a chi-squared distribution with three degrees of freedom, \( \chi^2(3) \), for \( m = n \in \{5, 10, 100\} \). 
A total of 1,000,000 input vectors were sampled for each value of \( n \) to estimate the output distribution. 
The original input distribution is plotted as a dashed line, and a Gaussian distribution with the same mean and variance is plotted as a dash-dotted black line for reference. 
As observed in the figure, the transformed distribution becomes increasingly Gaussian as \( n \) increases.

\subsection{Propagation in ReLU Networks}
Deep ReLU networks suffer from variance collapse and dying ReLU problems~\cite{schoenholz2016deep}.
These problems are related to weight initialization and activation functions.

Consider a standard FFNN architecture in which each layer $\ell$ is recursively defined by:
\begin{equation}\label{eq:relu}    
\bm{y}^{(\ell)} = {\bm{W}^{(\ell)}}\bm{x}^{(\ell -1)} + \bm{b}^{(\ell)}, \quad
\bm{x}^{(\ell)} = \text{ReLU}(\bm{y}^{(\ell)}).
\end{equation}
As stated in the preliminary section, we assume a non-increasing architecture where the layer widths \(N_{\ell}\) for $\ell=0,\ldots,L$ satisfy \(N_0 \ge N_1 \ge \cdots \ge N_L\).

Given the central limit behavior established in Theorem~\ref{thm:clt}, it is reasonable to assume that the pre-activation vectors \(\bm{y}^{(\ell)}\) in deep ReLU networks follow approximately Gaussian distributions in high-dimensional settings. 

The single-variable rectified Gaussian distribution is defined as \(\max(0, x) \sim \mathcal{N}^{\text R}(\mu , \sigma^2)\) where $x \sim \mathcal{N}(\mu, \sigma^2)$~\cite{beauchamp2018numerical}.
This definition naturally extends to the multivariate.
If $\bm{x} \sim \mathcal{N}(\bm{\mu}, \bm{\Sigma})$ and $\bm{z} = \max(0, \bm{x})$ where the maximum is applied elementwise, then $\bm{z}$  is said to follow a multivariate rectified Gaussian distribution denotes $\bm{z} \sim \mathcal{N}^\mathrm{R}(\bm{\mu}, \bm{\Sigma})$.
Note that $\mathbb{E}[\bm{z}]$ and $\operatorname{Cov}(\bm{z})$ are not $\bm{\mu}$ and $\bm{\Sigma}$, respectively.
Using the result of \cite{beauchamp2018numerical,wright2024analytic}, the following lemma provides the closed-form expressions of $\mathbb{E}[\bm{z}]$ and $\operatorname{Cov}(\bm{z})$ when $\bm{z} \sim \mathcal{N}^R(\mu\bm{1}_n, \sigma^2 \bm{I}_n)$.

\begin{lemma}[\cite{beauchamp2018numerical}]\label{lem:ReLU Gaussian}
    Let $\bm{z} \sim \mathcal{N}^R(\mu\bm{1}_n, \sigma^2 \bm{I}_n)$.
    Then
    \begin{align*}
        \mathbb{E}[\bm{z}] & =  \left(\sigma \phi(\alpha)  + \mu \Phi(\alpha)\right)\bm{1}_n\\
        \operatorname{Cov}(\bm{z}) &=  \left((\mu^2 + \sigma^2) \Phi(\alpha) + \mu\sigma\phi(\alpha) - \left(\sigma\phi(\alpha) + \mu \Phi(\alpha) \right)^2\right)\bm{I}_n,
    \end{align*}
    where $\alpha = \frac{\mu}{\sigma}$ and $\phi$ and $\Phi$ are the probability density function and cumulative distribution function of the Gaussian distribution, respectively.
\end{lemma}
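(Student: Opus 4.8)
The plan is to reduce the multivariate statement to a one-dimensional moment computation. Since $\operatorname{Cov}(\bm{x}) = \sigma^2\bm{I}_n$, the coordinates of $\bm{x}\sim\mathcal{N}(\mu\bm{1}_n,\sigma^2\bm{I}_n)$ are i.i.d.\ $\mathcal{N}(\mu,\sigma^2)$, hence the coordinates $z_i = \max(0,x_i)$ of $\bm{z}$ are i.i.d.\ as well. Consequently $\mathbb{E}[\bm{z}] = \mathbb{E}[z_1]\,\bm{1}_n$, while $\operatorname{Cov}(\bm{z})$ is diagonal with all off-diagonal entries vanishing by independence, so $\operatorname{Cov}(\bm{z}) = \operatorname{Var}(z_1)\,\bm{I}_n$. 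It therefore suffices to evaluate $\mathbb{E}[z]$ and $\mathbb{E}[z^2]$ for a single rectified variable $z = \max(0,x)$ with $x\sim\mathcal{N}(\mu,\sigma^2)$.

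For the scalar computation I would substitute $x = \mu + \sigma g$ with $g\sim\mathcal{N}(0,1)$, so that $z$ equals $\mu+\sigma g$ on $\{g > -\alpha\}$ and $0$ otherwise, where $\alpha = \mu/\sigma$. Then $\mathbb{E}[z] = \int_{-\alpha}^{\infty}(\mu+\sigma g)\,\phi(g)\,dg$; the $\mu$-term gives $\mu\bigl(1-\Phi(-\alpha)\bigr) = \mu\Phi(\alpha)$ and the $\sigma$-term gives $\sigma\int_{-\alpha}^{\infty}g\,\phi(g)\,dg = \sigma\phi(\alpha)$, using the identity $\phi'(g) = -g\phi(g)$ together with the reflection relations $\Phi(-\alpha) = 1-\Phi(\alpha)$ and $\phi(-\alpha)=\phi(\alpha)$. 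For the second moment I would expand $\mathbb{E}[z^2] = \int_{-\alpha}^{\infty}(\mu+\sigma g)^2\phi(g)\,dg$ into the $\mu^2$, $2\mu\sigma g$, and $\sigma^2 g^2$ pieces; the first two are handled as above, and the last is evaluated by integration by parts (taking $u = g$, $dv = g\phi(g)\,dg$, $v = -\phi(g)$), yielding $\sigma^2\int_{-\alpha}^{\infty}g^2\phi(g)\,dg = \sigma^2\bigl(\Phi(\alpha) - \alpha\phi(\alpha)\bigr)$. Collecting the three pieces and using $\sigma^2\alpha = \mu\sigma$ gives $\mathbb{E}[z^2] = (\mu^2+\sigma^2)\Phi(\alpha) + \mu\sigma\phi(\alpha)$, and then $\operatorname{Var}(z) = \mathbb{E}[z^2] - (\mathbb{E}[z])^2$ is precisely the scalar multiplying $\bm{I}_n$ in the statement.

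No genuine obstacle arises here: the result is a classical fact and is already recorded in \cite{beauchamp2018numerical,wright2024analytic}, so the proof is essentially a verification. The only points demanding care are the sign bookkeeping when substituting the lower limit $-\alpha$ (via the two reflection identities above) and the boundary term $\bigl[-g\phi(g)\bigr]_{-\alpha}^{\infty}$ in the integration by parts, whose contribution at $+\infty$ vanishes by the Gaussian tail and at $-\alpha$ equals $-\alpha\phi(\alpha)$. Reassembling the per-coordinate moments into the constant mean vector and the scalar-multiple-of-identity covariance then completes the argument.
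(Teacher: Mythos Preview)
Your derivation is correct: the reduction to the one-dimensional case via i.i.d.\ coordinates is immediate, and your computations of $\mathbb{E}[z]$ and $\mathbb{E}[z^2]$ through the substitution $x=\mu+\sigma g$ and the integration-by-parts identity for $\int g^2\phi(g)\,dg$ are accurate, including the sign bookkeeping at the lower limit $-\alpha$. The paper itself does not supply a proof of this lemma; it is quoted directly from \cite{beauchamp2018numerical} (see also \cite{wright2024analytic}), so there is nothing to compare against---your proposal simply fills in the standard verification that the paper elects to cite.
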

However, since \(\lim\limits_{\alpha\to \infty} \Phi(\alpha) = 1\) and \(\lim\limits_{\alpha\to \infty} \phi(\alpha) = 0\), a sufficiently large \(\alpha = \mu / \sigma\) implies that the statistical properties of the ReLU output (e.g. mean and variance) become increasingly close to those of the Gaussian input.
\begin{remark}\label{rmk1}
  Moreover, for each coordinate \(z_i\) of \(\bm{z}\sim\mathcal N(\mu\mathbf1,\sigma^2I)\), \(\mathbb{P}(z_i>0) \;=\;\Phi\bigl(\tfrac\mu\sigma\bigr)\;=\;\Phi(\alpha)\).
\end{remark}
Therefore, as \(\alpha\) becomes large, most of the probability mass of the input lies in the positive region. In this regime, the ReLU function behaves nearly as the identity map, and its effect on the input distribution becomes negligible. Consequently, the rectified Gaussian output is nearly indistinguishable from the original Gaussian input in functional terms.

In our setting, the proposed initialization ensures that the pre-activation at layer \(\ell\) follows \(\mathcal{N}\left(\sqrt{\frac{N_{\ell-1}}{N_{\ell}}}\,\mu, \sigma^2\right)\), leading to the rectification parameter:
\[
\alpha_{\ell} = \frac{\mu_{\ell}}{\sigma_{\ell}} = \sqrt{\frac{N_{\ell-1}}{N_{\ell}}}\,\frac{\mu_{\ell-1}}{\sigma_{\ell-1}}.
\]
In deep networks with non-increasing widths (\(N_\ell < N_{\ell-1}\)), the scaling factor \(\sqrt{\frac{N_{\ell-1}}{N_{\ell}}}\) contributes to increasing \(\alpha\). Furthermore, since ReLU increases the mean and decreases the variance of Gaussian inputs, these effects accumulate across layers. As a result, \(\alpha\) grows with depth, pushing the activation distribution further into the positive domain. This behavior alleviates the dying ReLU problem and helps preserve stable signal propagation.

\begin{table*}[ht]
\centering
\begin{subtable}{0.48\textwidth}
    \centering
    \small
    \label{tab:mnist-full-results}
    \begin{tabular}{c*{6}{c}}
        \toprule
        Depth & Proposed & Lee & He & Xavier & Orth. & Rand. \\
    \midrule
    10   & 97.59 & \textbf{97.87} & 97.55 & 97.57 & 97.56 & 97.30 \\
    20   & \textbf{97.57} & 97.53 & 97.43 & 97.23 & 97.00 & 96.31 \\
    30   & 97.17 & 96.14 & \textbf{97.21} & 97.10 & 97.14 & 96.44 \\
    40   & \textbf{97.19} & 95.83 & 97.06 & 96.04 & 97.07 & 95.13 \\
    50   & \textbf{97.28} & 96.02 & 96.52 & 82.74 & 11.35 & 11.35 \\
    60   & \textbf{97.17} & 94.74 & 96.44 & 82.07 & 11.35 & 11.35 \\
    70   & \textbf{97.25} & 96.00 & 93.70 & 11.35 & 11.35 & 11.35 \\
    80   & \textbf{96.64} & 84.82 & 95.95 & 11.35 & 11.35 & 11.35 \\
    90   & \textbf{97.04} & 88.46 & 95.91 & 11.35 & 11.35 & 11.35 \\
    100  & \textbf{96.98} & 95.17 & 93.69 & 11.35 & 11.35 & 11.35 \\
    \bottomrule
    \end{tabular}
    \caption{MNIST}
\end{subtable}
\hfill
\begin{subtable}{0.48\textwidth}
    \centering
    \small
    \label{tab:Fashion-MNIST-full-results}
    \begin{tabular}{c*{6}{c}}
        \toprule
        Depth & Proposed & Lee & He & Xavier & Orth. & Rand. \\
        \midrule
    10   & 88.15 & 87.66 & \textbf{88.17} & 87.52 & 87.96 & 87.58 \\
    20   & \textbf{88.29} & 87.65 & 88.17 & 87.44 & 87.67 & 87.04 \\
    30   & 87.92 & \textbf{88.65} & 87.80 & 87.00 & 87.62 & 86.93 \\
    40   & \textbf{88.03} & 87.73 & 87.53 & 87.64 & 87.45 & 10.00 \\
    50   & 87.77 & \textbf{88.68} & 87.23 & 10.00 & 10.00 & 10.00 \\
    60   & \textbf{87.98} & 85.94 & 87.59 & 10.00 & 10.00 & 10.00 \\
    70   & \textbf{87.83} & 86.59 & 86.97 & 10.00 & 10.00 & 10.00 \\
    80   & \textbf{87.78} & 77.89 & 84.92 & 10.00 & 10.00 & 10.00 \\
    90   & \textbf{87.88} & 81.78 & 85.50 & 10.00 & 10.00 & 10.00 \\
    100  & \textbf{87.70} & 79.09 & 83.33 & 10.00 & 10.00 & 10.00 \\
        \bottomrule
    \end{tabular}
    \caption{Fashion-MNIST}
\end{subtable}
\caption{Test accuracy (\%) by depth and initialization method.}
\label{tab:full}
\end{table*}
\begin{table*}[ht!]
\small
\centering
\begin{tabular}{l l c c c c c c c c c c c c}
\toprule
Dataset & Activation & \multicolumn{2}{c}{Proposed} & \multicolumn{2}{c}{Lee} & \multicolumn{2}{c}{He} & \multicolumn{2}{c}{Xavier} & \multicolumn{2}{c}{Orthogonal} & \multicolumn{2}{c}{Random} \\
 &  & 50 & 100 & 50 & 100 & 50 & 100 & 50 & 100 & 50 & 100 & 50 & 100 \\
\midrule
\multirow{6}{*}{MNIST}  & ReLU & 97.12 & \textbf{96.33} & \textbf{97.25} & 84.15 & 96.39 & 95.24 & 11.35 & 11.35 & 93.94 & 11.35 & 11.35 & 11.35 \\
 & LeakyReLU & 97.08 & \textbf{96.29} & \textbf{97.16} & 11.35 & 96.62 & 95.39 & 86.52 & 11.35 & 87.36 & 11.35 & 11.35 & 11.35 \\
 & PReLU & 96.93 & \textbf{96.52} & \textbf{97.46} & 11.35 & 95.99 & 94.60 & 95.02 & 11.35 & 11.35 & 11.35 & 11.35 & 11.35 \\
 & ELU & 97.67 & \textbf{97.59} & \textbf{97.79} & 74.25 & 96.62 & 96.34 & 97.17 & 11.35 & 97.73 & 88.75 & 86.56 & 11.35 \\
 & SELU & 97.39 & 96.45 & \textbf{97.62} & 96.40 & 96.57 & 95.33 & 96.59 & 96.61 & 97.22 & \textbf{97.07} & 96.52 & 96.31 \\
\midrule
\multirow{6}{*}{Fashion-MNIST}  & ReLU & \textbf{88.15} & \textbf{87.81} & 87.82 & 86.54 & 87.63 & 86.97 & 10.00 & 10.00 & 85.30 & 10.00 & 10.00 & 10.00 \\
 & LeakyReLU & \textbf{88.36} & \textbf{87.89} & 86.87 & 74.37 & 87.10 & 86.16 & 86.22 & 10.00 & 85.81 & 10.00 & 10.00 & 10.00 \\
 & PReLU & 87.75 & \textbf{87.77} & \textbf{88.28} & 75.94 & 86.81 & 86.91 & 79.44 & 10.00 & 86.80 & 10.00 & 10.00 & 10.00 \\
 & ELU & 88.05 & \textbf{88.29} & \textbf{88.43} & 78.15 & 87.29 & 85.30 & 87.90 & 10.00 & 88.22 & 70.18 & 83.02 & 10.00 \\
 & SELU & 87.60 & 87.25 & 84.55 & 71.13 & 87.01 & 85.45 & 87.64 & 85.97 & \textbf{87.92} & \textbf{87.50} & 87.22 & 83.85 \\
\bottomrule
\end{tabular}
\caption{Accuracy (\%) of different initialization methods and various activation functions on MNIST and Fashion-MNIST at depths $50$ and $100$.}
\label{tab:init-activation-combined}
\end{table*}

We formalize this observation in the following proposition, which characterizes the depth-wise behavior of activation statistics under our initialization.
\begin{proposition}\label{prop:nomal_new}
    Suppose that the input feature $\bm{x}^{(0)}  = (x^{(0)}_1, \dots, x^{(0)}_{N_0})\in \mathbb{R}^{N_0}$ satisfies the following condition:
    \[
    \mathbb{E}[x^{(0)}_j] = \mu_0, \quad \mathrm{Var}(x^{(0)}_j) = \sigma_0^2, \quad \mathbb{E}[|x^{(0)}_j|^3]<\infty
    \]
    $\text{for all } j = 1, \dots, N_0,$ with input dimension $N_0$ is large.
    Assume that, for each layer $\ell$, the weight matrix $\bm{W}^{(\ell)}\in\widetilde{\mathcal O}_{N_{\ell},N_{\ell-1}}$ is a random matrix as defined in Theorem~\ref{lem: big_O}, and that the bias satisfies $\bm{b}^{(\ell)}=\bm{0}$.
    Then for each layer \(\ell=1,\ldots,L\), the distribution of the post-activation vector $\bm{x}^{(\ell)}$ is approximately distributed as
    \[
    \bm{x}^{(\ell)} \sim \mathcal{N}^{R}(\mu_\ell \bm{1}_{N_\ell}, \sigma_\ell^2 \bm{I}_{N_\ell}),
    \]
    where $\mu_\ell$ and $\sigma_\ell^2$ denote the mean and variance of the components of $\bm{x}^{(\ell)}$, respectively.
\end{proposition}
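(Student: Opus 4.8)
The plan is to proceed by induction on the layer index $\ell$, alternately invoking Theorem~\ref{thm:clt} to turn an affine map of a vector with i.i.d.\ coordinates into an approximately isotropic Gaussian pre-activation, and then the definition of the multivariate rectified Gaussian (together with Lemma~\ref{lem:ReLU Gaussian}) to pass a Gaussian pre-activation through the ReLU nonlinearity. Here I read $\mu_\ell$ and $\sigma_\ell^2$ as the parameters of the pre-activation $\bm{y}^{(\ell)}$, so that $\bm{x}^{(\ell)}=\operatorname{ReLU}(\bm{y}^{(\ell)})$ is by definition a rectified Gaussian $\mathcal{N}^{R}(\mu_\ell\bm{1},\sigma_\ell^2\bm{I})$; the induction simultaneously produces the closed-form post-activation moments $\widetilde\mu_\ell,\widetilde\sigma_\ell^2$ from Lemma~\ref{lem:ReLU Gaussian} and the recursion $\mu_{\ell}=\sqrt{N_{\ell-1}/N_\ell}\,\widetilde\mu_{\ell-1}$, $\sigma_{\ell}^2=\widetilde\sigma_{\ell-1}^2$, which yields the stated formula for $\alpha_\ell$.

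For the base case $\ell=1$, put $\widetilde\mu_0:=\mu_0$, $\widetilde\sigma_0:=\sigma_0$. Since $\bm{b}^{(1)}=\bm{0}$ we have $\bm{y}^{(1)}=\bm{W}^{(1)}\bm{x}^{(0)}$, where by hypothesis $\bm{x}^{(0)}$ has independent coordinates with common mean $\mu_0$, variance $\sigma_0^2$, and finite third absolute moment, and $\bm{W}^{(1)}$ is of the form required in Theorem~\ref{lem: big_O}. Theorem~\ref{thm:clt}, applied with $n=N_0$ large and $m=N_1$, then gives $\bm{y}^{(1)}\approx\mathcal{N}\bigl(\sqrt{N_0/N_1}\,\mu_0\,\bm{1},\sigma_0^2\bm{I}\bigr)$, i.e.\ $\mu_1=\sqrt{N_0/N_1}\,\mu_0$ and $\sigma_1^2=\sigma_0^2$ (the mean also following directly from Proposition~\ref{pro:clt}). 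Rectifying coordinatewise and using the definition of $\mathcal{N}^{R}$ gives $\bm{x}^{(1)}\approx\mathcal{N}^{R}(\mu_1\bm{1},\sigma_1^2\bm{I})$.

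For the inductive step, assume $\bm{x}^{(\ell-1)}\approx\mathcal{N}^{R}(\mu_{\ell-1}\bm{1},\sigma_{\ell-1}^2\bm{I})$. The key structural observation is that a multivariate rectified Gaussian with diagonal covariance has \emph{independent} coordinates, because ReLU is applied coordinatewise to independent Gaussian entries; hence the entries of $\bm{x}^{(\ell-1)}$ are i.i.d., with common mean $\widetilde\mu_{\ell-1}$ and variance $\widetilde\sigma_{\ell-1}^2$ given in closed form by Lemma~\ref{lem:ReLU Gaussian}, and with finite third absolute moment (it is dominated by that of the underlying Gaussian). Because the layer weight matrices are drawn independently, $\bm{W}^{(\ell)}$ is independent of $\bm{W}^{(1)},\dots,\bm{W}^{(\ell-1)}$ and therefore of $\bm{x}^{(\ell-1)}$, and $\bm{b}^{(\ell)}=\bm{0}$; so Theorem~\ref{thm:clt} applies again, now with $n=N_{\ell-1}$ and $m=N_\ell$, giving $\bm{y}^{(\ell)}=\bm{W}^{(\ell)}\bm{x}^{(\ell-1)}\approx\mathcal{N}(\mu_\ell\bm{1},\sigma_\ell^2\bm{I})$ with $\mu_\ell=\sqrt{N_{\ell-1}/N_\ell}\,\widetilde\mu_{\ell-1}$ and $\sigma_\ell^2=\widetilde\sigma_{\ell-1}^2$. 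One further coordinatewise rectification closes the induction: $\bm{x}^{(\ell)}\approx\mathcal{N}^{R}(\mu_\ell\bm{1},\sigma_\ell^2\bm{I})$.

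The main obstacle --- and the reason the conclusion is stated only as an approximation, in the mean-field spirit of \cite{pennington2017resurrecting,yang2017mean} --- is that the approximations compound with depth: at layer $\ell$ the input $\bm{x}^{(\ell-1)}$ is only \emph{approximately} a rectified Gaussian with only approximately independent coordinates, whereas Theorem~\ref{thm:clt} is formulated for inputs with exactly independent coordinates and exact moment hypotheses. A fully quantitative statement would require propagating the Berry--Esseen-type error bound underlying Theorem~\ref{thm:clt} through this perturbation and summing the per-layer errors over $\ell=1,\dots,L$ --- in particular controlling the finite-width gap, since those bounds vanish only as $N_{\ell-1}\to\infty$ with $N_\ell$ fixed. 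I would therefore adopt the customary mean-field convention of treating the Gaussian pre-activation approximation as exact at each layer, under which the induction above goes through verbatim.
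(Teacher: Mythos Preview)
Your base case and the large-width inductive step match the paper's argument; your observation that a rectified Gaussian with isotropic covariance has independent coordinates (so Theorem~\ref{thm:clt} applies again) is in fact cleaner than the paper's appeal to ``the multivariate central limit theorem'' in its case~(i). However, your inductive step relies \emph{solely} on the CLT, which needs $N_{\ell-1}$ large, and you close by simply adopting the mean-field convention without addressing what happens when the width has shrunk.

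The paper's proof contains an additional idea that you are missing. Since the architecture is non-increasing ($N_0\ge N_1\ge\cdots\ge N_L$), deeper layers may have small $N_{\ell-1}$, where the CLT approximation is poor. The paper handles this as a separate case~(ii): it uses the recursion $\alpha_{\ell-1}=\sqrt{N_{\ell-2}/N_{\ell-1}}\,\mu_{\ell-2}/\sigma_{\ell-2}$ and the discussion around Lemma~\ref{lem:ReLU Gaussian} to argue that $\alpha_{\ell-1}$ grows with depth, so that $\mathcal{N}^{R}(\mu_{\ell-1},\sigma_{\ell-1}^2)\xrightarrow{d}\mathcal{N}(\mu_{\ell-1},\sigma_{\ell-1}^2)$ as $\alpha_{\ell-1}\to\infty$; then $\bm{y}^{(\ell)}=\bm{W}^{(\ell)}\bm{x}^{(\ell-1)}$ is a linear map of an (approximately) Gaussian vector and is therefore Gaussian \emph{without} any CLT at all. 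This large-$\alpha$ mechanism is precisely what compensates for the loss of width, and it is the ingredient your proposal should add.
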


\begin{proof}
We proceed by induction on the layer index $\ell$.

\medskip
\noindent First, we prove for $l=1$.
Since \(N_0\) is large, Theorem~\ref{thm:clt} implies that the pre-activation vector \(\bm{y}^{(1)} = \bm{W}^{(1)}\bm{x}^{(0)}\) is approximately Gaussian:
\[
\bm{y}^{(1)} \sim \mathcal{N}\left(\sqrt{\frac{N_0}{N_1}}\,\mu_0\,\bm{1}_{N_1},\, \sigma_0^2\, \bm{I}_{N_1}\right).
\]
Applying the ReLU activation yields \(\bm{x}^{(1)} = \text{ReLU}(\bm{y}^{(1)})\), which is approximately distributed as a rectified Gaussian \(\mathcal{N}^R(\mu_1, \sigma_1^2)\). This establishes the base case.

\medskip
\noindent Now we prove for $\ell \ge 2$.
Assume the proposition holds for layer $\ell-1$.
We show that $\bm{y}^{(\ell)} = \bm{W}^{(\ell)}\bm{x}^{(\ell-1)}$ remains approximately Gaussian. There are two cases for the input vector $\bm{x}^{(\ell-1)}$.

\medskip
\noindent (i)
If \(N_{\ell-1}\) is large, then by the multivariate central limit theorem, the linear transformation \(\bm{y}^{(\ell)} = \bm{W}^{(\ell)}\bm{x}^{(\ell-1)}\) is approximately Gaussian, even though \(\bm{x}^{(\ell-1)}\) follows a rectified Gaussian distribution.

\medskip
\noindent (ii)
If \(N_{\ell-1}\) is not large, we exploit the fact that the rectification parameter $\alpha_{\ell-1} = \sqrt{\frac{N_{\ell-2}}{N_{\ell-1}}}\frac{\mu_{\ell-2}}{\sigma_{\ell-2}}$ grows with depth. By the inductive hypothesis, $\bm{x}^{(\ell-1)}$ follows a rectified Gaussian law. For a sufficiently large $\alpha_{\ell-1}$, the rectified Gaussian distribution of $x_j^{(\ell-1)}$ becomes almost indistinguishable from a true Gaussian distribution, as the mass at zero vanishes.
\[
\bm{x}^{(\ell-1)} \sim \mathcal{N}^R(\mu_{\ell-1}, \sigma_{\ell-1}^2)  \xrightarrow{d} \mathcal{N}(\mu_{\ell-1}, \sigma_{\ell-1}^2) \; \text{as} \; \alpha_{\ell-1} \to \infty.
\]
Therefore, $\bm{y}^{(\ell)} = \bm{W}^{(\ell)}\bm{x}^{(\ell-1)}$ is a linear transformation of an approximately Gaussian vector, which results in $\bm{y}^{(\ell)}$ also being approximately Gaussian.

\medskip
In both cases, the pre-activation $\bm{y}^{(\ell)}$ is approximately Gaussian. Thus, $\bm{x}^{(\ell)} = \text{ReLU}(\bm{y}^{(\ell)})$ is by definition distributed as $\mathcal{N}^R(\mu_\ell, \sigma_\ell^2)$. By the principle of induction, the proposition holds for all $\ell=1, \dots, L$.
\end{proof}

For a fixed network architecture, ensuring a large rectification parameter, $\alpha_{\ell} = \sqrt{\frac{N_{\ell-1}}{N_{\ell}}}\frac{\mu_{\ell-1}}{\sigma_{\ell-1}}$, involves either increasing the mean $\mu_{\ell-1}$ or decreasing the variance $\sigma_{\ell-1}$. 
However, reducing the variance is an undesirable strategy, as it would cause activations to become near-constant. 
Therefore, the viable approach is to control the mean.
This mean shift is effective because as $\alpha$ increases, $\Phi(\alpha)$ approaches 1 while the PDF $\phi(\alpha)$ approaches 0.
A key implication is that for inputs with a large $\alpha$, the statistical properties of the signal are almost perfectly preserved after passing through the ReLU activation. 
This mean shift mitigates the variance reduction by creating a large rectification parameter, and thus these stable properties can be preserved throughout all subsequent layers.

\medskip

Building on this result, we now examine how our proposed method mitigates common challenges in deep ReLU networks during initialization. ReLU is known to suffer from three major issues~\cite{he2015delving, hanin2018start}: (a) the \emph{dying ReLU} phenomenon, where neurons become inactive due to persistent negative pre-activations; (b) the \emph{variance reduction}, where ReLU truncation reduces the dynamic range of signals; and (c) the \emph{gradient vanishing} problem, especially in deeper networks with improper scaling.

\paragraph{Dying ReLU}
Under the proposed initialization, Proposition~\ref{prop:nomal_new} yields
$\bm{x}^{(\ell)} \sim \mathcal{N}^R(\mu_\ell \mathbf{1},\, \sigma_\ell^2 I)$ with nondecreasing $\mu_\ell$ and nonincreasing $\sigma_\ell$, so the rectification parameter $\alpha_\ell = \mu_\ell/\sigma_\ell$ increases with depth. By Remark~\ref{rmk1}, $\mathbb{P}(x_j^{(\ell)} > 0)=\Phi(\alpha_\ell)$ and hence the inactivity probability $1-\Phi(\alpha_\ell)$ decays rapidly,  mitigating the dying ReLU phenomenon in deep networks.
\smallskip

\paragraph{Variance reduction}
For $\alpha_{\ell-1}=\mu_{\ell-1}/\sigma_{\ell-1}$, Lemma~\ref{lem:ReLU Gaussian} gives
\begin{align*}
     \frac{\sigma_\ell^2}{\sigma_{\ell-1}^2}
 = &(1+\alpha_{\ell-1}^2)\,\Phi(\alpha_{\ell-1})
   + \alpha_{\ell-1}\,\phi(\alpha_{\ell-1}) \\
   &- \bigl(\phi(\alpha_{\ell-1})+\alpha_{\ell-1}\,\Phi(\alpha_{\ell-1})\bigr)^2 .
\end{align*}

Since $\Phi(a)\to 1$ and $\phi(a)\to 0$ as $a\to\infty$, it follows that $\frac{\sigma_\ell^2}{\sigma_{\ell-1}^2}\to 1$ as $\alpha_{\ell-1}\to\infty$.
Hence, for any $\epsilon\in(0,1)$ there exists $\delta>0$ such that if $\alpha_{\ell-1}\ge \delta$ then $(1-\epsilon)\,\sigma_{\ell-1}^2 \;\le\; \sigma_\ell^2 \;\le\; \sigma_{\ell-1}^2$.
In particular, once $\alpha_{\ell-1}\ge \delta$, the layerwise variance is maintained within a one-sided $(1-\epsilon)$ lower bound, preventing cumulative variance decay and preserving activation dynamic range.
\smallskip

\paragraph{Vanishing gradients}
Let $\bm{D}^{(\ell)}$ be a diagonal matrix whose $i$th diagonal element is $1$ if $x^{(\ell)}_{i}>0$ and $0$ otherwise.
Backpropagation satisfies
\[
\bm{\delta}^{(\ell)}=(\bm{W}^{(\ell+1)})^{T} \bm{D}^{(\ell+1)}\,\bm{\delta}^{(\ell+1)},
\]
where $\bm \delta^{(\ell)}$ is the gradient of the loss with respect to the pre-activation at layer $\ell$.
Under the proposed initialization, $\alpha_\ell=\mu_\ell/\sigma_\ell$ increases with depth, hence
$\mathbb P(x^{(\ell)}_j>0)=\Phi(\alpha_\ell)\to 1$ and $\mathbb E[\bm{D}^{(\ell)}]=\Phi(\alpha_\ell) \bm{I}\to \bm{I}$.
Together with the semi-orthogonality of $\bm{W}^{(\ell)}$ and the variance reduction above, this limits layerwise Jacobian contraction and prevents exponential gradient decay in deep ReLU networks.
\bigskip

\section{Experimental Results}\label{experiment}

This section empirically demonstrates the effectiveness of the proposed initialization on deep ReLU networks. 
The proposed initialization is evaluated against several widely adopted initialization methods—Lee, Orthogonal, Xavier, He, and Random—on the MNIST and Fashion-MNIST datasets.
All experiments employ an FFNN with ReLU activations in all hidden layers, implemented in PyTorch. The networks are trained using cross-entropy loss and optimized with the Adam algorithm. The learning rate is scaled with depth as $\text{lr} = 0.001 / \sqrt{\text{depth}}$, following established practice~\cite{yang2024tensor}, except for Lee et al.~\cite{lee2024improved}, where a fixed learning rate of 0.001 is used to match prior work. All models are trained with a batch size of 256 for 100 epochs, and classification accuracy on the test split is reported as the evaluation metric.

\subsection{Effect of Network Depth on Performance}

To investigate how well the proposed initialization supports deep ReLU networks, experiments are conducted by varying the network depth. Specifically, a fully connected FFNN with architecture:
\[
784 \to \underbrace{64 \to \cdots \to 64}_{D \text{ hidden layers}} \to 10
\]
are trained on the full MNIST and the full Fashion-MNIST datasets, respectively.
The output layer uses softmax for classification.
TABLE~\ref{tab:full}~(a) and (b) present the classification accuracy obtained on MNIST and Fashion-MNIST datasets across varying network depths and initialization methods. 

First, for \( D=10 \sim 30 \), all initialization schemes perform comparably well, achieving high accuracy on both datasets. 
However, as depth increases, significant differences emerge. 
Notably, the proposed initialization consistently maintains stable and high accuracy even at depths up to \( D=100 \), achieving \( 96.98\% \) on MNIST and \( 87.70\% \) on Fashion-MNIST. 
In contrast, commonly used initializations such as Xavier, Orthogonal, and Random exhibit severe performance degradation beyond \( D=50 \), with accuracy collapsing to near random levels. 
While Lee initialization and He initialization demonstrate better robustness due to their ReLU-aware design, they do not entirely prevent performance degradation in very deep networks. 
In particular, the Lee initialization exhibits noticeable fluctuations as the depth increases, indicating unstable training dynamics at large depths. 
In comparison, He initialization exhibits a more gradual decline in accuracy as depth increases. 

This stability is a result of the fact that the proposed initialization method maintains the signal variance stably even in deep networks and effectively mitigates the dying ReLU phenomenon, as theoretically proven in the previous section.
On the other hand, other initialization methods are analyzed to fail to prevent variance collapse as depth increases, leading to learning failure.

\begin{table*}[ht]
\centering
\small
\begin{tabular}{lccccccc}
\toprule
{Dataset ($k$)} & {Depth} & {Proposed} & {Lee} & {He} & {Xavier} & {Orthogonal} & {Random} \\
\midrule
\multirow{3}{*}{MNIST (1)} 
& 10  & \textbf{42.48 $\pm$ 3.93} & 37.75 $\pm$ 4.56 & 27.42 $\pm$ 2.93 & 28.09 $\pm$ 4.50 & 32.54 $\pm$ 3.58 & 23.79 $\pm$ 4.87 \\
& 50  & \textbf{39.57 $\pm$ 3.80} & 36.74 $\pm$ 4.47 & 13.80 $\pm$ 2.62 & 10.78 $\pm$ 1.41 & 10.00 $\pm$ 1.45 & 10.08 $\pm$ 0.63 \\
& 100 & \textbf{37.23 $\pm$ 4.92} & 28.23 $\pm$ 3.97 & 12.85 $\pm$ 2.92 & 9.93 $\pm$ 0.44 & 9.92 $\pm$ 0.40 & 9.98 $\pm$ 0.69 \\
\cmidrule(l){1-8}
\multirow{3}{*}{MNIST (2)} 
& 10  & \textbf{52.86 $\pm$ 3.89} & 44.30 $\pm$ 4.15 & 34.35 $\pm$ 5.01 & 35.85 $\pm$ 5.66 & 42.22 $\pm$ 4.20 & 28.67 $\pm$ 5.05 \\
& 50  & \textbf{48.64 $\pm$ 3.89} & 43.85 $\pm$ 4.35 & 15.52 $\pm$ 2.90 & 10.64 $\pm$ 1.52 & 9.91 $\pm$ 0.53 & 9.93 $\pm$ 0.41 \\
& 100 & \textbf{45.72 $\pm$ 4.20} & 34.05 $\pm$ 4.20 & 13.57 $\pm$ 2.47 & 9.86 $\pm$ 0.67 & 10.17 $\pm$ 0.59 & 10.10 $\pm$ 0.81 \\
\cmidrule(l){1-8}
\multirow{3}{*}{MNIST (4)} 
& 10  & \textbf{63.79 $\pm$ 2.73} & 53.00 $\pm$ 3.99 & 43.67 $\pm$ 4.90 & 44.73 $\pm$ 4.84 & 52.28 $\pm$ 3.52 & 35.73 $\pm$ 5.96 \\
& 50  & \textbf{58.86 $\pm$ 3.73} & 54.73 $\pm$ 3.38 & 16.41 $\pm$ 2.95 & 11.62 $\pm$ 2.11 & 9.96 $\pm$ 0.47 & 9.98 $\pm$ 0.65 \\
& 100 & \textbf{56.70 $\pm$ 3.11} & 44.81 $\pm$ 3.85 & 13.90 $\pm$ 2.17 & 10.08 $\pm$ 0.45 & 9.76 $\pm$ 0.49 & 10.15 $\pm$ 0.70 \\
\cmidrule(l){1-8}
\multirow{3}{*}{MNIST (8)} 
& 10  & \textbf{71.73 $\pm$ 2.21} & 62.36 $\pm$ 2.92 & 53.92 $\pm$ 4.76 & 52.90 $\pm$ 3.76 & 61.97 $\pm$ 3.80 & 44.67 $\pm$ 6.13 \\
& 50  & \textbf{67.67 $\pm$ 2.58} & 63.33 $\pm$ 2.77 & 17.96 $\pm$ 3.45 & 10.94 $\pm$ 2.46 & 9.99 $\pm$ 0.67 & 9.84 $\pm$ 0.50 \\
& 100 & \textbf{66.35 $\pm$ 2.73} & 52.45 $\pm$ 4.01 & 14.45 $\pm$ 2.44 & 9.89 $\pm$ 0.60 & 9.93 $\pm$ 0.44 & 10.05 $\pm$ 0.53 \\
\midrule
\multirow{3}{*}{Fashion-MNIST (1)} 
& 10  & \textbf{46.78 $\pm$ 4.27} & 43.48 $\pm$ 4.94 & 37.62 $\pm$ 5.08 & 36.80 $\pm$ 5.11 & 41.50 $\pm$ 6.28 & 31.55 $\pm$ 6.74 \\
& 50  & \textbf{46.19 $\pm$ 4.08} & 37.19 $\pm$ 5.42 & 20.29 $\pm$ 3.97 & 11.81 $\pm$ 3.10 & 10.14 $\pm$ 1.67 & 10.00 $\pm$ 0.00 \\
& 100 & \textbf{45.07 $\pm$ 4.28} & 28.88 $\pm$ 8.82 & 14.81 $\pm$ 4.18 & 10.00 $\pm$ 0.00 & 10.00 $\pm$ 0.00 & 10.00 $\pm$ 0.00 \\
\cmidrule(l){1-8}
\multirow{3}{*}{Fashion-MNIST (2)} 
& 10  & \textbf{55.99 $\pm$ 4.15} & 52.44 $\pm$ 4.05 & 46.37 $\pm$ 4.62 & 45.29 $\pm$ 5.25 & 49.68 $\pm$ 5.68 & 42.05 $\pm$ 5.27 \\
& 50  & \textbf{54.94 $\pm$ 4.15} & 47.05 $\pm$ 5.48 & 23.61 $\pm$ 5.48 & 12.36 $\pm$ 2.96 & 10.24 $\pm$ 0.86 & 10.00 $\pm$ 0.00 \\
& 100 & \textbf{55.21 $\pm$ 4.19} & 40.40 $\pm$ 8.90 & 16.20 $\pm$ 3.62 & 10.00 $\pm$ 0.00 & 10.00 $\pm$ 0.00 & 10.00 $\pm$ 0.00 \\
\cmidrule(l){1-8}
\multirow{3}{*}{Fashion-MNIST (4)} 
& 10  & \textbf{62.32 $\pm$ 3.09} & 58.53 $\pm$ 3.17 & 53.33 $\pm$ 4.18 & 52.62 $\pm$ 4.60 & 56.22 $\pm$ 3.34 & 49.17 $\pm$ 3.58 \\
& 50  & \textbf{60.85 $\pm$ 2.97} & 57.05 $\pm$ 3.68 & 25.12 $\pm$ 4.78 & 13.12 $\pm$ 3.35 & 10.00 $\pm$ 0.00 & 10.00 $\pm$ 0.00 \\
& 100 & \textbf{60.40 $\pm$ 3.11} & 51.14 $\pm$ 8.45 & 17.15 $\pm$ 4.17 & 10.00 $\pm$ 0.00 & 10.00 $\pm$ 0.00 & 10.00 $\pm$ 0.00 \\
\cmidrule(l){1-8}
\multirow{3}{*}{Fashion-MNIST (8)} 
& 10  & \textbf{67.65 $\pm$ 1.66} & 64.49 $\pm$ 2.32 & 60.68 $\pm$ 2.91 & 58.83 $\pm$ 3.57 & 62.65 $\pm$ 2.27 & 54.36 $\pm$ 5.38 \\
& 50  & \textbf{66.75 $\pm$ 1.96} & 63.65 $\pm$ 2.97 & 30.00 $\pm$ 6.98 & 13.69 $\pm$ 3.98 & 10.33 $\pm$ 1.45 & 10.00 $\pm$ 0.00 \\
& 100 & \textbf{66.05 $\pm$ 2.25} & 58.66 $\pm$ 9.23 & 20.98 $\pm$ 4.25 & 10.00 $\pm$ 0.00 & 10.00 $\pm$ 0.00 & 10.00 $\pm$ 0.00 \\
\bottomrule
\end{tabular}
\caption{Few-shot classification accuracy (\%) for various initialization methods and network depths. The number following each dataset name indicates the number of shots ($k$). Values are reported as mean $\pm$ standard deviation. The highest mean accuracy in each row is shown in bold.}
\label{tab:few_shot}
\end{table*}
\begin{figure*}[h]
  \centering
  \begin{subfigure}[b]{0.45\textwidth}
    \includegraphics[width=\linewidth]{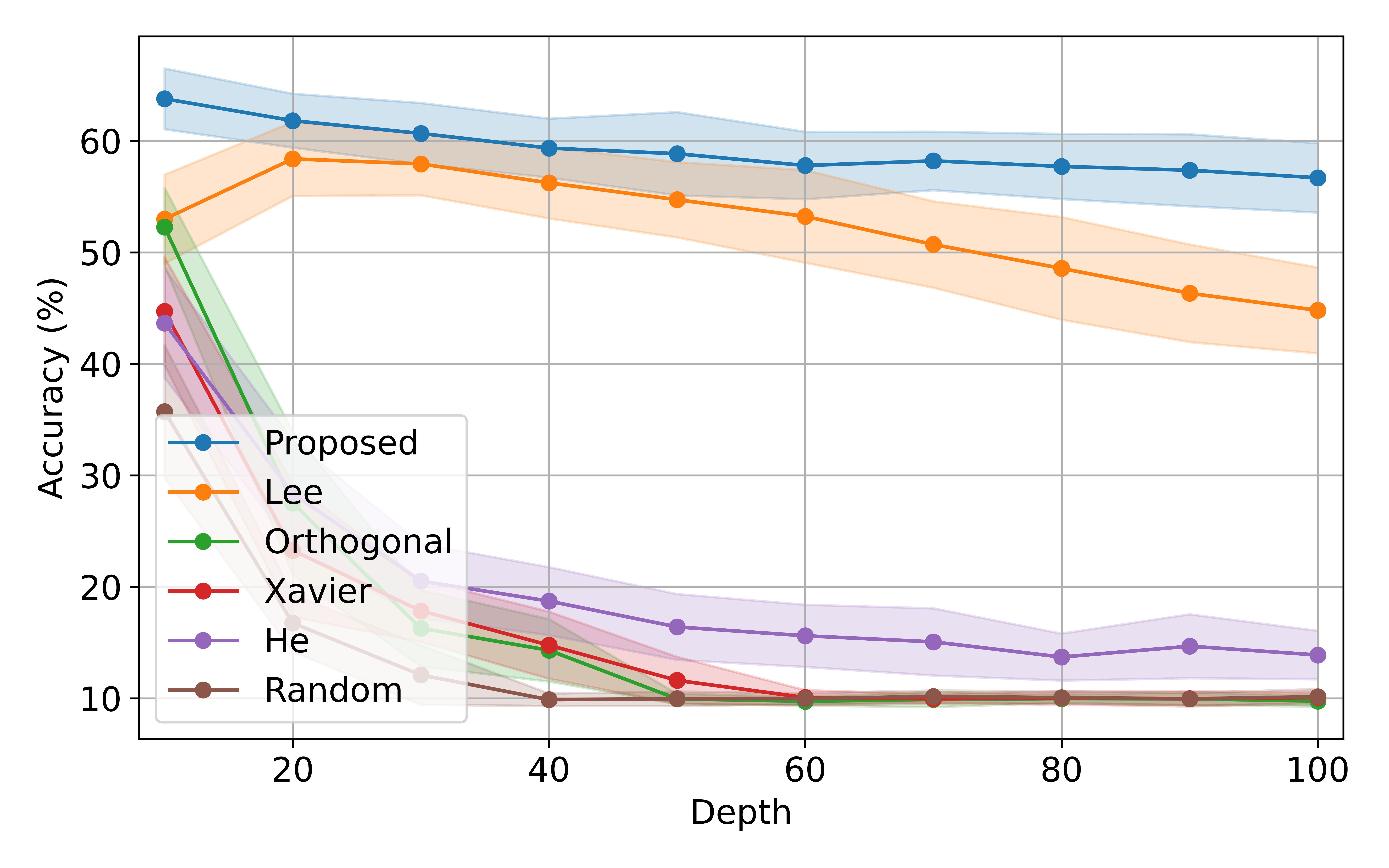}
    \caption{MNIST 4-shot}
    \label{fig:mnist-4s}
  \end{subfigure}
  \begin{subfigure}[b]{0.45\textwidth}
    \includegraphics[width=\linewidth]{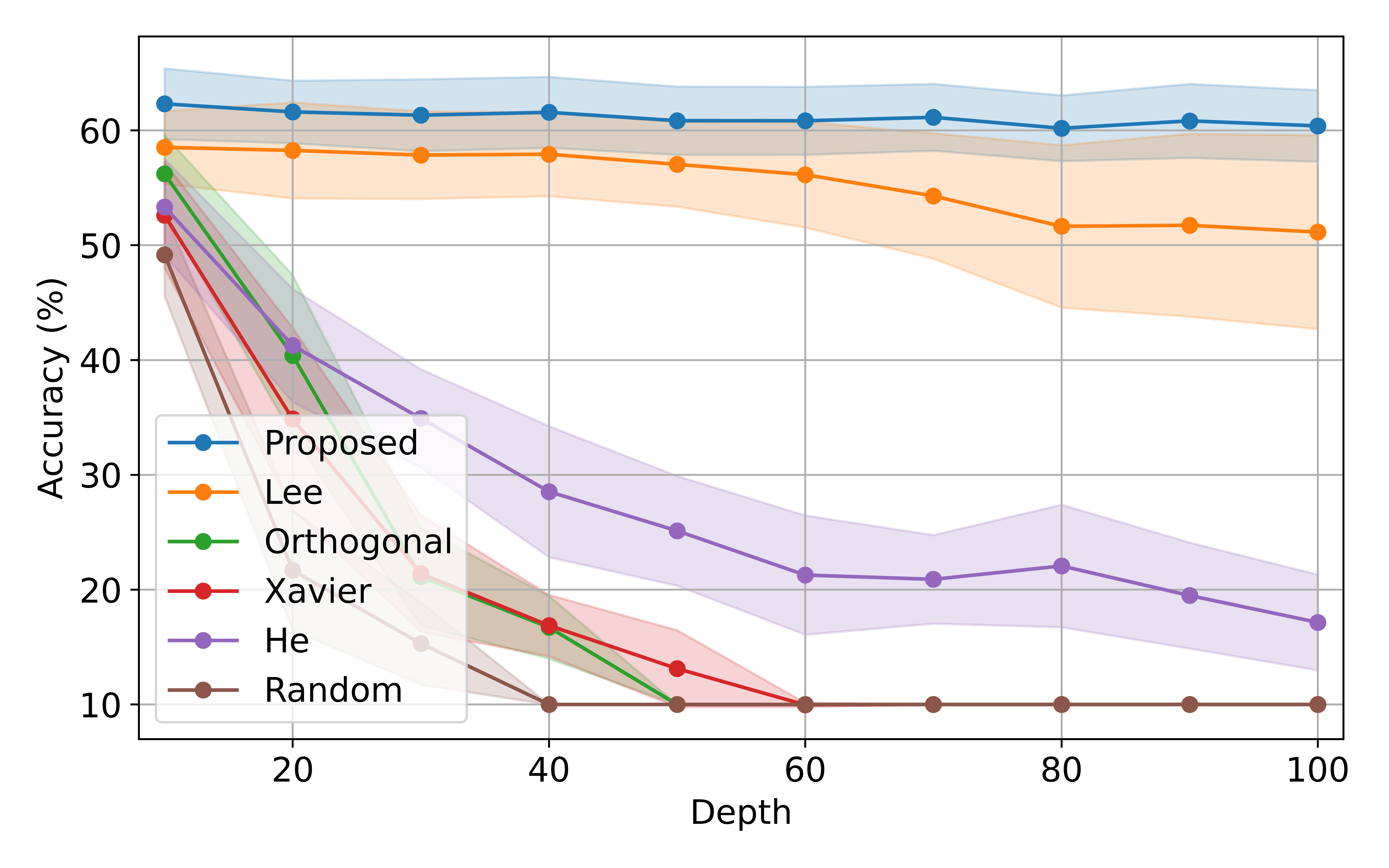}
    \caption{Fashion-MNIST 4-shot}
    \label{fig:fmnist-4s}
  \end{subfigure}
  \caption{4-shot classification accuracy for various initialization methods and network depths.}
  \label{fig:deep‐shot‐grid}
\end{figure*}

\subsection{Effectiveness across ReLU Family Activation}
To test the robustness of our initialization, we examine its interaction with other popular activation functions from the ReLU family.
To examine the interaction between activation variants and initialization schemes, we replaced the ReLU activation in each hidden layer with LeakyReLU~\cite{maas2013rectifier}, PReLU~\cite{he2015delving}, ELU~\cite{clevert2015fast}, and SELU~\cite{klambauer2017self}, respectively.
The network architecture and training protocol were identical to those in the previous experiments, and the same set of initialization methods was evaluated. We tested depths of $D=50$ and $D=100$ to highlight failures due to poor initialization, as confirmed by the results in TABLE~\ref{tab:full}.

\begin{table*}[t]
\centering
\small
\renewcommand{\arraystretch}{1.15}
\begin{tabular}{l r r r l l}
\toprule
Dataset & Samples & Features $N_0$ & Output $N_D$ & Task & Architecture \\
\midrule
Adult                    & 48{,}842 & 14 & 2 & Classification & \(\,14 \rightarrow 8 \rightarrow \cdots \rightarrow 8 \rightarrow 2\,\) \\
Cancer     & 569      & 30 & 2 & Classification & \(\,30 \rightarrow 16 \rightarrow \cdots \rightarrow 16 \rightarrow 2\,\) \\
Pima          & 768      & 8  & 2 & Classification & \(\,8 \rightarrow 4 \rightarrow \cdots \rightarrow 4 \rightarrow 2\,\) \\
Ionosphere               & 351      & 34 & 2 & Classification & \(\,34 \rightarrow 16 \rightarrow \cdots \rightarrow 16 \rightarrow 2\,\) \\
Wine                     & 178      & 13 & 3 & Classification & \(\,13 \rightarrow 8  \rightarrow \cdots \rightarrow 8 \rightarrow 3\,\) \\
Diabetes       & 442      & 10 & 1 & Regression     & \(\,10\rightarrow 8 \rightarrow \cdots \rightarrow 8 \rightarrow 1\,\) \\
\bottomrule
\end{tabular}
\caption{Description of tabular datasets.}
\label{tab:tabular}
\end{table*}

\begin{table*}[ht!]
\small
\centering
\begin{tabular}{l l *{12}{S[output-decimal-marker=., group-digits=false, round-mode=places, round-precision=2, detect-weight]}}
\toprule
{Dataset} & {$\alpha_0$} 
& \multicolumn{2}{c}{Proposed}
& \multicolumn{2}{c}{Lee}
& \multicolumn{2}{c}{He}
& \multicolumn{2}{c}{Xavier}
& \multicolumn{2}{c}{Orthogonal}
& \multicolumn{2}{c}{Random} \\
& & {50} & {100} & {50} & {100} & {50} & {100} & {50} & {100} & {50} & {100} & {50} & {100} \\
\midrule
\multirow[c]{4}{*}{Adult}
 & -2 & 76.07 & 76.07 & 85.33 & \textbf{76.64} & \textbf{85.58} & 76.07 & 76.07 & 76.07 & 76.07 & 76.07 & 76.07 & 76.07 \\
& 0 & \underline{\textbf{85.91}} & 85.39 & 85.33 & \textbf{85.64} & 76.07 & 85.52 & 76.07 & 76.07 & 76.07 & 76.07 & 76.07 & 76.07 \\
 & 2 & \textbf{85.79} & \textbf{85.86} & 85.59 & 85.64 & 76.07 & 76.07 & 76.07 & 76.07 & 76.07 & 76.07 & 76.07 & 76.07 \\
 & 50 & 76.07 & 76.07 & 76.07 & 76.07 & 76.07 & 76.07 & 76.07 & 76.07 & 76.07 & 76.07 & 76.07 & 76.07 \\
\midrule
\multirow[c]{4}{*}{Cancer} & -2 & 83.33 & 86.84 & \textbf{96.49} & \textbf{93.86} & 89.47 & 88.60 & 63.16 & 63.16 & \textbf{96.49} & 63.16 & 63.16 & 63.16 \\
 & 0 & 94.74 & 92.11 & 94.74 & 93.86 & 89.47 & 81.58 & \underline{\textbf{97.37}} & \textbf{95.61} & 96.49 & 63.16 & 63.16 & 63.16 \\
 & 2 & \underline{\textbf{97.37}} & \underline{\textbf{97.37}} & 95.61 & 94.74 & 90.35 & 83.33 & 93.86 & 63.16 & 96.49 & 63.16 & 63.16 & 63.16 \\
 & 50 & 71.05 & 42.11 & \textbf{86.84} & \textbf{78.07} & 63.16 & 63.16 & 63.16 & 63.16 & 63.16 & 63.16 & 63.16 & 63.16 \\
\midrule
\multirow[c]{4}{*}{Pima} & -2 & 64.29 & \textbf{65.58} & \textbf{68.18} & \textbf{65.58} & 64.94 & 64.94 & 64.94 & 64.94 & 64.94 & 64.94 & 64.94 & 64.94 \\
 & 0 & \underline{\textbf{74.03}} & \textbf{72.73} & 73.38 & 71.43 & 64.94 & 64.94 & 64.94 & 64.94 & 64.94 & 64.94 & 64.94 & 64.94 \\
 & 2 & 71.43 & \textbf{73.38} & \textbf{72.73} & 72.73 & 64.94 & 64.94 & 64.94 & 64.94 & 64.94 & 64.94 & 64.94 & 64.94 \\
 & 50 & 56.49 & 56.49 & \textbf{64.94} & \textbf{64.94} & \textbf{64.94} & \textbf{64.94} & \textbf{64.94} & \textbf{64.94} & \textbf{64.94} & \textbf{64.94} & \textbf{64.94} & \textbf{64.94} \\
\midrule
\multirow[c]{4}{*}{Ionosphere} & -2 & 70.42 & 69.01 & 88.73 & \textbf{91.55} & 84.51 & 87.32 & \textbf{92.96} & 64.79 & 87.32 & 64.79 & 64.79 & 64.79 \\
 & 0 & \underline{\textbf{94.37}} & 87.32 & \underline{\textbf{94.37}} & \textbf{91.55} & 88.73 & 78.87 & 91.55 & 64.79 & 92.96 & 64.79 & 64.79 & 64.79 \\
 & 2 & 91.55 & 90.14 & \textbf{92.96} & \underline{\textbf{94.37}} & 78.87 & 70.42 & 87.32 & 64.79 & 91.55 & 64.79 & 64.79 & 64.79 \\
 & 50 & 64.79 & \textbf{80.28} & \textbf{92.96} & 73.24 & 64.79 & 64.79 & 64.79 & 64.79 & 64.79 & 64.79 & 64.79 & 64.79 \\
\midrule
\multirow[c]{4}{*}{Wine} & -2 & \textbf{80.56} & 47.22 & \textbf{80.56} & \textbf{77.78} & 38.89 & 38.89 & 38.89 & 38.89 & 38.89 & 38.89 & 38.89 & 38.89 \\
 & 0 & \textbf{86.11} & \textbf{94.44} & 83.33 & 88.89 & 38.89 & 38.89 & 38.89 & 38.89 & 38.89 & 38.89 & 38.89 & 38.89 \\
 & 2 & 91.67 & 88.89 & \textbf {94.44} & \underline{\textbf{97.22}} & 38.89 & 38.89 & 38.89 & 38.89 & 38.89 & 38.89 & 38.89 & 38.89 \\
 & 50 & 44.44 & 58.33 & \textbf{83.33} & \textbf{83.33} & 38.89 & 38.89 & 38.89 & 38.89 & 38.89 & 38.89 & 38.89 & 38.89 \\
\midrule
 \multirow[c]{4}{*}{Diabetes} & -2  & 70.53 & \textbf{71.46} & \!\!161.88 & \!\!161.88 & \textbf{62.63} & \!\!162.78 & 73.33 & 73.87 & 70.68 & 72.84 & 72.79 & \!\!162.83 \\
& 0   & 61.59 & \textbf{59.62} & \!\!161.88 & \!\!161.88 & 66.42 & 72.84 & \textbf{56.46} & \!\!162.83 & 73.32 & \!\!160.03 & \!\!162.41 & \!\!162.83 \\
& 2   & \underline{\textbf{53.75}} & \textbf{54.09} & \!\!161.88 & \!\!161.88 & 64.64 & 73.18 & 63.65 & 72.97 & 60.77 & \!\!162.83 & \!\!162.79 & \!\!162.83 \\
& 50  & \textbf{72.15} & \textbf{71.09} & \!\!161.90 & \!\!161.90 & 73.08 & 71.93 & 73.46 & \!\!143.81 & 73.83 & 72.98 & \!\!156.69 & \!\!162.83 \\
\bottomrule
\end{tabular}
\caption{Classification accuracy (\%) for tabular datasets and RMSE for regression. 
Numbers shown under each initializer indicate network depth.
Boldface denotes the best score among initializations for each \((\text{dataset},\alpha_0)\), 
and underlined boldface marks the best \(\alpha\) within the same initializer.}
\label{tab:init_tabular}
\end{table*}

The proposed initialization achieves high accuracy on MNIST and Fashion-MNIST for LeakyReLU, PReLU, ELU, SELU, and standard ReLU, with a drop of less than \(1\%\) when transitioning from \(D=50\) to \(D=100\). 
This consistent performance across activation variants demonstrates the robustness and low variance of the proposed method. 
In contrast, Lee initialization performs competitively at \(D=50\), often matching or slightly exceeding the proposed method. 
However, its stability degrades significantly at \(D=100\), where it fails to train under several activation functions, such as LeakyReLU and PReLU, on MNIST and Fashion-MNIST. 
He initialization, by comparison, consistently supports learning across depths and activation types but exhibits lower overall accuracy than the proposed method, especially on MNIST at depth 100. 
Orthogonal, Xavier, and Random initializations already exhibited poor performance under standard ReLU and continue to fail at deeper depths across most activation functions. 
Nonetheless, a limited recovery is observed when combined with ELU or SELU, particularly on the Fashion-MNIST dataset.

\subsection{Few-Shot Learning}
To evaluate the proposed initialization method in data-scarce scenarios, we conducted a series of few-shot learning experiments. 
These scenarios are particularly challenging because the model must learn to generalize from a tiny number of training examples per class, making the initial state of the network weights a critical factor for successful training and convergence.

The experiments were conducted with the number of training data $k = 1, 2, 4, 8$ using the same network architecture and comparison methods as in the previous section.
Each configuration was repeated 50 times to ensure statistical significance.

As shown in TABLE~\ref{tab:few_shot}, in deep networks (50 or 100 layers), most conventional initialization methods, such as Xavier and He, suffer from a sharp performance drop or fail to train. 
In contrast, the proposed method exhibits a significantly smaller drop in accuracy with increasing depth, demonstrating superior performance compared to any other method.

Fig.~\ref{fig:deep‐shot‐grid} provides a representation of these trends for the $4$-shot learning.
The shaded area represents the standard deviation of the accuracy of each method.
The plot of the proposed method shows a remarkably stable and slow decrease in accuracy as the network gets deeper, demonstrating its robustness.
In contrast, the curves for Xavier, He, Orthogonal, and Random initialization, respectively, exhibit a sharp decline in performance, with accuracy collapsing after $30$-$40$ layers, and fail to learn in deeper architectures. 
Lee initialization also maintains a respectable performance but shows a steeper decline and a larger standard deviation compared to the proposed method.

\subsection{Tabular Data}

To assess generalization beyond image domains, several tabular datasets were used, 
covering both classification (Adult~\cite{adult}, Cancer~\cite{cancer}, Pima~\cite{pima}, Ionosphere~\cite{ionosphere}, Wine~\cite{wine})  
and regression (Diabetes~\cite{diabetes}) tasks (TABLE~\ref{tab:tabular}). As tabular inputs require explicit scaling, all features were standardized to unit variance (\(\sigma^2=1\)), and the mean \(\mu\) was shifted to control \( \alpha = \mu/\sigma^2\).
Experiments used \(\alpha \in \{-2,\,0,\,2,\,50\}\). 
Models were fully connected FFNN(architecture details in TABLE~\ref{tab:tabular}). Training settings followed those of the prior experiments, and evaluation used accuracy for classification and root mean squared error (RMSE) for regression.

The tabular results are summarized in TABLE~\ref{tab:init_tabular}.
Across both classification and regression datasets, the proposed initialization consistently shows strong performance among the compared methods (TABLE~\ref{tab:init_tabular}). Performance is sensitive to the rectification parameter: negative shifts (\(\alpha_0= -2\)) and excessively large shifts (\(\alpha_0 =  50\)) degrade accuracy and increase RMSE, whereas setting (\(\alpha_0 = 2\)) yields the best results. When \(\alpha_0 < 0\), many first-layer pre-activations fall below zero, so ReLU turns those units off and gradients weaken. When 
\(\alpha_0\) is very large, pre-activations stay strongly positive, so ReLU behaves nearly linearly, reducing the model’s nonlinearity. Hence, selecting a moderate \(\alpha_0\) (here, \(\alpha_0=2\)) is critical for stable and accurate training on a deep ReLU network.

\section{Conclusion}\label{sec:conclusion}
This paper provided a weight initialization for deep ReLU networks by formulating and solving an optimization problem on the Stiefel manifold. Unlike conventional approaches such as Xavier, He, and orthogonal initialization, the proposed method simultaneously preserves semi-orthogonality. It maximizes alignment with the all-ones vector, thereby directly regulating pre-activation statistics. We derived closed-form characterizations of the optimal solution set, developed efficient sampling algorithms, and established theoretical guarantees on variance preservation, mean calibration, and asymptotic distributional behavior.

Theoretical analysis demonstrated that the proposed initialization systematically alleviates critical early-stage training issues—most notably the dying ReLU phenomenon, variance decay, and gradient vanishing—by promoting stable signal and gradient propagation across layers. Empirical validation confirmed these advantages on MNIST, Fashion-MNIST, tabular classification/regression tasks, and few-shot learning scenarios. The method consistently outperformed existing initialization schemes across depths of up to 100 layers and showed robustness across the ReLU family of activation functions, thereby establishing its broad applicability.

Taken together, these results highlight the importance of geometrically informed initialization in the design of deep networks. By exploiting the structure of the Stiefel manifold, our work demonstrates that initialization can play a decisive role in stabilizing training, extending depth scalability, and improving generalization performance. Future work may extend this framework to convolutional architectures, attention-based models, and adaptive optimization strategies on manifolds, further bridging rigorous mathematical design with practical advances in deep learning.

\bibliographystyle{IEEEtran}
\bibliography{reference}

\end{document}